\newtheorem{theorem}{Theorem}
\newtheorem{lemma}[theorem]{Lemma}
\newtheorem{definition}[theorem]{Definition}
\newtheorem{proposition}[theorem]{Proposition}
\newtheorem*{problem}{Question}
\def\1{\bm{1}}
\def\vh{{\bm{h}}}
\def\vx{{\bm{x}}}
\def\mS{{\bm{S}}}
\def\mW{{\bm{W}}}
\def\mX{{\bm{X}}}
\def\mY{{\bm{Y}}}
\def\mZ{{\bm{Z}}}
\DeclareMathAlphabet{\mathsfit}{\encodingdefault}{\sfdefault}{m}{sl}
\SetMathAlphabet{\mathsfit}{bold}{\encodingdefault}{\sfdefault}{bx}{n}
\newcommand{\tens}[1]{\bm{\mathsfit{#1}}}
\def\tH{{\tens{H}}}
\def\tW{{\tens{W}}}
\def\tX{{\tens{X}}}
\def\tZ{{\tens{Z}}}
\newcommand{\tXgglt}{\tX_g^{(\ell-1,t)}}
\newcommand{\tXflt}{\tX_f^{(\ell,t)}}
\NewDocumentCommand{\mXnlt}{ O{\ell} }{\ensuremath{\mX_n^{\left(#1,t\right)}}
}
\NewDocumentCommand{\mXnglt}{o}{%
  \ensuremath{%
    \mX_{n,g}^{\left(%
      \IfValueTF{#1}{#1}{\ell}, t%
    \right)}%
  }%
}
\NewDocumentCommand{\mYnlt}{ O{\ell} }{\ensuremath{\mY_n^{\left(#1,t\right)}}
}
\newcommand{\tXlt}{\tX^{(\ell,t)}}
\newcommand{\GNN}{\revised{\mathrm{GNN}}}
\newcommand{\GraphonNN}{\revised{\mathrm{WNN}}}
\newcommand{\norm}[1]{\left\lVert #1 \right\rVert}
\newcommand{\parens}[1]{\left( #1 \right)}
\newcommand{\revised}[1]{{\color{black}#1}}
\newcommand{\newrevised}[1]{{\color{black}#1}}
\NewDocumentCommand{\hltfgk}{ O{t} }{%
\ensuremath{\vh_{fgk}^{\left(\ell,#1\right)}}%
}
\newcommand{\supT}{\sup_{t\in[0,T]}}
\newcommand{\teachX}[1]{{\bm{X}^*_{#1}}}
\newcommand{\gndeX}[1]{{\bm{X}_{#1}}}
\title{\bf On the Convergence and Size Transferability of Continuous-depth Graph Neural Networks}
\author{Mingsong Yan\thanks{Department of Mathematics, University of California, Santa Barbara, CA. (\textit{mingsongyan@ucsb.edu})}, \quad
Charles Kulick\thanks{Department of Mathematics, University of California, Santa Barbara, CA. (\textit{charleskulick@ucsb.edu})}, \quad and
Sui Tang\thanks{Department of Mathematics, University of California, Santa Barbara, CA. (\textit{suitang@ucsb.edu})}}
\date{}
\begin{document}

\maketitle

\begin{abstract}
Continuous-depth graph neural networks, also known as Graph Neural Differential Equations (GNDEs), combine the structural inductive bias of Graph Neural Networks (GNNs) with the continuous-depth architecture of Neural ODEs, offering a scalable and principled framework for modeling dynamics on graphs. In this paper, we present a rigorous convergence analysis of GNDEs with time-varying parameters in the infinite-node limit, providing theoretical insights into their \textit{size transferability}. To this end, we introduce Graphon Neural Differential Equations (Graphon-NDEs) as the infinite-node limit of GNDEs and establish their well-posedness. Leveraging tools from graphon theory and dynamical systems, we prove the \textit{trajectory-wise} convergence of GNDE solutions to Graphon-NDE solutions. Moreover, we derive explicit convergence rates under two deterministic graph sampling regimes: (1) weighted graphs sampled from smooth graphons, and (2) unweighted graphs sampled from $\{0,1\}$-valued (discontinuous) graphons. We further establish size transferability bounds, providing theoretical justification for the practical strategy of transferring GNDE models trained on moderate-sized graphs to larger, structurally similar graphs without retraining. Numerical experiments using synthetic and real data support our theoretical findings.
\end{abstract}

\section{Introduction}

Graph Neural Networks (GNNs) \citep{scarselli2008graph} have achieved remarkable success in addressing graph-based machine learning tasks.  A practical attraction is their potential for \textit{size transferability}: a model trained on smaller graphs can often be deployed on larger, structurally similar graphs while maintaining competitive performance \citep{ruiz2020graphon,levie2021transferability}, thanks to local message passing and shared weights. This property is especially valuable as it helps avoid the substantial computational cost of retraining on large-scale graphs.

Size transferability does not arise unconditionally \citep{jegelka2022theory,cai2022convergence}. Recent theoretical advances justify size transferability through convergence analyses with conditions on graph sequences, message-passing operators, and activation functions. For example, graph sequences are assumed to be drawn from a shared generative model (i.e., graphons), which is common in complex network theory for modeling structurally similar graphs \citep{lovasz2012large}. Under appropriate assumptions, as graph size increases, the outputs of GNNs converge to a well-defined continuous limit, and specific convergence rates can be established. Notably, the convergence rates allow us to explicitly quantify the size transferability error between structurally similar graphs of varying sizes. Such results have recently been formalized for popular GNN architectures, including spectral, message-passing, invariant, and higher-order GNN networks \citep{ruiz2020graphon,keriven2020convergence,kenlay2021stability,levie2021transferability,cai2022convergence,maskey2023transferability,cordonnier2023convergence,le2024limits,herbst2025higher}.

Although the existing literatures primarily focus on discrete-layer GNN architectures, motivated by the growing interest in AI-for-Science applications, continuous-depth GNN models, often referred \revised{to} as Graph Neural Differential Equations (GNDEs), have recently gained attention \citep{poli2019graph,liu2025graph}. GNDEs model node features as solutions of an ordinary differential equation (ODE) parameterized by a GNN, which combine the expressivity of Neural ODEs \citep{chen2018neural} with the structural inductive biases inherent in GNNs. Distinct from discrete-layer architectures, GNDEs naturally incorporate \textit{time-varying} parameters, allowing the model to capture evolving node interactions and dynamic message-passing patterns over continuous time. Empirically, GNDEs have demonstrated strong performance, \revised{often matching or outperforming standard finite-depth GNNs without residual connections}, across a range of static and dynamic graph tasks, including node classification and link prediction \citep{poli2019graph,chamberlain2021grand,rusch2022graph,lin2024graph}, as well as practical applications such as epidemic forecasting \citep{luo2023hope,huang2024causal}, traffic prediction \citep{poli2019graph,choi2022graph,song2024traffic}, physical simulations \citep{han2022learning,huang2023ggode}, and recommendation systems \citep{xu2023graph}.

Despite their promise, GNDEs suffer from a crucial limitation of \emph{scalability}. This limitation arises because solving ODEs on large-scale graphs is computationally prohibitive \citep{finzi2023stable,liu2025graph}. To address this issue, we pursue a potential remedy via size transferability, which motivates the following question:

\begin{problem}[Size Transferability of GNDEs]\label{prob:gnne-transfer}
Do GNDEs exhibit size transferability? More specifically, if a GNN architecture with known size transferability is used to parameterize a Neural ODE, does the resulting continuous-depth model (i.e., GNDE) inherit this transferability property?
\end{problem}

\revised{Building on prior work in GNNs, we study the size transferability of GNDEs by analyzing the convergence of their solutions as the input graphs grow and converge to a limiting graphon.} Notably, GNDEs necessitate a stronger notion of  convergence than GNNs in terms of the infinite-node limit. For standard GNNs, the discrete-layer architecture generates only \emph{finitely} many hidden states (i.e., layer-wise outputs) during forward propagation. In contrast, the continuous-depth structure of GNDEs evolves node features through \emph{infinitely} many intermediate states, forming a trajectory over a continuous time horizon. Existing convergence results for GNNs \citep{ruiz2020graphon,maskey2023transferability} indicate that each hidden state would converge to a limiting function as the graph size grows. For GNDEs, an analogous property is expected: the entire feature trajectory should \emph{uniformly} converge as the number of nodes tends to infinity (cf.~Figure \ref{fig:trajectoryconvergence}). This \emph{trajectory-wise} convergence is crucial for both \emph{forward} and \emph{backward} propagation. 
\begin{itemize}
    \item During forward propagation, trajectory-wise convergence ensures that the continuous-time evolution of node features remains stable and consistent with increasing graph size. This is important for time-sensitive tasks such as forecasting and control, where intermediate states directly inform predictions and decisions. 

\item During backward propagation, trajectory-wise convergence provides \revised{a necessary prerequisite (though not by itself sufficient)}  for controlling the accumulation of approximation errors along the continuous-time trajectory, \revised{which underlies gradient-level stability considerations in continuous-depth models.}
\end{itemize}
Although trajectory-wise convergence for GNDEs is preferable, it \emph{cannot} be obtained by simply discretizing the dynamics (e.g., Euler's method) into a deep GNN with residual connections and invoking existing GNN results. This limitation arises for two main reasons. \textbf{(i)} It is challenging to select a uniform step size that remains suitable as the graph size increases, which prevents the discrete solution from accurately approximating the continuous-time graphon limit.
\textbf{(ii)} Evaluating the solution only at discrete time points overlooks the accumulation of error between those points and provides no guarantees on the temporal regularity of the solution. In light of the above issues, we perform the analysis directly in continuous time to obtain \emph{simultaneous, uniform-in-time} control of the entire trajectory as the graph size increases. This enables the use of techniques from dynamical systems, including stability estimates derived from Grönwall-type inequalities, \revised{which are more naturally obtained in the continuous-time setting than through purely discrete-time analyses.} %

\begin{figure}[!htbp]
  \centering
  \includegraphics[width=0.8\linewidth]{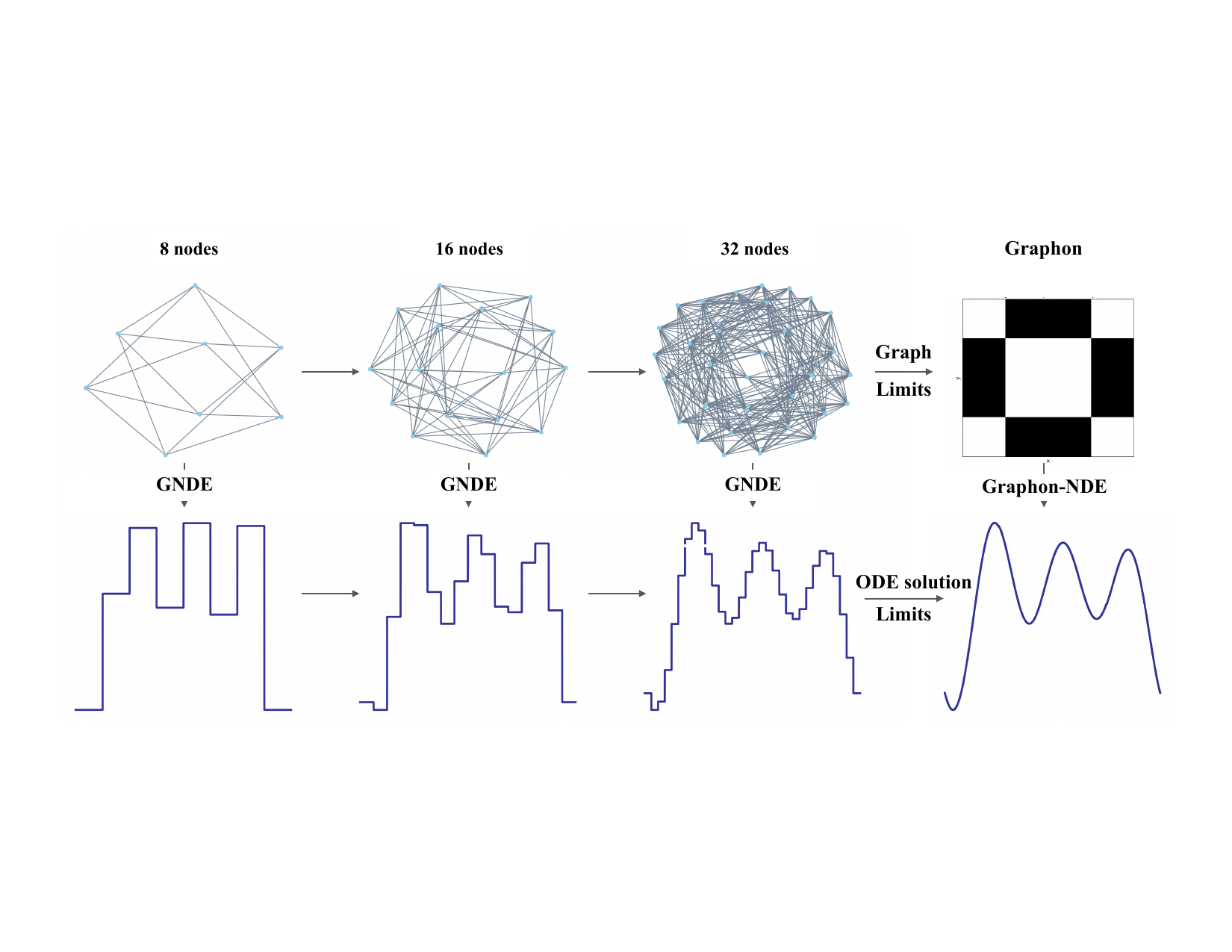} %
  \caption{\revised{Convergence of GNDE solutions (bottom) on graphon-sampled graphs (top) via induced graphon representations}. \newrevised{Discrete GNDE solutions are visualized using piecewise-constant embeddings. The bottom row shows the values of a scalar node feature at a fixed time, plotted against nodes ordered according to their latent graphon positions (or the continuum coordinate $u\in[0,1]$ in the graphon limit). Convergence to the smooth Graphon-NDE solution is observed as the graph size increases.}}
  \label{fig:trajectoryconvergence}
\end{figure}

\revised{\paragraph{Overview of the results} Analytically, we draw inspiration from the rich literature on deriving graphon limits for nonlocal diffusion equations
\citep{medvedev2014nonlinear,ayi2021mean,paul2022microscopic,kaliuzhnyi2022sparse}.
In particular, we adapt the stability-based graph-limit framework established by the seminal work of \citet{medvedev2014nonlinear}, originally developed for nonlinear heat equations, to the context of continuous-depth GNDEs parameterized by spectral Graph Convolutional Networks (GCNs).
Our theoretical results are summarized as follows:}

\begin{itemize} 

\item \textbf{Infinite-node limits (Graphon-NDEs).} We introduce an infinite-node limit of GNDEs, termed \textit{Graphon Neural Differential Equations} (Graphon-NDEs), \revised{which are a class of nonlocal integro-differential equations  defined on graphon spaces}.
\revised {We establish the well-posedness by identifying a set of explicit and architecture-driven regularity conditions on the activation functions and learnable filters. These assumptions effectively translate the kernel regularity requirements of the classical nonlocal diffusion framework \citep{medvedev2014nonlinear} into concrete, verifiable constraints on the neural architecture.}

    \item  \textbf{Trajectory-wise convergence.} We prove that solution trajectories of GNDEs (a sequence of ODEs) uniformly converge to \revised{the solution trajectory of} a Graphon-NDE (a PDE) whenever the underlying graph sequences and initial features converge, as illustrated in Figure \ref{fig:trajectoryconvergence}. \revised{Crucially, our analysis accommodates \emph{time-varying} filter parameters, a capability that enhances the expressivity of GNDEs over discrete GNNs. We establish that, even with this added complexity (i.e., nonautonomous dynamics), the entire inference trajectory remains stable and consistent as the graph size grows.}

 \item \textbf{Convergence rates.} We derive explicit convergence rates for both weighted and unweighted graphs which are generated deterministically from graphons. For weighted graphs sampled from smooth graphons, we present a convergence rate of \(\mathcal{O}(1/n^\alpha)\) with H\"older smoothness exponent $\alpha\in(0,1]$; for unweighted graphs sampled from $\{0,1\}$-valued (hence discontinuous) graphons, we \revised{leverage the box-counting dimension analysis established by \citet{medvedev2014nonlinear} to} show a convergence rate of \(\mathcal{O}(1/n^c)\), with \(c \in (0,1)\) depending on the box-counting dimension of the boundary of the graphon's support. \revised{In Section \ref{sec:graphon_conv_rate_experiment}, we provide numerical evidence consistent with these rates on  representative graphons.}

    \item \textbf{Size transferability bounds.} Leveraging our derived convergence rates, we establish upper bounds on the solution discrepancy of GNDEs over graphs of different sizes. This provides theoretical justification for the size transferability of GNDEs—models trained on smaller graphs can reliably generalize to larger, structurally similar graphs without retraining.

\end{itemize}

\subsection{Novelty and positioning to related work}

\revised{
Our results adapt the analytical techniques of \citet{medvedev2014nonlinear,kaliuzhnyi2022sparse} to the complex setting of vector-valued, nonautonomous neural dynamics. Following \citep{medvedev2014nonlinear}, we establish well-posedness via fixed-point arguments and trajectory-wise stability estimates through Gr\"onwall inequalities. Our contribution lies in identifying the proper conditions on the neural architecture, such as specific constraints on activation functions and time-evolving filters, to align with this classical stability framework. Technically, our stability estimates retain the structural form of classical nonlocal diffusion bounds \citep{medvedev2014nonlinear,kaliuzhnyi2022sparse}, where the solution discrepancy is controlled by the initial condition mismatch and the graphon approximation error. The distinction in the GNDE setting lies in the coefficients of this bound, which explicitly rely on the neural architecture, including activation functions, filters, network depth and convolutional channels. Finally, while we adopt H\"older assumptions on the graphon and initial features for a clear first presentation, these regularity assumptions enter only through the approximation terms, making the bound structurally compatible with the $L^{p}$-modulus-of-continuity (e.g., $\mathrm{Lip}(\alpha,L^{p})$) framework of \citet{kaliuzhnyi2022sparse}; We leave extensions to more general graphon classes and more general GNN architectures for future work.

To bridge theory with practice, Section~\ref{appendix: real data experiments} investigates these stability predictions on real-world node classification benchmarks using node subsampling. While real-world networks are finite and may not originate from a strict graphon limit, we leverage the induced graphon embedding as a canonical {scaling representation} that allows graphs of differing sizes to be compared within a common functional framework.
Across ten datasets spanning both homophilic and heterophilic regimes, we observe that the empirical performance gap under size transfer is strongly correlated with this theoretically motivated induced-graphon discrepancy in our stability estimates.

}

\paragraph{Error Decomposition for Zero-Shot Size Transfer in Learning Nonlocal Diffusion Dynamics}

\revised{
Nonlocal diffusion equations are widely used to model collective dynamics in physical and biological systems, and GNDEs have shown strong empirical performance as data-driven surrogates for such dynamics, including multi-particle and network traffic dynamics \citep{poli2019graph}, as well as opinion dynamics on graphs \citep{berndt2025permutation}.  From a data science perspective, this work provides a theoretical foundation for using GNDEs as data-driven surrogate models for learning diffusion-like dynamics with \textit{zero-shot size transferability}, \newrevised{the ability to deploy a model trained on graphs of one size to graphs of different sizes without retraining.}

Formally, let $\mathcal{G}$ be a graph sampled from a graphon, and let $S_{\mathcal{G}}$ denote the solution of a target discrete dynamical system posed on $\mathcal{G}$ (e.g., a nonlocal diffusion ODE).
We consider a GNDE model $\Phi_{\mathcal{G}}(\theta)$ trained to approximate $S_{\mathcal{G}}$, where $\theta$ denotes the learned parameters.
To reduce training cost, we train a surrogate GNDE model on a smaller graph $\mathcal{G}_{\mathrm{small}}$ sampled from the same graphon, yielding parameters $\hat{\theta}$, and then evaluate the resulting model in a zero-shot manner on the larger graph $\mathcal{G}$.
By lifting all quantities to a common function space via induced graphon representations, the total prediction error admits the following decomposition by the triangle inequality:
\begin{equation}\label{eq:triangle_decomposition}
\begin{aligned}
\bigl\| \Phi_{\mathcal{G}}(\hat{\theta}) - S_{\mathcal{G}} \bigr\|
\;\le\;&
\underbrace{\bigl\| \Phi_{\mathcal{G}}(\hat{\theta}) - \Phi_{\mathcal{G}_{\mathrm{small}}}(\hat{\theta}) \bigr\|}_{\text{(I) model transfer error}}
\;+\;
\underbrace{\bigl\| \Phi_{\mathcal{G}_{\mathrm{small}}}(\hat{\theta}) - S_{\mathcal{G}_{\mathrm{small}}} \bigr\|}_{\text{(II) training error}}
\;+\;
\underbrace{\bigl\| S_{\mathcal{G}_{\mathrm{small}}} - S_{\mathcal{G}} \bigr\|}_{\text{(III) graph discretization error}}.
\end{aligned}
\end{equation}
Term (III) represents \newrevised{the} \textit{graph discretization error}, which captures the error induced by approximating the underlying physical law via a finite graph and is governed by classical graph-limit theory \citep{medvedev2014nonlinear}.
Term (II) corresponds to the \textit{training error} on the source graph; In Table~\ref{tab:heat_results_num} in Section~\ref{appendix: Synthetic Dynamics Experiments} and Figure~\ref{fig:three_panel} in Section~\ref{appendix: Synthetic Dynamics Experiments}, we show that this term is small for four representative nonlocal diffusion dynamics on graphs sampled from five underlying graphons under a standardized training protocol. Term (I), the \emph{model transfer error}, is the primary focus of the present work. Our analysis shows that this term is controlled by the underlying graphon approximation and initial feature mismatch. 

Taken together, the above observations support continuous-depth spectral GNDEs as a principled surrogate class for learning diffusion-like dynamics across graph scales, and help bridge graph-limit theory with practical transfer learning for graph-based dynamical models.

}

\paragraph{Additional Related Works} 

Convergence theory in deep learning includes width-wise limits, where infinitely wide networks converge to kernel models via the Neural Tangent Kernel (NTK) \citep{jacot2018neural}, and depth-wise limits, where deep residual networks converge to Neural ODEs \citep{weinan2017proposal,chen2018neural,avelin2021neural,sander2022residual,thorpe2023deep}. In contrast, our focus is on input-wise convergence: \revised{we analyze whether GNDEs produce stable outputs as the input graphs grow in size and converge to a limiting graphon}. Closely related works apply graphon theory and analyze infinite-node limits for GNNs and message-passing networks \citep{ruiz2020graphon,gama2020stability,keriven2020convergence,ruiz2021graphon,morency2021graphon,kenlay2021stability,kenlay2021interpretable,levie2021transferability,maskey2023transferability,cordonnier2023convergence,le2024limits}. We extend prior graphon-based convergence results for GNNs to continuous-depth models with time-varying parameters, establishing a stronger trajectory-wise convergence. A summary comparison is provided in Table~\ref{tab:comparison}.

\afterpage{
\begin{table}
\centering
\small
\renewcommand{\arraystretch}{1.4} %
\setlength{\tabcolsep}{6pt} %
\begin{tabular}{|m{2.8cm}|m{4.7cm}|m{4.5cm}|}
\hline
\multicolumn{1}{|c|}{\textbf{Attribute}} & \multicolumn{1}{c|}{\textbf{GNNs}} & \multicolumn{1}{c|}{\textbf{GNDEs (ours)}} \\
\hline
\multicolumn{1}{|c|}{Layer Type} & \multicolumn{1}{c|}{Discrete} & \multicolumn{1}{c|}{Continuum} \\
\hline
\multicolumn{1}{|c|}{Coefficient Type} & \multicolumn{1}{c|}{Static} & \multicolumn{1}{c|}{Temporally Continuous} \\
\hline
\multicolumn{1}{|c|}{Convergence Notion} & \multicolumn{1}{c|}{Layer-wise}  & \multicolumn{1}{c|}{Trajectory-wise} \\
\hline
\multicolumn{1}{|c|}{\textbf{Graphon Type}} & \multicolumn{2}{c|}{\textbf{Convergence Rates}} \\
\hline
\multicolumn{1}{|c|}{Smooth} & \multicolumn{1}{c|}{\makecell[t]{\rule{0pt}{6pt}Lipschitz Continuous\\$\mathcal{O}(1/\sqrt{n})$ \citep{ruiz2020graphon}; $\mathcal{O}(1/n)$ \citep{maskey2023transferability}}} & \multicolumn{1}{c|}{\makecell[t]{\rule{0pt}{6pt}H\"older Continuous\\$\mathcal{O}(1/n^\alpha)$, $\alpha\in(0,1]$}}\\
\hline
\multicolumn{1}{|c|}{$\{0,1\}$-valued} & \multicolumn{1}{c|}{Inexplicit\footnotemark~\citep{morency2021graphon,kenlay2021stability,kenlay2021interpretable}} & \multicolumn{1}{c|}{\textbf{$\mathcal{O}(1/n^c)$}, $c \in (0,1)$}\\
\hline
\end{tabular}
\vspace{6pt}
\caption{Infinite-node Convergence Rates for GNNs and GNDEs}\label{tab:comparison}
\end{table}
\footnotetext{``Inexplicit” means that convergence is established, but no explicit rate is provided.}
}

\section{Notation and Preliminary Concepts}
Let $\mathbb{N}:=\{1,2,\ldots\}$ and $\mathbb{R}^+:=[0,\infty)$. For $n\in\mathbb{N}$, let $[n]:=\{1,2,\ldots,n\}$, $\mathbb{Z}_n:=\{0,1,\ldots,n-1\}$. The norm $\|\cdot\|_2$ is \revised{the} Euclidean norm for vectors and \revised{the} spectral norm for matrices. We denote the unit interval as \( I := [0, 1] \) and $I^2:=I\times I$. For an interval \( J \subseteq I \), by $|J|$ we denote the length of $J$, and we define the indicator function \( \chi_J:J\to\{0,1\} \) as $\chi_{J}(u):=1$ if $u \in J$, and $\chi_{J}(u):=0$ otherwise. For a finite set $S$, by $|S|$ we denote the number of elements in the set $S$. 

\paragraph{Function spaces} The function space \( L^p(I; \mathbb{R}^{1\times F}) \) consists of all \( L^p \)-integrable vector valued functions mapping \( I \) to \( \mathbb{R}^{1\times F} \), where \( p\in[1,\infty] \) and \( F \) denotes the number of features. The norm in \( L^p(I; \mathbb{R}^{1\times F}) \) is defined by $\|\tZ\|_{L^p(I;\mathbb{R}^{1\times F})}:=(\sum_{f\in[F]}\|Z_f\|_{L^p(I)}^2)^{1/2}$ for $\tZ=[Z_f:f\in[F]]$. By $\int_I\tZ(u)du$ we denote the entry-wise integral $\left[\int_I Z_f(u)du:f\in [F]\right]$. Let $\Omega$ be a subset of $\mathbb{R}^+$ and $p\in[1,\infty]$, the Banach space \( C(\Omega; L^p(I; \mathbb{R}^{1\times F})) \) is composed of vector-valued functions \( \tX=[X_f:f\in[F]]: I \times \Omega  \to \mathbb{R}^{1\times F} \) satisfying that for each \( t \in \Omega \), \( \tX(\cdot,t )\in L^p(I; \mathbb{R}^{1\times F}) \); for each \( u \in I \) and $f\in[F]$, \( X_f(u,\cdot) \) is continuous on $\Omega$; and with finite norm $\|\tX\|_{C(\Omega;L^{p}(I; \mathbb{R}^{1\times F}))}:=\sup_{t\in\Omega}\|\tX(\cdot,t)\|_{L^{p}(I;\mathbb{R}^{1\times F})}$. By $C^1(\Omega;L^{p}(I; \mathbb{R}^{1\times F}))$ we denote a subspace of $C(\Omega;L^{p}(I; \mathbb{R}^{1\times F}))$, in which the vector-valued function $\tX=[X_f:f\in[F]]$ satisfies that for each $f\in[F]$ and $u\in I$, $X_f(u,\cdot)$ is continuously differentiable. 

\paragraph{Graph and graph features} A graph is denoted by \( \mathcal{G} = \langle V(\mathcal{G}), E(\mathcal{G}),\mW_{\mathcal{G}} \rangle \), where \( V(\mathcal{G}) \) is the set of nodes, \( E(\mathcal{G}) \subseteq V(\mathcal{G})\times V(\mathcal{G})\) is the set of edges, and $\mW_{\mathcal{G}} = \left[[\mW_{\mathcal{G}}]_{ij} \in I : i, j \in |V(\mathcal{G})|\right]$ represents the adjacency matrix, where $[\mW_{\mathcal{G}}]_{ij} = [\mW_{\mathcal{G}}]_{ji}$ is nonzero \revised{if and only if} $(i, j) \in E(\mathcal{G})$. %
We say a graph $\mathcal{G}$ is \textit{weighted} if the entries in its adjacency matrix $\mW_{\mathcal{G}}$ are real numbers in the unit interval $I$, and \textit{unweighted} if the entries in $\mW_{\mathcal{G}}$ are restricted to $\{0,1\}$, where $[\mW_{\mathcal{G}}]_{ij} = 0$ indicates the absence of an edge and $[\mW_{\mathcal{G}}]_{ij}=1$ indicates the presence of an edge. \revised{A graph $\mathcal{G}$ is \textit{simple} if it is undirected, and containing no self-loops or multiple edges.} Let $F$ be the number of features and by $\mZ_{\mathcal{G}}$ we denote the graph node feature matrix \( \mZ_\mathcal{G} \in \mathbb{R}^{|V(\mathcal{G})| \times F} \) over the graph $\mathcal{G}$, which assigns a feature vector \( [\mZ_\mathcal{G}]_{i,:} \in \mathbb{R}^{1 \times F} \) for each node \( i \in V(\mathcal{G}) \).

\paragraph{Graphon and graphon features} A \textit{graphon} is a bounded, symmetric, and measurable function $\tW: I^2 \rightarrow I$, serving as a continuous generalization of the discrete adjacency matrix. One can treat a graphon as an undirected graph with a continuum of nodes from the unit interval $I$, where the edge weight between nodes $u_i, u_j \in I$ is given by $\tW(u_i, u_j)$. A graphon can represent the \textit{limiting structure} of a sequence 
of finite graphs, with the formal definition of convergence deferred to Section \ref{section: graph limits}. This perspective enables graphons 
to model entire classes of graphs with similar connectivity patterns. A \textit{graphon feature function} $\tZ$ over a graphon $\tW$ is a function $\tZ:I\to\mathbb{R}^{1\times F}$, where $\tZ(u)$ represents the node features for each $u\in I$. Similarly, graphon feature functions can be viewed as a generalization of discrete graph node feature matrices over continuum nodes.

\subsection{Graph Limits}\label{section: graph limits}
In this section, we provide more details of graphons as graph limits. We begin with the concept of a sequence of graphs converging to a graphon in the sense of \textit{homomorphism density} \citep{lovasz2012large}. A \textit{motif} $\mathcal{F}$ is an arbitrary simple graph. A \textit{homomorphism} from a motif $\mathcal{F}$ to a simple unweighted graph $\mathcal{G}$ is an adjacency-preserving mapping $\phi: V(\mathcal{F}) \rightarrow V(\mathcal{G})$, meaning $(i, j) \in E(\mathcal{F})$ implies $(\phi(i), \phi(j)) \in E(\mathcal{G})$, and the \textit{homomorphism number} $\text{hom}(\mathcal{F},\mathcal{G})$ refers to the total number of homomorphisms from $\mathcal{F}$ to $\mathcal{G}$. The \textit{homomorphism density} $t(\mathcal{F},\mathcal{G})$ is defined as the ratio of $\text{hom}(\mathcal{F}, \mathcal{G})$ and $|V(\mathcal{G})|^{|V(\mathcal{F})|}$, which represents the probability of a random mapping $\phi: V(\mathcal{F}) \rightarrow V(\mathcal{G})$ being a homomorphism. The notion of homomorphism density can be similarly extended to the case of $\mathcal{G}$ being \revised{a weighted graph} \citep{lovasz2012large}. \revised{For a weighted graph $\mathcal{G}$, the homomorphism density preserves the form $t(\mathcal{F}, \mathcal{G}):=\text{hom}(\mathcal{F}, \mathcal{G})/|V(\mathcal{G})|^{|V(\mathcal{F})|}$, but with the homomorphism number generalized for weighted edges as $\text{hom}(\mathcal{F}, \mathcal{G}) := \sum_{\phi} \prod_{(i,j) \in E(\mathcal{F})} [\mathbf{W}_{\mathcal{G}}]_{\phi(i)\phi(j)}$, where the sum runs over all maps $\phi:V(\mathcal{F})\to V(\mathcal{G})$.}

The homomorphism density from a motif to a graphon is generalized via integrals. We define the homomorphism density from a motif \(\mathcal{F}\) to a graphon \(\tW\), denoted by \( t(\mathcal{F}, \tW) \), as 
\begin{equation}\label{graphonhormomorphismdensity}
t(\mathcal{F},\tW):=\int_{I^{|V(\mathcal{F})|}}\prod_{(i,j)\in E(\mathcal{F})}\tW(u_i,u_j)\prod_{i\in V(\mathcal{F})}du_i.
\end{equation}
We say that a sequence of graphs \(\{\mathcal{G}_n\}_{n=1}^\infty\) converges to the graphon \(\tW\) in the sense of homomorphism density if, for any motif \(\mathcal{F}\), it holds that
\[
\lim_{n \to \infty} t(\mathcal{F}, \mathcal{G}_n) = t(\mathcal{F}, \tW).
\]
In the sense of homomorphism density, every graphon is a limit object of some convergent graph sequence, and conversely, every convergent graph sequence converges to a unique graphon \citep{lovasz2012large}. Thus, a graphon represents a family of graphs that approximate a same underlying structure, even if their sizes differ. By categorizing graphs into such ``graphon families'', graphons allow for easier and more structured analysis of graph sequences, providing a robust framework for studying large-scale networks.

In the following, we review the relation of convergence in homomorphism density and cut norm. The \textit{cut norm} of a graphon $\tW$ is defined by 
\begin{align}\label{limitshormomorphismdensity}
    \|\tW\|_{\square} := \sup_{S, S' \subseteq I} \left| \int_{S \times S'} \tW(x, y) \, dx \, dy \right|,
\end{align}
where the supremum is taken over all subsets $S$ and $S'$ of $I$. \revised{The cut norm measures the maximum connection strength between any two subsets of nodes, serving as a metric to quantify the structural difference between graphons.} Let $\mathcal{G}$ be a graph with adjacency matrix $\mW_{\mathcal{G}}\in\mathbb{R}^{|V(\mathcal{G})|\times|V(\mathcal{G})|}$. For each $i \in [|V(\mathcal{G})|]$, let $I_i := \left[\frac{i-1}{|V(\mathcal{G})|}, \frac{i}{|V(\mathcal{G})|}\right)$. The induced graphon representation of $\mW_{\mathcal{G}}$, denoted as $\tW_{\mathcal{G}} : I^2 \to \mathbb{R}$, is defined by
\begin{equation}\label{induced graphon Wn}
\tW_{\mathcal{G}}(u, v) := \sum_{i,j=1}^{|V(\mathcal{G})|} [\mW_{\mathcal{G}}]_{ij} \, \chi_{I_i}(u) \, \chi_{I_j}(v), \quad u,v \in I.
\end{equation}
\revised{
A permutation $\pi$ acts as a relabeling of the nodes of a graph. Formally, for a graph $\mathcal{G}$, a permutation $\pi$ is a bijection on $V(\mathcal{G})$. The permuted graph, denoted by $\pi(\mathcal{G})$, is defined as the graph with the same node set $V(\mathcal{G})$, edge set $E(\pi(\mathcal{G})):=\{(\pi(i),\pi(j)):(i,j)\in E(\mathcal{G})\}$, and adjacency matrix $\mW_{\pi(\mathcal{G})}$ given by
$
[\mW_{\pi(\mathcal{G})}]_{\pi(i),\pi(j)} := [\mW_{\mathcal{G}}]_{i,j}$, $i,j\in V(\mathcal{G})$.}
The following result from \citet{lovasz2012large} states that the convergence of graphs in terms of homomorphism density implies convergence in the cut norm of induced graphons, up to some permutations.
\begin{lemma}\label{lemma: convergence in homomorphism density implies cut norm}
    Let $\{\mathcal{G}_n\}_{n=1}^\infty$ be a sequence of graphs with adjacency matrices $\{\mW_{\mathcal{G}_n}\}_{n=1}^\infty$. Suppose that $\{\mathcal{G}_n\}_{n=1}^\infty$ converges to a graphon $\tW$ in the sense of homomorphism density. Then, there exists a sequence $\{\pi_n\}_{n=1}^\infty$ of permutations such that     $\lim_{n\to\infty}\|\tW_{\pi_n(\mathcal{G}_n)}-\tW\|_{\square}=0.$
\end{lemma}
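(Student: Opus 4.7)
}

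The plan is to proceed in three stages: (i) pass from convergence in homomorphism density to convergence in cut distance $\delta_\square$, (ii) identify the limit object in cut distance with the given graphon $\tW$, and (iii) promote the resulting measure-preserving realignments to genuine vertex permutations $\pi_n$.

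First, I would invoke the \textbf{Counting Lemma} and its inverse (the weak regularity lemma of Frieze--Kannan combined with a sampling/martingale estimate, as in Lovász, \emph{Large Networks and Graph Limits}, Ch.~10--11). The counting lemma states that homomorphism densities are Lipschitz in the cut norm, so cut-norm closeness of induced graphons controls density closeness. The inverse direction is the more delicate half: one uses the weak regularity lemma to approximate any pair of graphons by low-complexity step functions and then shows that if two graphons have close homomorphism densities for every motif $\mathcal{F}$ (up to some bounded size), they must be $\delta_\square$-close. Applied to the sequence $\{\mathcal{G}_n\}$ and $\tW$, this yields $\delta_\square(\tW_{\mathcal{G}_n}, \tW) \to 0$, where $\delta_\square(\tU,\tV) = \inf_{\sigma} \|\tU^{\sigma} - \tV\|_\square$ is the cut distance obtained by taking the infimum of the cut norm over measure-preserving bijections $\sigma: I \to I$.

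Second, having obtained $\delta_\square(\tW_{\mathcal{G}_n}, \tW) \to 0$, I would pick for each $n$ a measure-preserving map $\sigma_n$ with $\|\tW_{\mathcal{G}_n}^{\sigma_n} - \tW\|_\square \leq \delta_\square(\tW_{\mathcal{G}_n}, \tW) + 1/n$. The step I expect to be the main obstacle is \textbf{replacing $\sigma_n$ by a vertex permutation $\pi_n$}: the induced graphon $\tW_{\mathcal{G}_n}$ is a step function on the dyadic blocks $I_i \times I_j$ with $|I_i| = 1/n$, but a general measure-preserving $\sigma_n$ can shuffle these blocks in a fractional, non-combinatorial way. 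To handle this, I would approximate $\sigma_n$ by a measure-preserving map that permutes the intervals $\{I_i\}_{i=1}^{n}$ as whole blocks. Writing $\sigma_n$ as a doubly stochastic matrix (via the overlap weights $n \cdot |\sigma_n(I_i) \cap I_j|$) and applying the Birkhoff--von Neumann theorem expresses it as a convex combination of permutation matrices; averaging then shows that at least one permutation $\pi_n$ gives a step-function rearrangement within $\mathcal{O}(1/\sqrt{n})$ of $\tW_{\mathcal{G}_n}^{\sigma_n}$ in cut norm (this residual comes from the block-rounding cost, and decays since $\tW_{\mathcal{G}_n}$ is bounded by $1$ and is constant on blocks of area $1/n^2$).

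Finally, combining the two estimates via the triangle inequality,
\[
\|\tW_{\pi_n(\mathcal{G}_n)} - \tW\|_\square \leq \|\tW_{\pi_n(\mathcal{G}_n)} - \tW_{\mathcal{G}_n}^{\sigma_n}\|_\square + \|\tW_{\mathcal{G}_n}^{\sigma_n} - \tW\|_\square \longrightarrow 0,
\]
which is the desired conclusion. The first stage (density $\Rightarrow$ cut distance) is largely an appeal to standard graph limit theory; the second stage of selecting near-optimal $\sigma_n$ is routine; the true technical work lies in the Birkhoff-rounding argument that converts fractional alignments into honest permutations without inflating the cut norm, which is where I would spend the bulk of the proof.
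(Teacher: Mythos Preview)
The paper does not prove this lemma at all: it is stated with a direct citation to Lov\'asz (2012) as a known result in graph limit theory, so there is no in-paper argument to compare against. Your stages (i) and (ii) are exactly the standard opening moves (inverse counting lemma, then pick near-optimal $\sigma_n$), and you are right that the real content lies in stage (iii).

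However, the Birkhoff rounding you describe in stage (iii) has a genuine gap. The doubly stochastic matrix $D_{ij}=n|\sigma_n(I_i)\cap I_j|$ does \emph{not} determine the rearranged graphon $\tW_{\mathcal{G}_n}^{\sigma_n}$, and $\tW_{\mathcal{G}_n}^{\sigma_n}$ is \emph{not} a convex combination of the block-permuted graphons $\tW_{\mathcal{G}_n}^{\pi_k}$ appearing in a Birkhoff decomposition $D=\sum_k\lambda_kP_k$; so no averaging over the $P_k$ yields the stated $\mathcal{O}(1/\sqrt{n})$ approximation. Concretely, take $\mathcal{G}_n$ with $\tW_{\mathcal{G}_n}=\mathbf 1_{[0,1/2]^2}$ and let $\sigma_n$ swap the second half of each $I_i$ with the first half of $I_{i+n/2}$. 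Then $D=\tfrac12 I+\tfrac12 P$ with $P$ the block swap $i\leftrightarrow i+n/2$, yet $\tW_{\mathcal{G}_n}^{\sigma_n}=\mathbf 1_{A^2}$ where $A$ is the union of the first halves of \emph{all} $I_j$, and one checks $\|\tW_{\mathcal{G}_n}^{\sigma_n}-\tW_{\mathcal{G}_n}^{\pi}\|_\square\geq 1/16$ for \emph{every} block permutation $\pi$, independently of $n$. So a generic $\sigma_n$ simply cannot be rounded this way. Restricting to near-optimal $\sigma_n$ does not rescue the argument either: in that regime the claim ``some $\pi_n$ is cut-close to $\tW_{\mathcal{G}_n}^{\sigma_n}$'' becomes equivalent (via the triangle inequality through $\tW$) to the lemma itself, so the reduction is circular.

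The route taken in Lov\'asz's book sidesteps this by never attempting to round $\sigma_n$. One instead replaces $\tW$ by a step function on the $|V(\mathcal{G}_n)|$-part equipartition (e.g.\ its $L^1$ block average, or a $W$-random graph on $|V(\mathcal{G}_n)|$ vertices via the sampling lemma), so that the comparison is between two graphs on the \emph{same} vertex set; at that point the permutation cut distance is the natural object from the outset and no fractional-to-integral rounding is required.
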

Given a sequence $\{\mathcal{G}_n\}_{n=1}^\infty$ of graphs converging to a graphon $\tW$ in the sense of homomorphism density, we introduce a set of the permutation sequences $\{\pi_n\}_{n=1}^\infty$ such that the permuted induced graphons $\tW_{\pi_n(\mathcal{G}_n)}$ converge under the cut norm to the graphon $\tW$, that is, 
\begin{equation}\label{admissible set}
    \mathfrak{P}:=\left\{\{\pi_n\}_{n=1}^\infty:\lim_{n\to\infty}\|\tW_{\pi_n(\mathcal{G}_n)}-\tW\|_{\square}=0\right\}.
\end{equation}
It is clear that the set $\mathfrak{P}$ is not empty due to Lemma \ref{lemma: convergence in homomorphism density implies cut norm}. 

\subsection{Graph-feature limits} In the following, we formulate the convergence of a sequence of graph-feature pairs to a graphon-feature pair. To this end, we introduce the convergence of induced graphon feature functions. 

Let $\mathcal{G}$ be a graph with node feature matrix $\mZ_{\mathcal{G}} \in \mathbb{R}^{|V(\mathcal{G})| \times F}$. 
The induced graphon feature function $\tZ_{\mathcal{G}} : I \to \mathbb{R}^{1 \times F}$, defined as the piecewise constant interpolation of the node feature matrix $\mZ_{\mathcal{G}}$, is given by
\begin{equation}\label{induced graph features Zn}
\tZ_{\mathcal{G}}(u) := \sum_{i=1}^{|V(\mathcal{G})|} [\mZ_{\mathcal{G}}]_{i,:} \, \chi_{I_i}(u), \quad u \in I.
\end{equation}
We adopt the following definition of graph-feature pairs converging to a graphon-feature pair, introduced in \citet{ruiz2021graphon}.
\begin{definition}\label{def: graph node feature converges -- formal one}
    Let $\{\mathcal{G}_n\}_{n=1}^\infty$ be a sequence of graphs with adjacency matrices $\{\mW_{\mathcal{G}_n}\}_{n=1}^\infty$ and graph node feature matrices $\{\mZ_{\mathcal{G}_n}\}$. Suppose that $\{\mathcal{G}_n\}_{n=1}^\infty$ converges to a graphon $\tW$ in the sense of homomorphism density. Let $\tZ\in L^2(I;\mathbb{R}^{1\times F})$ be a graphon feature function. We say that $\{(\mathcal{G}_n,\mZ_{\mathcal{G}_n})\}_{n=1}^\infty$ converges to $(\tW,\tZ)$ if there exists a sequence of permutations $\{\pi_n\}_{n=1}^\infty\in\mathfrak{P}$ such that $\lim_{n\to\infty}\|\tZ_{\pi_n(\mathcal{G}_n)}-\tZ\|_{L^2(I;\mathbb{R}^{1\times F})}=0$, where the set $\mathfrak{P}$ is defined by \eqref{admissible set}. 
\end{definition}

\subsection{Graph Neural Differential Equations} %

\textit{Graph Neural Differential Equations} (GNDEs) \citep{poli2019graph} \newrevised{extend} Neural ODEs \citep{chen2018neural} to the graph domain by modeling the continuous-time dynamics with a Graph Neural Network (GNN). Formally, a GNDE is defined as
\begin{align}\label{GNDE}
\begin{split}
\frac{d}{dt} \mX(t) &= \GNN\big(\mX(t);\mS,\tH(t)\big), \\
\mX(0) &= \mZ \in \mathbb{R}^{|V(\mathcal{G})| \times F},
\end{split}
\end{align}
in which \(\mX(t) \in \mathbb{R}^{|V(\mathcal{G})| \times F}\) denotes the node feature matrix at time \(t\) and is initialized by the input node feature matrix \(\mZ\) at $t=0$; and \(\GNN\) is parameterized by a graph shift operator \(\mS\) and trainable, time-varying parameters \(\tH(t)\). \revised{We remark that the \textit{well-posedness} (i.e., the existence and uniqueness of solutions) of the GNDE system follows directly from standard theory of ODEs (e.g., the Picard–Lindel\"of theorem), under standard regularity conditions (formalized later as Assumptions~\hyperref[AS0]{AS0} and~\hyperref[AS1]{AS1}).}

\revised{While various GNN designs for the velocity field in \eqref{GNDE} have been proposed in the GNDE literature \citep{xhonneux2020continuous,chamberlain2021grand,rusch2022graph,choi2023gread}, we focus here on the case where it is parameterized by a spectral GCN.} Our analysis can similarly be applied to more general choices, which we leave for future work. The convergence analysis of GNNs in the infinite-node limit has been well studied in the literature \citep{ruiz2020graphon, krishnagopal2023graph, keriven2023functions, maskey2023transferability}, whereas our goal is to establish \emph{trajectory-wise} convergence for GNDEs, due to their fundamentally different continuous-depth architecture.

\paragraph{Graph Neural Networks} To ground our discussion, we first review the formulation of spectral GCNs. \revised{For a graph $\mathcal{G}$, a matrix \(\mS \in \mathbb{R}^{|V(\mathcal{G})|  \times |V(\mathcal{G})| }\) is called a \textit{graph shift operator} (GSO) if it satisfies \(\mS_{ij} = 0\) for $(i,j)\notin E(\mathcal{G})$ and \(i \neq j\)} \citep{shuman2013emerging,sandryhaila2013discrete, ortega2018graph}. Common choices for \(\mS\) include the normalized adjacency matrix or the graph Laplacian, both of which effectively capture the topological structure of the graph. \revised{In spectral GNNs, the notion of convolution from image-based Convolutional Neural Networks (CNNs) is generalized to graph-structured data using the GSO.} Let \(\vx \in \mathbb{R}^{|V(\mathcal{G})|} \) be a \textit{graph signal}, and let \(\vh = [h_0, h_1, \dots, h_{K-1}] \in \mathbb{R}^K\) be a \textit{filter}. \newrevised{Here, the value of $K$ is a model hyperparameter, which is given and fixed a priori.} The graph convolution of \(\vx\) with \(\vh\) is defined as $\sum_{k=0}^{K-1} h_k \mS^k \vx$, where \(\mS^k\) is the $k$-th power of $\mS$. \newrevised{This operation employs powers of the GSO to aggregate information from a node’s neighbors up to $K-1$ hops away \citep{bruna2013spectral,kipf2016semi}, which we refer to as $K$-hop aggregation.} 

Now, let $\tH = \{\vh_{fgk}^{(\ell)}\in\mathbb{R}: f, g \in [F], k\in\mathbb{Z}_K, \ell \in [L]\}$ \newrevised{be the set of filter parameters, which are trainable. Here,} $\vh_{fgk}^{(\ell)}$ is the $k$-th component of the filter used in the $\ell$-th layer to transform the $g$-th input feature into the $f$-th output feature. The $f$-th feature output of the $\ell$-th layer is computed by $\mX_f^{(\ell)} := \sigma( \sum_{g=1}^{F}\sum_{k=0}^{K-1}\vh_{fgk}^{(\ell)}\mS^k\mX_g^{(\ell-1)})$ where $\mX^{(0)}$ is the input feature matrix, $\sigma$ is a nonlinear activation function (e.g., ReLU). Then a GNN with $L$ layers can be compactly expressed as $\GNN(\mX^{(0)};\mS, \tH):=\mX^{(L)}$, which represents the overall GNN mapping from the input features $\mX^{(0)}$ to the output $\mX^{(L)}$, conditioned on the GSO $\mS$ and the filter parameters $\tH$.

In GNDEs, the velocity $d\mX/dt$ is evaluated as the GNN output, where we allow the filter parameters $\tH$ to vary over time. Specifically, we denote
\begin{equation}\label{parameterized H(t) filters}
\tH(t) := \left\{\vh_{fgk}^{(\ell,t)} : f, g \in [F],  k \in \mathbb{Z}_K, \ell \in [L] \right\}. 
\end{equation}
The setting of time-varying parameters enables the model to capture more complex dynamic processes on graphs.

\paragraph{Graphon Neural Networks} We briefly review the setup of \textit{Graphon Neural Networks} (Graphon-NNs) and one can refer to \citep{ruiz2020graphon} for more technical details. Given a graphon \(\tW: I^2 \to I\), the \textit{graphon integral operator}, denoted by \(T_{\tW}\), is defined for any feature function \(\mathbf{x} \in L^2(I; \mathbb{R})\) as $T_{\tW} \mathbf{x}(v) := \int_I \tW(u, v) \mathbf{x}(u) \, du$, $v\in I$. This operator is self-adjoint and Hilbert-Schmidt, with eigenvalues in \([-1, 1]\) accumulating around zero. For a filter \(\vh = [h_0,h_1,\dots, h_{K-1}]\), the \textit{graphon convolution} of $\mathbf{x}$ with $\vh$ is defined by $\sum_{k=0}^{K-1} h_k T_{\tW}^k \mathbf{x}$, where \(T_{\tW}^0\) is the identity operator and \(T_{\tW}^k\) is the \(k\)-fold composition of \(T_{\tW}\). Let \(\tX^{(0)} \in L^{\infty}(I; \mathbb{R}^{1 \times F})\) be the input feature function of a Graphon-NN. The \(f\)-th feature at the \(\ell\)-th layer of the Graphon-NN is updated via $\tX_f^{(\ell)} = \sigma( \sum_{g=1}^{F} \sum_{k=0}^{K-1} \vh_{fgk}^{(\ell)}T_{\tW}^k\tX_g^{(\ell-1)} )$, where \(\sigma\) is a nonlinear activation function. Then a Graphon-NN with \(L\) layers can be expressed as $\GraphonNN(\tX^{(0)};\tW, \tH):=\tX^{(L)}$, where $\GraphonNN$ represents the entire Graphon-NN mapping from the input feature function $\tX^{(0)}$ to the output feature function $\tX^{(L)}$, associated with graphon $\tW$ and parameters $\tH$.

\section{Main Results}

\subsection{Infinite-Node Limits: Graphon Neural Differential Equations and Well-Posedness} %
\label{section: Graphon-NDE}

To explore the infinite-node limiting structure of GNDEs, we introduce \textit{Graphon Neural Differential Equations} (Graphon-NDEs). Recalling that a graphon, as the limiting object of finite graphs, can be viewed as a graph with a continuum of nodes over the unit interval, we define Graphon-NDEs in a form similar to GNDEs \eqref{GNDE}, but tailored to operate on graphons rather than finite graphs. Specifically, we formulate Graphon-NDEs as 
\begin{align}\label{Graphon-NDE}
    \begin{split}
        \frac{\partial}{\partial t} \tX(u, t) &= \GraphonNN(\tX(u, t);\tW, \tH(t)), \\
        \tX(u, 0) &= \tZ(u),
    \end{split}
\end{align}
where \(\tX(\cdot, t):I\to\mathbb{R}^{1\times F}\) is the graphon node feature function at time $t$ and initialized by an input node feature function $\tZ$ at $t=0$; and \(\GraphonNN\) is a Graphon-NN applying on $\tX(\cdot,t)$ through graphon \(\tW\) and time-varying parameters $\tH(t)$ as in \eqref{parameterized H(t) filters}.

The continuum nature of both the node and time variables in Graphon-NDEs necessitates careful technical treatment to establish that they are well-posed. We prove that the temporal continuity of the filter evolution and the Lipschitz property of the activation function (Assumptions~\hyperref[AS0]{AS0} and~\hyperref[AS1]{AS1} below) suffice to guarantee well-posedness.
\begin{itemize}
    \item\label{AS0} \textbf{AS0.} The convolutional filters \revised{evolve} continuously in time, i.e., $\vh_{fgk}^{(\ell,t)}$ is a continuous function \revised{of} $t\in[0,T]$, for each $f,g\in [F]$, $\ell\in[L]$, $k\in\mathbb{Z}_K$. 
\end{itemize}
\newrevised{We remark that, in practice, the filter parameters $\vh_{fgk}^{(\ell,t)}$ may be chosen to be constant with respect to $t$, or parameterized as $\sum_i a_i \,\phi_i(t)$, where $\{\phi_i\}$ are fixed, continuous basis functions. In the second case, training would optimize the finite-dimensional coefficients $\{a_i\}$.}
\begin{itemize}
    \item\label{AS1} \textbf{AS1.} \revised{The activation function \(\sigma\) is  $L_\sigma$-Lipschitz continuous, i.e., $|\sigma(x) - \sigma(y)| \leq L_\sigma|x - y|$, for all $x, y \in \mathbb{R}$; and $\sigma(0) = 0.$}
\end{itemize}

\begin{theorem}[Well-posedness, proof in Appendix~\ref{Appendix: well-posedness}]\label{theorem: well-posedness}
Suppose that \hyperref[AS0]{AS0} and \hyperref[AS1]{AS1} hold. If \(\tW \in L^{\infty}(I^2)\) and \(\tZ \in L^{\infty}(I; \mathbb{R}^{1 \times F})\), then for any \(T > 0\), there exists a unique solution \(\tX \in C^1\left([0, T]; L^{\infty}(I; \mathbb{R}^{1 \times F})\right)\) to the Graphon-NDE \eqref{Graphon-NDE}.
\end{theorem}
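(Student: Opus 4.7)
The plan is to recast the Graphon-NDE \eqref{Graphon-NDE} as an abstract ordinary differential equation in the Banach space $X := L^{\infty}(I; \mathbb{R}^{1\times F})$ and appeal to the Picard--Lindelöf theorem for Banach-valued ODEs. Define the vector field $F : [0,T] \times X \to X$ by $F(t, \tX) := \Phi(\tW; \tX; \tH(t))$. The proof reduces to verifying: (i) $F(t, \cdot)$ maps $X$ into $X$; (ii) $F(t, \cdot)$ is globally Lipschitz in $\tX$ with a constant uniform over $t \in [0,T]$; and (iii) $F(\cdot, \tX)$ is continuous in $t$ for each fixed $\tX$. Together with the initial condition $\tX(0) = \mathbf{Z} \in X$, these yield a unique solution $\tX \in C^1([0,T]; X)$.

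The central estimate is the boundedness of the graphon integral operator on $X$. For $\tW \in L^{\infty}(I^2)$, a direct calculation gives $|T_{\tW}\mathbf{x}(v)| \le \|\tW\|_{\infty}\|\mathbf{x}\|_{L^\infty}$ for every $v \in I$, so $T_\tW$ is bounded on $L^\infty(I)$ with operator norm at most $\|\tW\|_{\infty}$; iterating yields $\|T_{\tW}^k \mathbf{x}\|_{L^\infty} \le \|\tW\|_{\infty}^k \|\mathbf{x}\|_{L^\infty}$. Combining this with \SigmaLipschitz --- which provides $|\rho(x)-\rho(y)| \le |x-y|$ and, via $\rho(0)=0$, also $|\rho(x)| \le |x|$ --- a layer-by-layer induction on the $L$ layers of the Graphon-NN simultaneously shows that each hidden feature remains in $X$ and that $\tX \mapsto \Phi(\tW; \tX; \tH(t))$ is globally Lipschitz with constant $L(t)$ bounded by a polynomial in the scalars $\{|\vh_{fgk}^{(\ell,t)}|\}_{f,g,k,\ell}$ and in $\|\tW\|_{\infty}$. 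By \ConvFilterLipschitz, the filter coefficients are continuous, and therefore bounded, on the compact interval $[0,T]$, which gives $\sup_{t \in [0,T]} L(t) < \infty$.

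Time continuity (iii) comes almost for free: for fixed $\tX$, the output $\Phi(\tW; \tX; \tH(t))$ is a finite composition of the pointwise-continuous activation $\rho$ with finite linear combinations of the fixed bounded operators $T_\tW^k$ weighted by the scalars $\vh_{fgk}^{(\ell,t)}$, each continuous in $t$ by \ConvFilterLipschitz. Since only finitely many layers and coefficients appear and each operation is continuous in the $L^\infty$-norm jointly in its scalar and signal arguments, the map $t \mapsto F(t, \tX) \in X$ is continuous. Having established (i)--(iii), the classical Picard--Lindelöf theorem in Banach spaces furnishes a unique classical solution $\tX \in C^1([0,T]; X)$; because the Lipschitz constant is uniform over $[0,T]$, the local solution does not blow up and extends globally across the full interval $[0,T]$.

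The main technical obstacle I anticipate is the layer-wise bookkeeping needed to propagate a clean Lipschitz estimate through $L$ compositions of $\rho$ applied to weighted sums of the powers $T_{\tW}^k$, capturing the polynomial dependence on the filter magnitudes and ensuring that its supremum over $[0,T]$ is finite. A secondary, more routine check is that the Banach-valued derivative produced by Picard--Lindelöf indeed coincides with the pointwise partial derivative $\partial_t \tX(u,t)$ appearing in \eqref{Graphon-NDE}, so that the abstract solution solves the Graphon-NDE as written. Apart from these points, the argument is a standard application of Picard--Lindelöf in $L^\infty(I;\mathbb{R}^{1\times F})$.
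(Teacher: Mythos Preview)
Your proposal is correct and follows essentially the same route as the paper: the paper carries out the contraction-mapping argument that underlies Picard--Lindel\"of by hand (defining the integral operator $\mathcal{K}\tX = \mathbf{Z} + \int_0^t \Phi(\tW;\tX;\tH(s))\,ds$, proving it is a contraction on $C([0,\tau];L^\infty)$ for small $\tau$ via the Lipschitz bound $(FKh_T)^L$ for $\Phi$, and then iterating to reach $[0,T]$), whereas you invoke Picard--Lindel\"of as a black box after verifying the same Lipschitz estimate. The key ingredient in both is exactly the layer-wise Lipschitz bound for $\Phi$ in $L^\infty$ that you sketch, which the paper isolates as a separate proposition with the explicit constant $(FKh_T)^L$.
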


\paragraph{Discussion} We remark that Assumptions \hyperref[AS0]{AS0} and \hyperref[AS1]{AS1} in Theorem \ref{theorem: well-posedness} are mild and are commonly satisfied in practical settings. GNDEs equipped with temporally continuous filters benefit from effective training methodologies, such as the Galerkin method, which represents filters as linear combinations of predefined continuous basis functions \citep{massaroli2020dissecting}. Another prominent class of GNDEs utilizes temporally piecewise constant filters \citep{massaroli2020dissecting}, which do not satisfy \hyperref[AS0]{AS0}. Nevertheless, our results remain applicable to individual time intervals, guaranteeing the existence of the graphon limit on each interval. Furthermore, standard activation functions, including ReLU, leaky ReLU, and tanh, adhere to \hyperref[AS1]{AS1} with \revised{$L_{\sigma}=1$} \citep{virmaux2018lipschitz}. \revised{
The proof of Theorem~\ref{theorem: well-posedness} follows a Banach fixed-point argument adapted from Theorem 3.2 in \citet{medvedev2014nonlinear}, where nonlinear heat equations on graph limits were studied. Our contribution is to adapt this argument to vector-valued dynamics induced by multi-layer spectral GCN architectures with time-dependent filters.}

The well-posedness result established in Theorem~\ref{theorem: well-posedness} paves the way for the subsequent convergence analysis of GNDE solutions to the Graphon-NDE solution as the sequence of structurally similar graphs converges to a graphon. Theorem~\ref{theorem: well-posedness} presents that the unique solution \(\tX\) of the Graphon-NDE is uniformly bounded, which immediately implies that \(\tX\) is square integrable, i.e., \(\tX \in C\left([0, T]; L^{2}(I; \mathbb{R}^{1 \times F})\right)\). Our forthcoming convergence results and rate estimates for GNDE solutions will be formulated in this \(L^2\)-based function space.

\begin{table}[h]
\centering
\label{tab:notation_comparison}
\renewcommand{\arraystretch}{1.25}
\begin{tabular}{l|l|l}
\toprule
\textbf{Object} & \textbf{GNDE (finite graph)} & \textbf{Graphon-NDE (continuum limit)} \\
\midrule
Neural operator 
& $\mathrm{GNN}(\ \cdot\ ;\mS,\tH(t))$ 
& $\mathrm{WNN}(\ \cdot\ ;\tW,\tH(t))$ \\
Dynamics
& $\begin{aligned}
    \frac{d}{dt}\mX(t) &= \mathrm{GNN}(\mX(t);\mS,\tH(t)) \\
    \mX(0) &= \mZ
\end{aligned}$ 
& $\begin{aligned}
    \frac{\partial}{\partial t}\tX(u,t) &= \mathrm{WNN}(\tX(u,t);\tW,\tH(t)) \\
    \tX(u,0) &= \tZ(u)
\end{aligned}$ \\

Graph structure 
& $\begin{aligned}
    &\text{Adjacency matrix $\mW \in [0,1]^{n\times n}$}\\
    &\newrevised{\text{Graph shift operator $\mS=\mW/n$}} 
\end{aligned}$
& Graphon $\tW : [0,1]^2 \to [0,1]$ \\

State representation
& $\mX : [0,T] \to \mathbb{R}^{n\times F}$ 
& $\tX : [0,1]\times[0,T] \to \mathbb{R}^{1\times F}$ \\

Initial condition
& Feature matrix $\mZ \in \mathbb{R}^{n\times F}$ 
& Feature function $\tZ : [0,1] \to \mathbb{R}^{1\times F}$ \\

\bottomrule
\end{tabular}
\caption{GNDEs on finite graphs and their infinite-node limits, Graphon-NDEs. 
Here $\tH(t) \in \mathbb{R}^{L\times F \times F \times K}$ denotes the (time-varying) trainable filter coefficients shared across graph sizes.}
\end{table}

\subsection{Trajectory-Wise Convergence}\label{WNN}

We proceed to study the convergence of GNDEs to Graphon-NDEs in terms of their solution trajectories. Let $\{\mathcal{G}_n\}_{n=1}^\infty$ be a sequence of graphs with adjacency matrices $\{\mW_{\mathcal{G}_n}\}_{n=1}^\infty$. Let the GSO $\mS_{\mathcal{G}_n}$ be defined as the adjacency matrix $\mW_{\mathcal{G}_n}$ normalized by $1/|V(\mathcal{G}_n)|$, i.e., \newrevised{$$\mS_{\mathcal{G}_n} := \mW_{\mathcal{G}_n}/|V(\mathcal{G}_n)|.$$} Recalling \eqref{GNDE}, we formulate a sequence of GNDEs as
\begin{align}\label{sequential GNDEs}
    \begin{split}
        \frac{d}{dt} \mX_{\mathcal{G}_n}(t) &= \GNN(\mX_{\mathcal{G}_n}(t);\mS_{\mathcal{G}_n}, \tH(t)),\\
        \mX_{\mathcal{G}_n}(0) &= \mZ_{\mathcal{G}_n}\in \mathbb{R}^{|V(\mathcal{G}_n)|\times F},
    \end{split}
\end{align}
where $\mZ_{\mathcal{G}_n}$ is the initial node feature matrix for graph $\mathcal{G}_n$. Below we establish the \textit{trajectory-wise} convergence of GNDE solutions to Graphon-NDE solutions.

\begin{theorem}[Trajectory-wise convergence, proof in Appendix \ref{appendix: Proof of convergence of Graphon-NDEs}]\label{theorem: convergence of solutions}
Suppose that {\color{black}\hyperref[AS0]{AS0} and \hyperref[AS1]{AS1}} hold, and let \(\tW \in L^{\infty}(I^2)\) and \(\tZ \in L^{\infty}(I; \mathbb{R}^{1 \times F})\). Let $\tX$ and $\mX_{\mathcal{G}_n}$ denote the solutions of Graphon-NDE \eqref{Graphon-NDE} and GNDE \eqref{sequential GNDEs}, respectively. If $\{(\mathcal{G}_n,\mZ_{\mathcal{G}_n})\}_{n=1}^\infty$ converges to $(\tW,\tZ)$ (cf. Definition \ref{def: graph node feature converges -- formal one}), then for any $T>0$, there exists a sequence $\{\pi_n\}_{n=1}^\infty$ of permutations such that $$\lim_{n\to\infty}\|\tX-\tX_{\pi_n(\mathcal{G}_n)}\|_{C([0,T];L^2(I;\mathbb{R}^{1\times F}))}=0,$$ 
where $\tX_{\pi_n(\mathcal{G}_n)}$ denotes the induced graphon feature function of $\mX_{\pi_n(\mathcal{G}_n)}$.
\end{theorem}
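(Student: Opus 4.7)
The plan is to reduce the trajectory-wise convergence of the discrete ODE system to a stability estimate between two \emph{Graphon-NDEs} living in a common function space, then close with a Grönwall-type argument in $L^2(I;\mathbb{R}^{1\times F})$. The key reduction is that the piecewise-constant lift $\tX_{\pi_n(\mathcal{G}_n)}(u,t)$ of the GNDE solution itself solves a Graphon-NDE with induced graphon $\tW_{\pi_n(\mathcal{G}_n)}$ and initial condition $\mathbf{Z}_{\pi_n(\mathcal{G}_n)}$, because the normalization $\mS_{\mathcal{G}_n}=\mW_{\mathcal{G}_n}/|V(\mathcal{G}_n)|$ makes the matrix multiplication reproduce the graphon integral operator $T_{\tW_{\pi_n(\mathcal{G}_n)}}$ acting on piecewise-constant functions. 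By Definition~\ref{def: graph node feature converges -- formal one}, I would fix a single permutation sequence $\{\pi_n\}\in\mathfrak{P}$ for which $\|\tW_{\pi_n(\mathcal{G}_n)}-\tW\|_{\square}\to 0$ and $\|\mathbf{Z}_{\pi_n(\mathcal{G}_n)}-\mathbf{Z}\|_{L^2(I;\mathbb{R}^{1\times F})}\to 0$ simultaneously, and use Theorem~\ref{theorem: well-posedness} to guarantee that both the target and approximating trajectories exist and are uniformly bounded on $[0,T]$.

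Writing $\tE_n(\cdot,t):=\tX(\cdot,t)-\tX_{\pi_n(\mathcal{G}_n)}(\cdot,t)$, I would telescope
\begin{equation*}
\partial_t \tE_n = \bigl[\Phi(\tW;\tX;\tH(t))-\Phi(\tW;\tX_{\pi_n(\mathcal{G}_n)};\tH(t))\bigr] + \bigl[\Phi(\tW;\tX_{\pi_n(\mathcal{G}_n)};\tH(t))-\Phi(\tW_{\pi_n(\mathcal{G}_n)};\tX_{\pi_n(\mathcal{G}_n)};\tH(t))\bigr].
\end{equation*}
The first bracket is a Lipschitz-in-state perturbation: by \hyperref[AS1]{AS1} (non-expansive $\rho$ with $\rho(0)=0$) and $\|T_\tW\|_{L^2\to L^2}\leq \|\tW\|_{L^\infty}\leq 1$, expanding the $L$-layer Graphon-NN yields an $L^2$-bound of the form $L(t)\|\tE_n(\cdot,t)\|_{L^2}$, where $L(t)$ is a polynomial in the filter coefficients $\{\vh^{(\ell,t)}_{fgk}\}$; by \hyperref[AS0]{AS0} the map $t\mapsto L(t)$ is continuous and hence uniformly bounded on $[0,T]$. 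For the second bracket, fixing the input function $\tX_{\pi_n(\mathcal{G}_n)}(\cdot,t)$ and invoking a Graphon-NN transferability estimate in the style of \citet{ruiz2020graphon}, the cut-norm convergence $\|\tW_{\pi_n(\mathcal{G}_n)}-\tW\|_\square\to 0$ together with the uniform $L^\infty$-bound on $\tX_{\pi_n(\mathcal{G}_n)}$ yields $\sup_{t\in[0,T]}\|\text{second bracket}\|_{L^2}=:\epsilon_n\to 0$. Integrating the resulting inequality
\begin{equation*}
\|\tE_n(\cdot,t)\|_{L^2}\leq \|\mathbf{Z}-\mathbf{Z}_{\pi_n(\mathcal{G}_n)}\|_{L^2}+\int_0^t \bigl(L(s)\|\tE_n(\cdot,s)\|_{L^2}+\epsilon_n\bigr)\,ds
\end{equation*}
and applying Grönwall's lemma gives $\sup_{t\in[0,T]}\|\tE_n(\cdot,t)\|_{L^2}\leq e^{\int_0^T L(s)\,ds}\bigl(\|\mathbf{Z}-\mathbf{Z}_{\pi_n(\mathcal{G}_n)}\|_{L^2}+T\epsilon_n\bigr)\to 0$.

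The main obstacle is controlling $\epsilon_n$ \emph{uniformly} in $t\in[0,T]$. Cut-norm convergence only provides operator-norm convergence of $T_{\tW_{\pi_n(\mathcal{G}_n)}}-T_\tW$ as maps $L^\infty\to L^2$, but in the second bracket the operator acts on the time- and $n$-varying function $\tX_{\pi_n(\mathcal{G}_n)}(\cdot,t)$ through $L$ compositions interleaved with Lipschitz activations. I would overcome this by first deriving a uniform-in-$(n,t)$ $L^\infty$-bound on the family $\{\tX_{\pi_n(\mathcal{G}_n)}(\cdot,t)\}$ via Grönwall applied to the $L^\infty$-norm of the $n$-th Graphon-NDE (using $\|T_\tW\|_{L^\infty\to L^\infty}\leq 1$ whenever $\|\tW\|_\infty\leq 1$), and then propagating the operator-norm convergence layer-by-layer through the Graphon-NN via a standard telescoping estimate that exploits both the Lipschitz structure of $\rho$ and the equicontinuity of $\{\tX_{\pi_n(\mathcal{G}_n)}(\cdot,t)\}$ in $t$ (the latter obtained from the uniform boundedness of $\partial_t\tX_{\pi_n(\mathcal{G}_n)}$). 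Once a $t$-uniform $\epsilon_n\to 0$ is in hand, the Grönwall closure above produces the trajectory-wise convergence in $C([0,T];L^2(I;\mathbb{R}^{1\times F}))$.
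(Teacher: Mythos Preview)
Your overall architecture---lift the GNDE to a Graphon-NDE via the piecewise-constant embedding, telescope the right-hand side, and close with a Gr\"onwall argument---is exactly what the paper does. The difference, and the place where your plan actually breaks, is in how you handle the graphon perturbation term.

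You assert that cut-norm convergence ``only provides operator-norm convergence of $T_{\tW_{\pi_n(\mathcal{G}_n)}}-T_\tW$ as maps $L^\infty\to L^2$'' and then propose to compensate by establishing a uniform-in-$(n,t)$ $L^\infty$ bound on $\tX_{\pi_n(\mathcal{G}_n)}(\cdot,t)$. Both parts are problematic. First, for uniformly bounded kernels, cut-norm convergence is in fact \emph{equivalent} to $L^2(I)\to L^2(I)$ operator-norm convergence of the associated integral operators (Lemma~E.6 of \citet{janson2010graphons}); the paper invokes this directly, so the ``main obstacle'' you identify does not exist. Second, your proposed workaround fails under the theorem's hypotheses: Definition~\ref{def: graph node feature converges -- formal one} only gives $\|\mathbf{Z}_{\pi_n(\mathcal{G}_n)}-\mathbf{Z}\|_{L^2}\to 0$, with no control on $\|\mathbf{Z}_{\pi_n(\mathcal{G}_n)}\|_{L^\infty}$, so the $L^\infty$ Gr\"onwall you want to launch has no uniform starting bound.

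Once you use the $L^2\to L^2$ operator-norm convergence, everything goes through in $L^2$. The paper packages the telescoping into a single stability lemma for Graphon-NNs (Lemma~\ref{lemma: transferbility of Graphon NN}) arranged so that the operator difference multiplies $\|\tX\|_{C([0,T];L^2)}$, the \emph{fixed} limit trajectory, rather than the $n$-dependent one; with your telescoping order you would instead need $\sup_n\|\tX_{\pi_n(\mathcal{G}_n)}\|_{C([0,T];L^2)}<\infty$, which is still available from $L^2$-boundedness of $\mathbf{Z}_{\pi_n(\mathcal{G}_n)}$ and an $L^2$ Gr\"onwall, so either order works. A minor stylistic difference: the paper differentiates $\|\Delta(\cdot,t)\|_{L^2}^2$ and applies a generalized Gr\"onwall inequality (Lemma~\ref{lemma: generalized Gronwall's inequality}) to $\dot\delta\leq a\delta+b\sqrt\delta$, whereas your integral-form Gr\"onwall on $\|\tE_n(\cdot,t)\|_{L^2}$ is equally valid and slightly more direct.
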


\paragraph{Discussion} 
The norm in the function space \( C([0,T]; L^2(I; \mathbb{R}^{1\times F})) \) involves taking the supremum over \( t \in [0,T] \). Consequently, the convergence we establish is uniform in time; that is, as \( n \to \infty \), the approximation error diminishes uniformly along the entire trajectory, which consists of infinitely many intermediate states. In contrast, the convergence results in the literature for GNNs with finitely many layers~\citep{ruiz2020graphon,keriven2020convergence,maskey2023transferability} establish convergence only at the discrete set of layer outputs as the graph size grows. The trajectory-wise convergence we prove for GNDEs is therefore \revised{strictly} stronger. Moreover, we remark that the established trajectory-wise convergence relies on Gr\"onwall-type inequalities from dynamical systems and stability theory, \revised{which are tools typically not required in the analysis of discrete GNNs.}

Technically, a key challenge in the convergence analysis is the dimensional mismatch between the matrix-valued output of GNDEs and the function-valued output of Graphon-NDEs, making direct comparison infeasible. This is resolved by reformulating GNDEs as equivalent Graphon-NDEs using the induced (piecewise constant) graphon representation, which enables both outputs to be compared within the same underlying function space. 

\newrevised{Theorem \ref{theorem: convergence of solutions} also establishes a stability property of GNDEs: their behavior remains well controlled as the number of nodes increases along structurally similar graphs.} This hinges on the temporal continuity of convolutional filters and Lipschitz conditions for the activation function. The latter assumption aligns with recent empirical studies of GNNs \citep{dasoulas2021lipschitz, arghal2022robust}, which demonstrate that enhanced Lipschitz continuity in GNNs improves robustness, generalization, and performance on large-scale tasks. \revised{Moreover, Theorem \ref{theorem: convergence of solutions} identifies \( C([0,T]; L^2(I; \mathbb{R}^{1\times F})) \) as the function space in which GNDE solutions converge in the continuum regime.} This complements recent advancements in the study of GNN limits and their expressive capabilities \citep{keriven2021universality, keriven2023functions}. 

\subsection{Convergence Rates}

In this section, we use graphons as generative models to construct convergent graph sequences: weighted graphs sampled from smooth graphons and unweighted graphs sampled from $\{0,1\}$-valued (discontinuous) graphons. We further refine our convergence theorem by deriving explicit convergence rates for each case.

\subsubsection{Weighted Graphs}\label{section: weighted graphs}
Let $\tW:{I^2}\rightarrow I$ be a graphon and $\tZ\in L^{\infty}(I;\mathbb{R}^{1\times F})$ be a graphon feature function. For each $n\in\mathbb{N}$, we partition the unit interval \( I \) into \( n \) sub-intervals by defining \( u_i := (i-1)/n \) and \( I_i := [u_i, u_{i+1}) \) for \( i \in [n]\). We construct a sequence of weighted graphs $\{\mathcal{G}_n\}_{n=1}^\infty$, where each graph is defined as $\mathcal{G}_n := \langle [n], [n] \times [n],\mW_{\mathcal{G}_n} \rangle,$
with adjacency matrix $\mW_{\mathcal{G}_n}\in\mathbb{R}^{n\times n}$ generated by direct sampling on the graphon $\tW$ over the mesh grid as
\begin{equation}\label{coeff W for weighted graphs}
[{\mW}_{\mathcal{G}_n}]_{ij} := \tW(u_i,u_j),\quad i,j\in [n].
\end{equation}
  The corresponding node feature matrix $\mZ_{\mathcal{G}_n}\in\mathbb{R}^{n\times F}$ of graph $\mathcal{G}_n$ is generated by sampling on the graphon feature function $\tZ$ as
\begin{equation}\label{coeff G for weighted graphs}
    [\mZ_{\mathcal{G}_n}]_{i,:} := \tZ(u_i), \quad i \in [n].
\end{equation}

This weighted graph model is particularly well-suited for applications requiring fully connected network structures, such as dense communication networks and recommendation systems \citep{barrat2004architecture,newman2004analysis,aggarwal2016recommender}. In these settings, the graphons are typically assumed to be \revised{H\"older} continuous, reflecting the fact that interactions between entities (e.g., users, devices, or items) evolve gradually and predictably. We summarize the assumptions below. 

\begin{itemize}
    \item\label{AS2} \textbf{AS2.} The graphon \( \tW \) is \( (A_1,\revised{\alpha_1}) \)-\revised{H\"older continuous for $A_1>0$ and $\alpha_1\in(0,1]$}, that is, \(|\tW(u_2, v_2) - \tW(u_1, v_1)| \leq A_1(|u_2 - u_1| + |v_2 - v_1|)^\revised{\alpha_1}\), for all $v_1,v_2,u_1,u_2\in I$.

    \item\label{AS3} \textbf{AS3.} The initial graphon feature function \( \tZ=[Z_f:f\in[F]]\in L^{\infty}(I;\mathbb{R}^{1\times F}) \) is \revised{$(A_2,\alpha_2)$-H\"older continuous for $A_2>0$ and $\alpha_2\in(0,1]$}, that is, for each $f\in\mathbb[F]$, $|Z_f(u_2)-Z_f(u_1)|\leq A_2|u_2-u_1|^{\revised{\alpha_2}}$, for all $u_1,u_2\in I$.

\end{itemize}
\begin{theorem}[Rates for weighted graphs, proof in Appendix \ref{Appendix: proof of convergence rates}]\label{theorem: rate of Lipschitz}
Suppose that {\color{black}\hyperref[AS0]{AS0}-\hyperref[AS3]{AS3}} hold. Let the adjacency matrices and node feature matrices of graphs $\{\mathcal{G}_n\}_{n=1}^\infty$ be generated according to \eqref{coeff W for weighted graphs} and \eqref{coeff G for weighted graphs}, respectively. Let $T\in\mathbb{R}^+$ and \revised{$\alpha:=\min\{\alpha_1,\alpha_2\}$}. Let $\tX$ be the solution of Graphon-NDE \eqref{Graphon-NDE} and $\tX_{\mathcal{G}_n}$ be the induced graphon function of the solution $\mX_{\mathcal{G}_n}$ of GNDE \eqref{sequential GNDEs}. Then it holds that 
\begin{equation}\label{eq: rate of weighted graph}
    \|\tX-\tX_{\mathcal{G}_n}\|_{C([0,T];L^2(I;\mathbb{R}^{1\times F}))}\leq \frac{C}{n^{\alpha}},
\end{equation}
where $C$ is constant independent of $n$ and with explicit formula provided in equation \eqref{constant C in first convergence rate}. As a result, for any $n_1,n_2\in\mathbb{N}$, it holds that
\begin{equation}\label{weighted graph: size transferability bounds}
\|\tX_{\mathcal{G}_{n_1}}-\tX_{\mathcal{G}_{n_2}}\|_{C([0,T];L^2(I;\mathbb{R}^{1\times F}))}\leq C\left(\frac{1}{n_1^{\alpha}}+\frac{1}{n_2^{\alpha}}\right).
\end{equation}

\end{theorem}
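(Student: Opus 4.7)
The plan is to lift both equations into a common function space and bound the resulting error via a Grönwall-type argument. Specifically, using the induced-graphon embedding from the proof of Theorem~\ref{theorem: convergence of solutions}, $\mX_{\mathcal{G}_n}$ corresponds to $\tX_{\mathcal{G}_n} \in C([0,T]; L^2(I;\mathbb{R}^{1\times F}))$, which satisfies
\begin{equation*}
\frac{\partial}{\partial t}\tX_{\mathcal{G}_n}(u,t) = \Phi\bigl(\tW_{\mathcal{G}_n}; \tX_{\mathcal{G}_n}(u,t); \tH(t)\bigr),\qquad \tX_{\mathcal{G}_n}(u,0) = \mathbf{Z}_{\mathcal{G}_n}(u),
\end{equation*}
where $\tW_{\mathcal{G}_n}$ and $\mathbf{Z}_{\mathcal{G}_n}$ are the piecewise constant step-function interpolants from \eqref{induced graphon Wn} and \eqref{induced graph features Zn}. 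Setting $\tE(\cdot,t) := \tX(\cdot,t) - \tX_{\mathcal{G}_n}(\cdot,t)$ and writing the integral form of both equations, I would decompose the residual by inserting $\Phi(\tW_{\mathcal{G}_n};\tX;\tH)$ to get
\begin{equation*}
\tE(\cdot,t) = \bigl(\mathbf{Z} - \mathbf{Z}_{\mathcal{G}_n}\bigr) + \int_0^t \bigl[\Phi(\tW;\tX;\tH) - \Phi(\tW_{\mathcal{G}_n};\tX;\tH)\bigr]\,ds + \int_0^t \bigl[\Phi(\tW_{\mathcal{G}_n};\tX;\tH) - \Phi(\tW_{\mathcal{G}_n};\tX_{\mathcal{G}_n};\tH)\bigr]\,ds.
\end{equation*}

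Next, I would estimate each piece in the $L^2(I;\mathbb{R}^{1\times F})$ norm. Assumption \hyperref[AS3]{AS3} gives $\|\mathbf{Z} - \mathbf{Z}_{\mathcal{G}_n}\|_{L^2} \leq A_2/n$, while Assumption \hyperref[AS2]{AS2} (H\"older continuity with exponent $\alpha$) yields the pointwise bound $|\tW(u,v) - \tW_{\mathcal{G}_n}(u,v)| \leq A_1 (2/n)^\alpha$ on each rectangle $I_i\times I_j$, so $\|\tW - \tW_{\mathcal{G}_n}\|_{L^2(I^2)} = \mathcal{O}(1/n^\alpha)$. For the graphon-NN $\Phi$, I would establish two Lipschitz-type bounds by iterating through its $L$ layers: (i) stability in the shift operator, $\|\Phi(\tW;\tX;\tH) - \Phi(\tW_{\mathcal{G}_n};\tX;\tH)\|_{L^2} \leq C_{\tW}(t) \, \|\tW - \tW_{\mathcal{G}_n}\|_{L^2(I^2)}$, using \hyperref[AS1]{AS1} to peel off activations, the uniform bound on $\tX$ from Theorem~\ref{theorem: well-posedness}, and the temporal continuity of filters from \hyperref[AS0]{AS0} to get a uniform-in-$t$ constant on $[0,T]$; and (ii) stability in the input, $\|\Phi(\tW_{\mathcal{G}_n};\tX;\tH) - \Phi(\tW_{\mathcal{G}_n};\tX_{\mathcal{G}_n};\tH)\|_{L^2} \leq C_{\tX}(t) \, \|\tX - \tX_{\mathcal{G}_n}\|_{L^2}$, exploiting the fact that the integral operator $T_{\tW_{\mathcal{G}_n}}$ has operator norm uniformly bounded by $\|\tW\|_{L^\infty} \leq 1$.

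Combining the three bounds yields an inequality of the form
\begin{equation*}
\|\tE(\cdot,t)\|_{L^2} \leq \frac{A_2}{n} + \frac{C_1 T}{n^\alpha} + C_2 \int_0^t \|\tE(\cdot,s)\|_{L^2}\, ds,
\end{equation*}
with $C_1, C_2$ depending on $\sup_{t\in[0,T]}|\tH(t)|$, $A_1$, $F$, $K$, $L$, and $\|\mathbf{Z}\|_{L^\infty}$, all finite by \hyperref[AS0]{AS0} and Theorem~\ref{theorem: well-posedness}. Grönwall's inequality then gives $\|\tE(\cdot,t)\|_{L^2} \leq \bigl(A_2/n + C_1 T/n^\alpha\bigr)e^{C_2 T}$; since $\alpha \in (0,1]$, the $1/n$ term is absorbed into $1/n^\alpha$, and taking the supremum over $t\in[0,T]$ produces \eqref{eq: rate of weighted graph} with a constant $C$ that I would record explicitly for \eqref{constant C in first convergence rate}. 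The transferability bound \eqref{weighted graph: size transferability bounds} then follows immediately from the triangle inequality applied to $\tX_{\mathcal{G}_{n_1}} - \tX_{\mathcal{G}_{n_2}} = (\tX_{\mathcal{G}_{n_1}} - \tX) + (\tX - \tX_{\mathcal{G}_{n_2}})$, using \eqref{eq: rate of weighted graph} for both graph sizes.

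The main technical obstacle will be the per-layer Lipschitz bound (i) in the shift operator: tracking how the difference $\tW - \tW_{\mathcal{G}_n}$ propagates through the nested composition $T_{\tW}^k$ versus $T_{\tW_{\mathcal{G}_n}}^k$ requires a telescoping expansion $T_{\tW}^k - T_{\tW_{\mathcal{G}_n}}^k = \sum_{j=0}^{k-1} T_{\tW}^{k-1-j}(T_{\tW} - T_{\tW_{\mathcal{G}_n}})T_{\tW_{\mathcal{G}_n}}^j$ and a careful $L^2$--$L^\infty$ interplay to keep the uniform bound on intermediate feature functions under control across all $L$ layers and uniformly in $t\in[0,T]$. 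Once this stability constant is pinned down, the remainder of the argument is a routine Grönwall estimate.
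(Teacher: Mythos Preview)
Your proposal is correct and follows essentially the same route as the paper: lift the GNDE to the induced-graphon representation, invoke Graphon-NN stability in both the shift operator and the input (the paper packages these as Lemma~\ref{lemma: transferbility of Graphon NN}, with your telescoping bound on $T_\tW^k - T_{\tW_{\mathcal{G}_n}}^k$ appearing as Lemma~\ref{lemma: difference of powers}), estimate $\|\tW-\tW_{\mathcal{G}_n}\|_{L^2(I^2)}$ and $\|\mathbf{Z}-\mathbf{Z}_{\mathcal{G}_n}\|_{L^2}$ via \hyperref[AS2]{AS2}--\hyperref[AS3]{AS3}, and close with a Gr\"onwall argument. The only organizational difference is that the paper first isolates a general stability bound (Theorem~\ref{theorem: X-Xn leq G-Gn + TW-TWn}) by differentiating the \emph{squared} norm $\|\Delta(\cdot,t)\|_{L^2}^2$ and applying a generalized Gr\"onwall inequality (Lemma~\ref{lemma: generalized Gronwall's inequality}), whereas you work directly in integral form on $\|\tE(\cdot,t)\|_{L^2}$ with the standard Gr\"onwall; this yields a slightly looser constant ($BTe^{CT}$ in place of the paper's $B(e^{CT}-1)/C$) but the same $\mathcal{O}(1/n^\alpha)$ rate.
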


\paragraph{Discussion} \revised{Theorem~\ref{theorem: rate of Lipschitz} establishes an $\mathcal{O}(1/n^\alpha)$ convergence rate for weighted graphs and initial features sampled from their respective underlying H\"older continuous functions.} This rate is known to be optimal for approximating H\"older continuous functions \revised{by piecewise constant functions}~\citep{schumaker2007spline}. Furthermore, the rate for GNDEs we obtain is trajectory-wise (i.e., uniform-in-time), which is strictly stronger than the linear convergence rates established for discrete-layer GNNs on Lipschitz continuous graphons $(\revised{\alpha_1}=1)$ ~\citep{maskey2023transferability,krishnagopal2023graph}. %

\subsubsection{Unweighted Graphs}\label{section: unweighted graphs}
Let $\tW: {I^2}\rightarrow \{0,1\}$ be a binary-valued graphon and $\tZ\in L^{\infty}(I;\mathbb{R}^{1\times F})$ be a graphon feature function. We denote by $\tW^+$ the support set of function $\tW$, that is $\tW^+:=\{(u,v):\tW(u, v) = 1\}$. We construct a sequence of unweighted graphs $\{\mathcal{G}_n\}_{n=1}^\infty$ where $\mathcal{G}_n := \langle [n], E(\mathcal{G}_n),\mW_{\mathcal{G}_n} \rangle$, with edge set $E(\mathcal{G}_n)$ defined by $E(\mathcal{G}_n) := \{(i, j) \in [n]\times[n]: (I_i \times I_j) \cap \tW^+ \neq \emptyset \}$, and adjacency matrix $\mW_{\mathcal{G}_n} $ defined as
\begin{equation}\label{coeff W for simple graphs}
    [ \mW_{\mathcal{G}_n} ]_{ij} := \begin{cases}
        1,& \text{if } (i,j) \in E(\mathcal{G}_n),\\
        0, & \text{otherwise},
    \end{cases}
\end{equation} 
The corresponding node feature matrix $\mZ_{\mathcal{G}_n}$ for graph $\mathcal{G}_n$ is generated, from a Lipschitz continuous graphon feature function $\tZ$, as
\begin{equation}\label{coeff G for simple graphs}
[\mZ_{\mathcal{G}_n}]_{i,:} := \frac{1}{|I_i|}\int_{I_i} \tZ(u) \, du, \quad i \in [n].
\end{equation}
This model is  for generating  network structures with binary relations, which are prevalent in social networks, citation graphs, and biological networks \citep{jeong2000large,milo2002network,girvan2002community,leskovec2009community,easley2010networks}. 

The discontinuity of graphons prevents \hyperref[AS2]{AS2} from being satisfied. To tackle this issue, we \revised{adopt} the upper box-counting dimension \citep{falconer2014fractal} for the boundary $\partial\tW^+$, where $\tW^+$ is the support of the graphon $\tW$. \revised{The application of the box-counting dimension to graph limits was established by \citet{medvedev2014nonlinear} for the nonlinear heat equation on dense graphs.} We review the definition of upper box-counting dimension as follows. Let $\Omega$ be any non-empty bounded subset of $\mathbb{R}^2$ and let $\mathcal{N}_\delta(\Omega)$ be the number of $\delta$-mesh cubes that intersect $\Omega$. The upper box-counting dimensions of $\Omega$ is defined as 
\begin{equation}\label{definition of dim box counting}
\overline{\operatorname{dim}}_{\mathrm{B}} \Omega:=\varlimsup_{\delta \rightarrow 0} \frac{\log \mathcal{N}_\delta(\Omega)}{-\log \delta}.
\end{equation}
It is clear that 
$\overline{\operatorname{dim}}_{\mathrm{B}}(\Omega)\in [0,2]$ for any non-empty bounded subset $\Omega$ of $\mathbb{R}^2$. As a simple example, the straight line $\{(x,0):x\in [0,1]\}$ has an upper box-counting dimension of $1$. 

\begin{theorem}[Rates for unweighted graphs, proof in Appendix \ref{Appendix: proof of convergence rates}]\label{theorem: rate of simple graph}
Suppose that {\color{black}\hyperref[AS0]{AS0}, \hyperref[AS1]{AS1} and \hyperref[AS3]{AS3}} hold. Let $\tW:{I^2}\to\{0,1\}$ be a graphon for unweighted graphs with $b:=\overline{\mathrm{dim}}_{\mathrm{B}}(\partial \tW^+)\in[1,2)$. Let the adjacency matrices and node feature matrices of graphs $\{\mathcal{G}_n\}_{n=1}^\infty$ be generated according to \eqref{coeff W for simple graphs} and \eqref{coeff G for simple graphs}, respectively. Let $T\in\mathbb{R}^+$. Let $\tX$ be the solution of Graphon-NDE \eqref{Graphon-NDE} and $\tX_{\mathcal{G}_n}$ be the induced graphon function of the solution $\mX_{\mathcal{G}_n}$ of GNDE \eqref{sequential GNDEs}. Then for any $\epsilon\in(0,2-b)$, there exists a positive integer $N_{\epsilon,\tW}$ (depending on $\epsilon$ and $\tW$) such that when $n>N_{\epsilon,\tW}$, it holds that 
\begin{equation}\label{eq: rate of unweighted graph}
    \|\tX-\tX_{\mathcal{G}_n}\|_{C([0,T];L^2(I;\mathbb{R}^{1\times F}))}\leq \frac{\widetilde C}{n^{\revised{\min\{1-\frac{b+\epsilon}{2},\alpha_2\}}}},
\end{equation}
where $\widetilde C$ is a constant independent of $n$, and with explicit formula provided in equation \eqref{tilde C in rate of simple graphs}. As a result, for any $n_1,n_2>N_{\epsilon,\tW}$, it holds that
\begin{equation}\label{unweighted graph: size transferability bounds}
\|\tX_{\mathcal{G}_{n_1}}-\tX_{\mathcal{G}_{n_2}}\|_{C([0,T];L^2(I;\mathbb{R}^{1\times F}))}\leq \widetilde C\left(\frac{1}{n_1^{\revised{\min\{1-\frac{b+\epsilon}{2},\alpha_2\}}}}+\frac{1}{n_2^{\revised{\min\{1-\frac{b+\epsilon}{2},\alpha_2\}}}}\right). 
\end{equation} 

\end{theorem}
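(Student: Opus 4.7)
The plan is to adapt the proof of Theorem~\ref{theorem: rate of Lipschitz} to the discontinuous graphon setting by replacing the pointwise H\"older bound on $\tW-\tW_{\mathcal{G}_n}$ (which is unavailable here, since $\tW$ is only $\{0,1\}$-valued) by an $L^2(I^2)$-bound derived from the box-counting dimension of $\partial\tW^+$. Everything else, in particular the Gr\"onwall propagation from graphon/feature discrepancy to trajectory discrepancy, should carry over essentially verbatim from the weighted case, since that argument relies only on $L^2$-based stability of the Graphon-NN map $\Phi$ rather than on any regularity of $\tW$.

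First I would establish the key geometric estimate
\begin{equation*}
\|\tW-\tW_{\mathcal{G}_n}\|_{L^2(I^2)} \;\leq\; n^{-(1-(b+\epsilon)/2)}\quad\text{for all }n>N_{\epsilon,\tW}.
\end{equation*}
Since both $\tW$ and $\tW_{\mathcal{G}_n}$ take values in $\{0,1\}$ and $\tW_{\mathcal{G}_n}$ is constant on each cube $I_i\times I_j$ (equal to $1$ iff $(I_i\times I_j)\cap\tW^+\neq\emptyset$), the two functions automatically agree on every cube lying entirely inside $\tW^+$ or entirely outside $\tW^+$, and differ by at most $1$ only on cubes that meet $\partial\tW^+$. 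By the definition \eqref{definition of dim box counting} of $\overline{\dim}_B$, for any $\epsilon\in(0,2-b)$ there exists $N_{\epsilon,\tW}\in\mathbb{N}$ such that, for all $n>N_{\epsilon,\tW}$, the number of $1/n$-mesh cubes intersecting $\partial\tW^+$ is at most $n^{b+\epsilon}$. Each such cube contributes area $1/n^2$, so the total disagreement region has area at most $n^{b+\epsilon-2}$, and taking square roots gives the claim.

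Second, I would bound the initial-data error using \hyperref[AS3]{AS3}. Since $[\mZ_{\mathcal{G}_n}]_{i,:}=|I_i|^{-1}\int_{I_i}\mathbf{Z}(u)\,du$ is the cell-average of an $A_2$-Lipschitz function on an interval of length $1/n$, a Poincar\'e-type estimate gives $\|\mathbf{Z}-\mathbf{Z}_{\mathcal{G}_n}\|_{L^2(I;\mathbb{R}^{1\times F})}\leq A_2\sqrt{F}/n$. Then I would invoke the $L^2$-stability of $\Phi$ in both of its arguments: under \hyperref[AS0]{AS0}, \hyperref[AS1]{AS1} and the a-priori $L^\infty$-boundedness of trajectories supplied by Theorem~\ref{theorem: well-posedness}, the map $\Phi$ is jointly Lipschitz in the graphon argument (in the $L^2(I^2)$ induced norm on $T_\tW$) and in the feature argument (in the $L^2(I;\mathbb{R}^{1\times F})$ norm), with time-dependent Lipschitz constants that are uniformly bounded on $[0,T]$ by continuity of the filter coefficients. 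Subtracting the Graphon-NDE \eqref{Graphon-NDE} from the induced Graphon-NDE reformulation of the GNDE \eqref{sequential GNDEs}, taking the $L^2(I;\mathbb{R}^{1\times F})$ norm and applying Gr\"onwall's inequality (exactly as in the proof of Theorem~\ref{theorem: rate of Lipschitz}) yields
\begin{equation*}
\|\tX-\tX_{\mathcal{G}_n}\|_{C([0,T];L^2(I;\mathbb{R}^{1\times F}))}
\;\leq\; e^{C_1 T}\Bigl(\|\mathbf{Z}-\mathbf{Z}_{\mathcal{G}_n}\|_{L^2} + C_2 T\,\|\tW-\tW_{\mathcal{G}_n}\|_{L^2(I^2)}\Bigr),
\end{equation*}
for constants $C_1, C_2$ depending only on $F,K,L,T$ and $\sup_{t}|\vh^{(\ell,t)}_{fgk}|$. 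Substituting the two estimates above, and noting that the $n^{-(1-(b+\epsilon)/2)}$ term dominates the $1/n$ initial-data term (since $b\geq 1$ forces $1-(b+\epsilon)/2\leq 1/2<1$), gives \eqref{eq: rate of unweighted graph} with the explicit constant $\widetilde C$ assembled from $C_1,C_2,A_2,F,T$. The transferability bound \eqref{unweighted graph: size transferability bounds} is then an immediate triangle inequality.

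The main obstacle, and the only place where this proof genuinely diverges from the weighted case, is the first step. Two subtleties need attention: (i) the definition \eqref{definition of dim box counting} uses generic $\delta$-mesh cubes, while we actually need the count of cubes in the specific aligned grid $\{I_i\times I_j\}_{i,j\in[n]}$, so one must check that up to a universal multiplicative constant the two counts agree (which they do, since any aligned cube can be covered by a bounded number of mesh cubes at the same scale), and this constant can be absorbed into $\epsilon$ or into $N_{\epsilon,\tW}$; (ii) the bound $\mathcal{N}_{1/n}(\partial\tW^+)\leq n^{b+\epsilon}$ is only an asymptotic consequence of the $\limsup$ in \eqref{definition of dim box counting}, which is precisely the reason the conclusion of the theorem is stated only for $n>N_{\epsilon,\tW}$. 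Once this geometric estimate is in hand, the trajectory-wise control is routine Gr\"onwall, reusing the machinery already developed for Theorem~\ref{theorem: rate of Lipschitz}.
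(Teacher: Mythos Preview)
Your proposal is correct and follows essentially the same route as the paper: bound $\|\tW-\tW_{\mathcal{G}_n}\|_{L^2(I^2)}$ via the box-counting count $\mathcal{N}_{1/n}(\partial\tW^+)\le n^{b+\epsilon}$, bound $\|\mathbf{Z}-\mathbf{Z}_{\mathcal{G}_n}\|_{L^2}$ using the Lipschitz assumption on $\mathbf{Z}$ (the paper does this via a best-constant-approximation lemma rather than Poincar\'e, but the outcome is the same), and then feed both into the Gr\"onwall-based stability estimate from Theorem~\ref{theorem: X-Xn leq G-Gn + TW-TWn}. Your subtlety (i) is in fact a non-issue, since the paper's definition~\eqref{definition of dim box counting} already uses $\delta$-mesh cubes, which for $\delta=1/n$ coincide exactly with the grid $\{I_i\times I_j\}$.
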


\paragraph{Discussion}The $\epsilon>0$ in Theorem \ref{theorem: rate of simple graph} is a pre-specified parameter that can be chosen arbitrarily small, making the convergence rate in Theorem \ref{theorem: rate of simple graph} \emph{almost} $\mathcal{O}\left(\revised{1/n^{\min\{1 - b/2,\alpha_2\}}}\right)$. \newrevised{Here, the Graphon-NDE depends on the equivalence class of $\tW$, whereas the convergence rate is tied to the specific representative used to generate the graphs $\{\mathcal{G}_n\}$. Accordingly, the box-counting dimension $b$ reflects the geometric regularity of the chosen representative, which may change under modifications on null sets or measure-preserving rearrangements, rather than being determined solely by the equivalence class.} \revised{To simplify the following discussion, we specifically focus on the case where the initial feature function is Lipschitz continuous ($\alpha_2=1$ in \hyperref[AS3]{AS3}). In this setting, the convergence rate is dominated by the graphon approximation error, yielding a rate of approximately $\mathcal{O}\parens{1/n^{1-b/2}}$.} In contrast to the rate for weighted graphs established in Theorem \ref{theorem: rate of Lipschitz}, the rate for unweighted graphs relies on the complexity of the boundary $\partial \tW^+$, measured by its upper box-counting dimension $b$. The more intricate \( \partial \tW^+ \) is, leading to the larger value of \( b \), the poorer the convergence rate becomes. For boundaries with box-counting dimension \( b = 1 \) (e.g., smooth curves or piecewise linear segments), convergence is relatively fast at rate \( \mathcal{O}(1/n^{0.5}) \). For boundaries with greater fractal complexity, where \( b \in (1, 2) \) (e.g., moderately irregular or self-similar structures such as the hexaflake), convergence slows to \( \mathcal{O}(1/n^c) \) for some \( c \in (0, 0.5) \). We note that numerical experiments (see hierarchical stochastic block model (HSBM) and hexaflake graphons in Figure~\ref{fig:multi_graph_experiment}) suggest that our theoretical rate for unweighted graphs may be \emph{pessimistic}, reflecting a worst-case scenario. Empirically, faster convergence rates are observed. In addition, we find that the HSBM graphon appears to yield faster convergence than the hexaflake, likely due to its smaller box-counting dimension. This observation is consistent with the trend indicated in Theorem~\ref{theorem: rate of simple graph}, where a larger box-counting dimension corresponds to a slower convergence rate.

We mention that the graphons for unweighted graphs are discontinuous and prior studies on GNNs \citep{ruiz2021graphon,ruiz2021graphon1,morency2021graphon,maskey2023transferability} lack convergence rates for this case. In contrast, our result goes beyond GNNs and establishes trajectory-wise rates for GNDEs over unweighted graphs, using a novel analysis based on the box-counting dimension.

\revised{We finally remark that the convergence established in Theorem \ref{theorem: convergence of solutions} relies crucially on the existence of an admissible sequence of permutations to be valid. In contrast, Theorems \ref{theorem: rate of Lipschitz} and \ref{theorem: rate of simple graph} do not require permutations; the specific way of constructing the graphs in these theorems is sufficient to guarantee convergence without reordering.}

\section{Numerical Experiments}

\subsection{Graphon Convergence Rates}\label{sec:graphon_conv_rate_experiment}

\paragraph{Graphons} 

To empirically verify Theorem~\ref{theorem: rate of Lipschitz},  we examine the convergence behavior of the tent graphon \citep{xia2023implicitgraphonneuralrepresentation}, a weighted smooth graphon defined by $\tW(u,v) = 1 - |u-v|^{\alpha}$, $u,v\in I$, with $\alpha=\tfrac{1}{2}$ (Hölder-$\tfrac{1}{2}$) or $\alpha=1$ (Lipschitz).

For verification of Theorem~\ref{theorem: rate of simple graph}, two $\{0,1\}$-valued graphons with varying box-counting dimension are considered. We examine the HSBM graphons \citep{holland1983stochastic,crane2015framework} with multiscale community structure, where the box-counting dimension of the support is 1 with a controllable grid size parameter. We also consider the hexaflake fractal, a Sierpiński \(n\)-gon–based construction that has been used in certain practical design applications \citep{hexaflake_cite}, as a \revised{novel} graphon with box-counting dimension of $\frac{\log(7)}{\log(3)}$ or about $1.77$.

\begin{figure}[H]
    \centering
    \includegraphics[width=\textwidth]{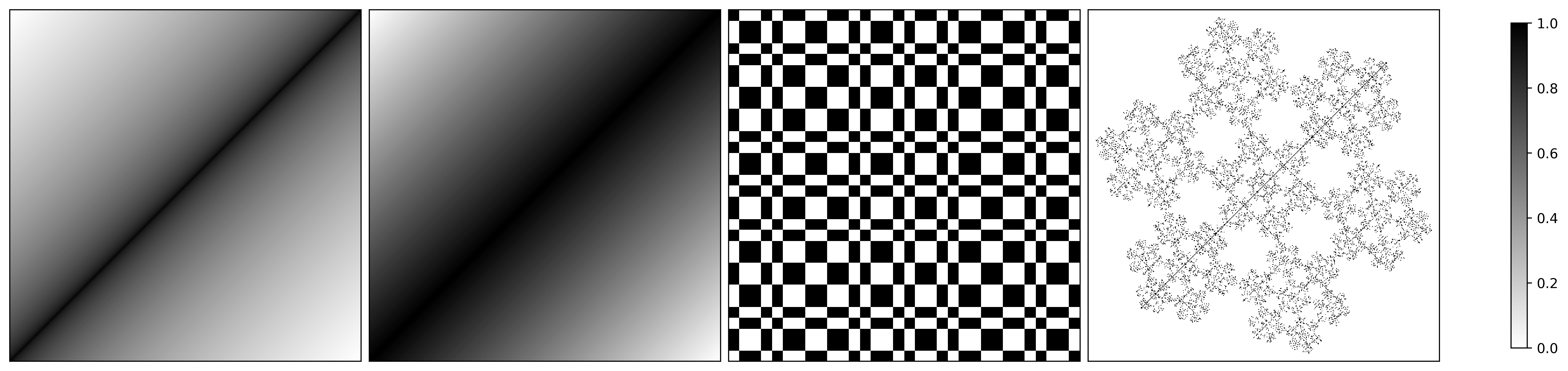}
    \caption{Hölder Tent (left), Tent (center-left), HSBM (center-right), and Hexaflake (right) graphon visualizations.}
    \label{fig:three_graphon}
\end{figure}

\paragraph{Experiment setup}
We use GNDEs parameterized with a two-layer GNN, based on the models of \citet{poli2019graph}, where both the feature and hidden dimensions are \(1\) \newrevised{($F = 1$) so that each layer inputs and outputs a single feature. GNDEs} share the same \newrevised{\textit{constant-in-time filters}} with entries bounded in \([-1, 1]\). \revised{Filters are initialized randomly and are not trained.}  The initial conditions are random Fourier polynomials of degree $D$, defined by $\tZ(u) := \sum_{k=1}^{D} a_k \cos(2 \pi k u) + b_k \sin(2 \pi k u)$,
where $a_k$ and $b_k$ are chosen randomly, creating diverse and smooth signals over graph nodes. The details are in  Appendix \ref{appendix: additionalgraphons}.

\paragraph{Evaluation}  
To approximate the graphon solution $\tX$, we use a reference graph with $N_{\mathrm{largest}} = 5000$ nodes. We present the log-log convergence plot of $\max_{t}\frac{\|\mX_n(t)-\mX_{5000}(t)\|_2}{\|\mX_{5000}(t)\|_2}$ for number of nodes $n$ ranging from $550$ to $1950$ with a step size of $100$. This approximates the maximal relative error over all $t \in [0,1]$ of GNDE evolution. We evolve GNDEs through the Dormand-Prince method of order $5$ \citep{DORMAND198019}. We plot the resulting curves in Figure \ref{fig:multi_graph_experiment}.

\begin{figure}[h!]
    \centering
    \begin{minipage}{0.48\textwidth}
        \includegraphics[width=\linewidth]{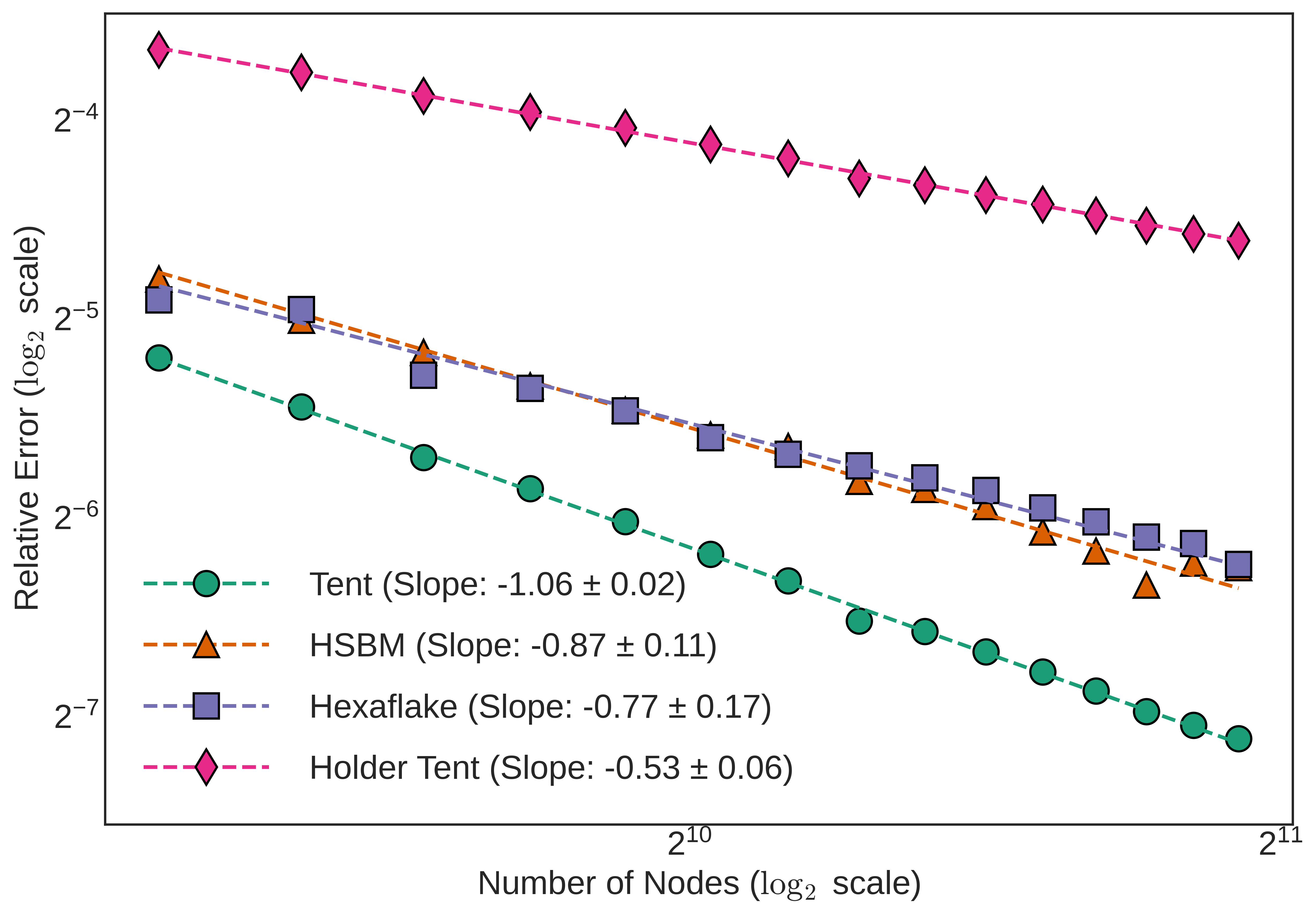}
        \vspace{0.5em}
    \end{minipage}
    \hspace{1em} %
    \begin{minipage}{0.48\textwidth}
        \vspace{-0.5em}
        \caption{\textbf{Convergence rates of GNDE solutions.} 
        Mean relative errors between GNDE and Graphon-NDE solutions on graphs sampled from four graphons: (1) \emph{Tent graphon} (Lipschitz), matching  \(\mathcal{O}(1/n)\) rate in  Theorem~\ref{theorem: rate of Lipschitz}, (2) \emph{HSBM graphon} (box counting dimension 1), (3) \emph{Hexaflake graphon} (fractal boundary with box counting dimension 1.77), and (4) \emph{Hölder Tent} which is Hölder-$\tfrac{1}{2}$ and exhibits a rate near \(\mathcal{O}(1/n^{0.5})\) as expected from Theorem~\ref{theorem: rate of Lipschitz}. The HSBM graphon yields faster convergence than the hexaflake, consistent with the trend indicated in Theorem~\ref{theorem: rate of simple graph}. We refer to Figure \ref{fig:error_bars_graphon_main} in Section \ref{appendix: additionalgraphons} for their convergence plots with error bars.}
        \label{fig:multi_graph_experiment}
    \end{minipage}
\end{figure}

\paragraph{Adddtional graphons} We also include three additional cases: one Lipschitz graphon with a larger Lipschitz constant, and two binary graphons: a checkerboard graphon (with box-counting dimension~1) and a Sierpiński graphon (with box-counting dimension~1.89) as presented in Section~\ref{appendix: additionalgraphons}, with their convergence plots shown in Figure~\ref{fig:error_bars_graphon_additional}. The observed phenomenon is consistent with the results in Figure~\ref{fig:multi_graph_experiment}.

\paragraph{Discussion}  
As shown in Figure~\ref{fig:multi_graph_experiment}, for the considered feature functions the empirical convergence rates are consistently better than the theoretical rate predicted by Theorem~\ref{theorem: rate of simple graph}, suggesting that the theorem provides a \emph{pessimistic} bound. In practice, one often observes faster convergence. This gap arises because our theoretical rate characterizes the worst-case scenario, whereas in many applications the signals of interest lie in low-dimensional or low-frequency subspaces, thereby avoiding regions where convergence is intrinsically slow. Furthermore, we observe a clear dependence of the convergence rate on the box-counting dimension: the more complex the boundary, the slower the convergence. This highlights a fundamental link between graphon boundary complexity and the scalability of graph neural differential equations.

\subsection{Real Data Node Classification}\label{appendix: real data experiments}

\revised{In this section, we explore the size transferability of GNDEs on real node classification tasks using  benchmark real-world datasets by first training a model on a subgraph and then assessing its performance on the full graph.
For node classification, the GNDE solution at the terminal time provides node embeddings, which are then mapped to class labels through a fixed readout layer.
Accordingly, changes in classification accuracy under size transfer reflect changes in the GNDE solution induced by perturbations of the underlying graph, which we quantify using induced graphon representations constructed with respect to the original node indexing.}

\paragraph{Graph Datasets} 
 We adopt a variety of widely used benchmark node classification datasets. We examine the homophilic citation networks Cora \citep{mccallum2000cora}, Pubmed \citep{namata2012pubmed} and  Citeseer \citep{sen2008citeseer} \revised{where similar nodes are connected}, and we adopt \revised{standard} fixed \revised{(Planetoid)} splits \citep{yang2016revisiting}. We also consider the heterophilic graph datasets Cornell, Texas, Wisconsin \citep{webkb1998}, the Squirrel and Chameleon datasets \citep{rozemberczki2021multiscaleattributednodeembedding}, and the Actor dataset \citep{tang2009actor}, \revised{where dissimilar nodes are connected and learning may be more difficult,} each with randomized $60/20/20$ splits. For a large scale example, we consider the ogbn-arxiv dataset \citep{hu2021opengraphbenchmarkdatasets} using standard splits. Their statistics are summarized in Table~\ref{table:dataset_structure} in Appendix~\ref{appendix: real data details}.

\paragraph{Graph construction}  
Each dataset consists of a graph with binary edge values ($\{0,1\}$) and associated node features. From the full graph, we extract sequences of random subgraphs of varying sizes, retaining between $10\%$ and $90\%$ of all nodes. For each proportion, we independently sample the specified percentage of nodes from the training, validation, and test sets. The selected nodes from all three sets are then combined to form a subgraph. This sampling process ensures that the resulting subgraphs maintain a balanced representation of nodes across all classes.

\paragraph{Experiment setup} We train GNDE models on each random subgraph for each dataset. For each node classification task on a subgraph, we create a corresponding GNDE model which consists of three layers: a GNN head ($L=1,K=2$) mapping the high dimensional initial input features to lower dimensional input features for the GNDE, a GNDE parameterized by a GNN with fixed input, output, and hidden dimensions ($L=2,K=2$), and a linear GNN readout layer ($L=1,K=1$) mapping the output of the dynamics to the class labels for the final classification task. After training, model weights are transferred to the full graph. \revised{In practice, this means that model weights are kept constant after training, corresponding to zero-shot transfer, but the trained model is now applied to the full graph with the inherited graph shift operator and node data.}

\paragraph{Implementation details} We implement GNDEs using the \texttt{torchdiffeq} library \citep{chen2018neural} and the code provided by \citet{poli2019graph}. \revised{Model filters} are trained using cross-entropy loss optimized with Adam \citep{kingma2014adam}. \textit{Our primary objective is not to achieve state-of-the-art performance on the node classification task}, but to analyze the generalization and transfer behavior of GNDEs under a standardized training protocol. Details regarding hyperparameter selection, training, and computational complexity are provided in Appendix~\ref{appendix: real data details}.

\paragraph{Evaluation}
We require meaningful evaluation metrics to assess performance. We gather three test accuracies for each GNDE trained on a subgraph: the test accuracy on the associated subgraph (\textbf{STA}), the test accuracy on the full graph after transfer (\textbf{FTA}), and the test accuracy on the subset of test nodes associated with the subgraph but after transfer (\textbf{SFTA}).

We also define secondary metrics to measure transfer and graphon error. Suppose that $\mathcal{G}$ is the full graph with adjacency matrix $\mW_{\mathcal{G}}$, and $\mathcal{G}'$ is a subgraph of $\mathcal{G}$ with adjacency matrix $\mW_{\mathcal{G}'}$. By $\tW_{\mathcal{G}}$ and $\tW_{\mathcal{G}'}$ we denote the induced graphon representation (cf. equation \eqref{induced graphon Wn}) of $\mW_{\mathcal{G}}$ and $\mW_{\mathcal{G}'}$, respectively. We define
\begin{align}\label{app:graphonerr}
\text{graphon error}:=\frac{\|\tW_{\mathcal{G}'} - \tW_{\mathcal{G}}\|_{L^2(I)}}{\|\tW_{\mathcal{G}}\|_{L^2(I)}}.
\end{align} 

We also define transfer errors on subgraphs (cf. equation \eqref{app:transfererr}), which are calculated as the relative difference of \textbf{STA} and \textbf{SFTA}: 
\begin{align}\label{app:transfererr}
\text{transfer error}:=\frac{|\textbf{STA} - \textbf{SFTA}|}{|\textbf{STA}|}.
\end{align}
Our evaluation methodology aligns with that used for GNNs in prior work \citep{ruiz2020graphon}.

\begin{figure}[!htbp]
\centering
\includegraphics[width=\textwidth]{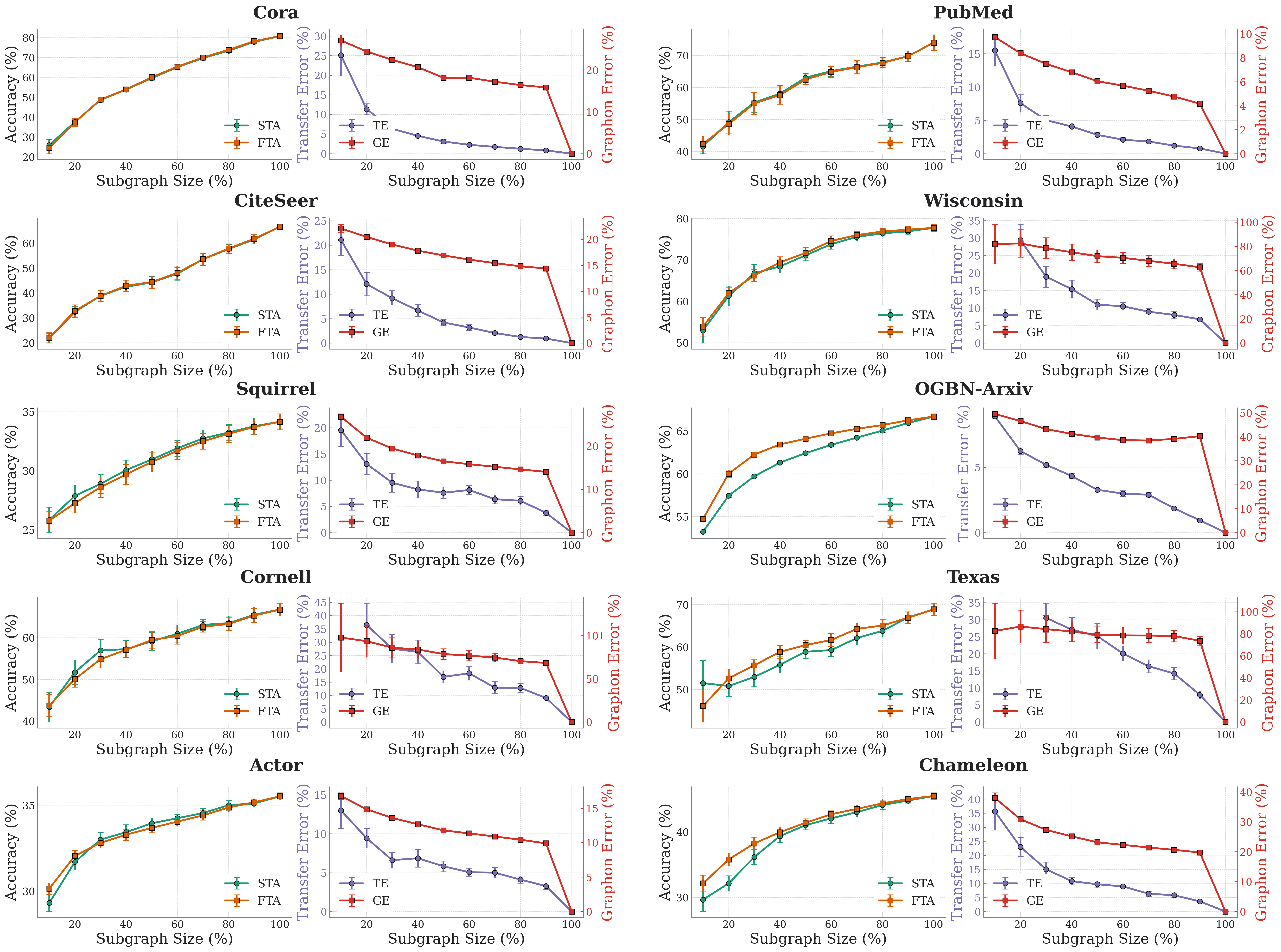}
\caption{Node classification experiment results, with two plots for each dataset. \textbf{Left:} Subgraph test accuracy (\textbf{STA}) and full graph test accuracy (\textbf{FTA}). \textbf{Right:} Transfer error (\textbf{TE}) and graphon error (\textbf{GE}).}
\label{fig:large_grid_convergence}
\end{figure}

\paragraph{Discussion} Numerical results are plotted in Figure \ref{fig:large_grid_convergence} in the form of mean $\pm$ standard deviation. We observe that transfer errors on subgraphs decay as their size increases, a trend consistent with the decay of graphon errors. \revised{This behavior is consistent with the {stability predictions} implied by Theorem~\ref{theorem: convergence of solutions}.} Additionally, we find that $\textbf{STA}$ and $\textbf{FTA}$ are numerically similar and increasing as the size increases, further supporting the generalization capability of GNDEs.

\paragraph{Computational time}

Below we report training and inference times for the models used in each experiment, broken down by the size of the training subgraph. Training times are reported as an average per $200$ epochs, and inference times are reported as the average for inference on the full graph. \newrevised{For smaller datasets, the size of the training subgraph does not meaningfully impact overall training time due to fixed and dominating computational costs such as data loading, GPU initialization, and communication overhead. However}, the larger ogbn-arxiv shows a clear trend. Training on a subgraph containing only 50\% of the nodes achieves 64\% accuracy with an average training time of 8.94 seconds. In comparison, training on the full graph yields 67\% accuracy but requires about 19.98 seconds. This demonstrates the feasibility of achieving comparable accuracy while cutting training time by more than half for large-scale graphs. This result both empirically supports our theoretical claim that GNDEs trained on smaller graphs can effectively generalize to larger ones and provides meaningful motivation for adopting this approach.

\begin{table}[h!]
\centering
\label{tab:training_times}
\begin{tabular}{lcccc}
\toprule
Dataset & 10\% Subgraph & 50\% Subgraph & 100\% Full Graph & Inference Time \\ 
\midrule
actor & $2.38 \pm 0.08$ & $2.62 \pm 0.05$ & $3.03 \pm 0.07$ & $0.0098 \pm 0.0007$  \\ 
chameleon & $2.40 \pm 0.05$ & $2.63 \pm 0.05$ & $2.97 \pm 0.24$ & $0.0098 \pm 0.0010$ \\ 
cornell & $2.39 \pm 0.09$ & $2.41 \pm 0.05$ & $2.32 \pm 0.14$ & $0.0079 \pm 0.0008$ \\ 
citeseer & $2.13 \pm 0.15$ & $2.38 \pm 0.17$ & $2.69 \pm 0.17$ & $0.0107 \pm 0.0008$ \\ 
cora & $2.04 \pm 0.05$ & $2.09 \pm 0.05$ & $2.17 \pm 0.05$ & $0.0076 \pm 0.0005$ \\ 
pubmed & $2.05 \pm 0.08$ & $2.25 \pm 0.04$ & $2.79 \pm 0.09$ & $0.0123 \pm 0.0009$ \\ 
ogbn-arxiv & $2.16 \pm 0.05$ & $8.94 \pm 0.03$ & $19.98 \pm 0.03$ & $0.0650 \pm 0.0002$ \\ 
squirrel & $2.51 \pm 0.12$ & $2.69 \pm 0.11$ & $3.27 \pm 0.10$ & $0.0079 \pm 0.0001$ \\ 
texas & $2.37 \pm 0.11$ & $2.40 \pm 0.06$ & $2.24 \pm 0.17$ & $0.0085 \pm 0.0007$ \\ 
wisconsin & $2.37 \pm 0.28$ & $2.59 \pm 0.48$ & $2.23 \pm 0.25$ & $0.0051 \pm 0.0001$ \\ 
\bottomrule
\end{tabular}
\vspace{6pt}
\caption{Average training time per 200 epochs (seconds), and average inference time (seconds), \revised{which measures the time required to make predictions on the test set using the trained model.}}
\end{table}

\subsection{Fitting graph diffusion dynamics using GNDEs}\label{appendix: Synthetic Dynamics Experiments}

In this section, we present numerical results demonstrating that GNDE models parameterized by spectral GCNs can fit graph-based diffusion dynamics governed by the nonlocal diffusion equations in Table \ref{tab:dynamics} with good accuracy.

\paragraph{Graphons}

For each graphon in Figure \ref{fig:all_graphons}, we construct four discrete graphs with sizes ranging from $50$ to $200$ through the sampling processes detailed in Sections 3.3.1 and 3.3.2. For the unweighted case, we consider the binary HSBM, Koch curve, and hexaflake fractal graphons. For the weighted case, we consider the product graphon defined by $\tW(x,y) = xy$ and the H\"older tent graphon.

\begin{figure}[htbp]
\centering
\begin{subfigure}[c]{0.18\textwidth}
    \centering
    \textbf{HSBM}\\[0.3em]
    \includegraphics[width=\textwidth]{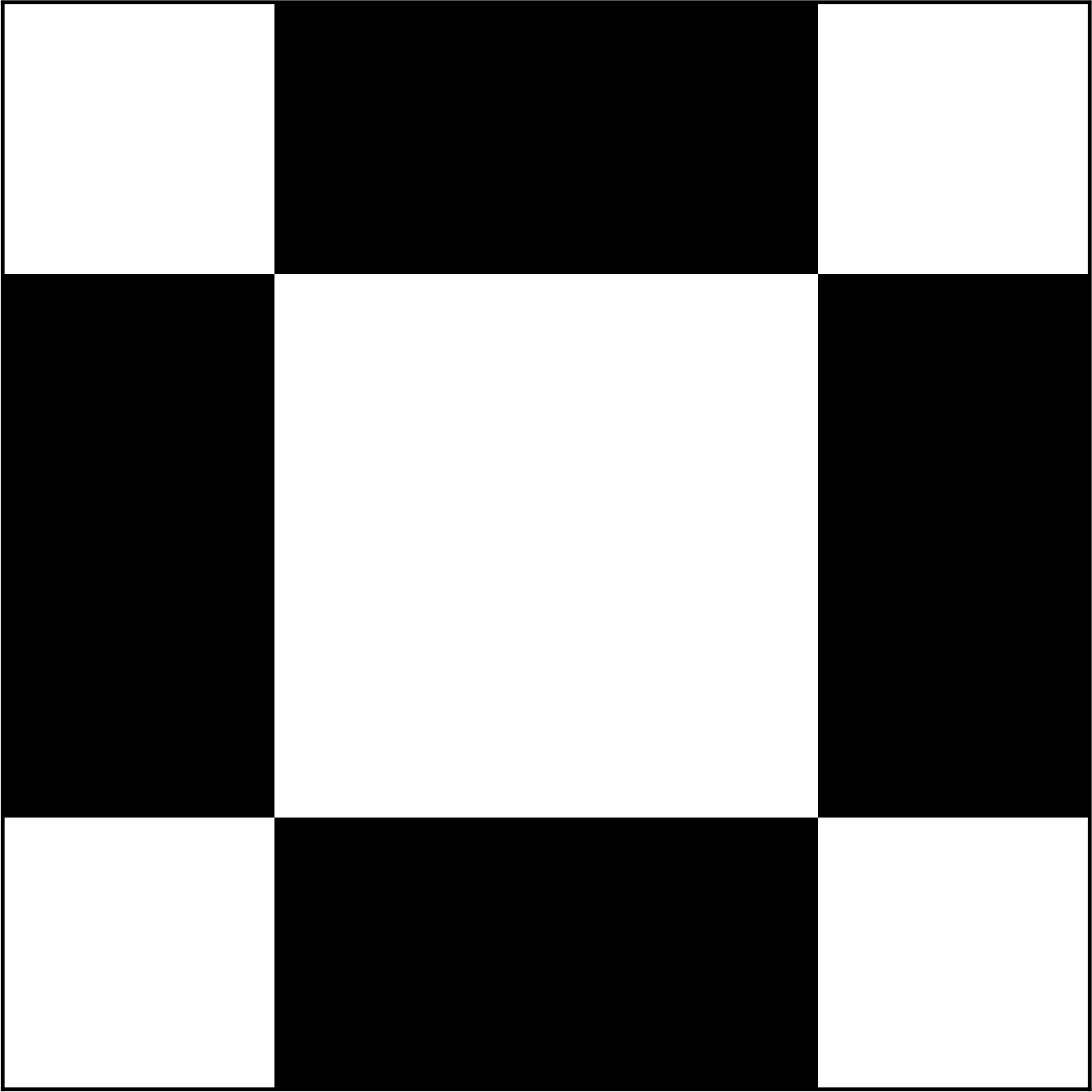}
\end{subfigure}%
\hfill%
\begin{subfigure}[c]{0.18\textwidth}
    \centering
    \textbf{Koch}\\[0.3em]
    \includegraphics[width=\textwidth]{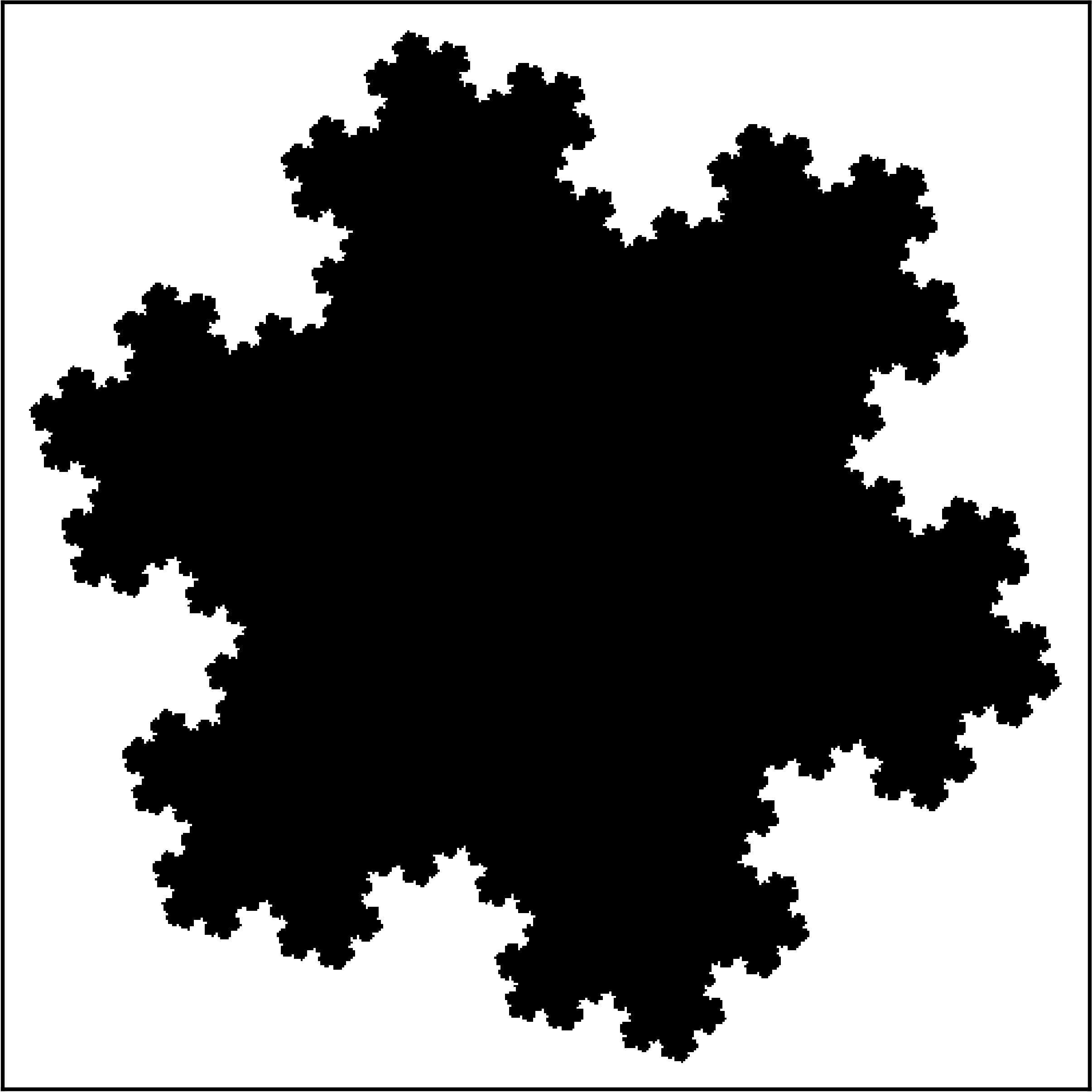}
\end{subfigure}%
\hfill%
\begin{subfigure}[c]{0.18\textwidth}
    \centering
    \textbf{Hexaflake}\\[0.3em]
    \includegraphics[width=\textwidth]{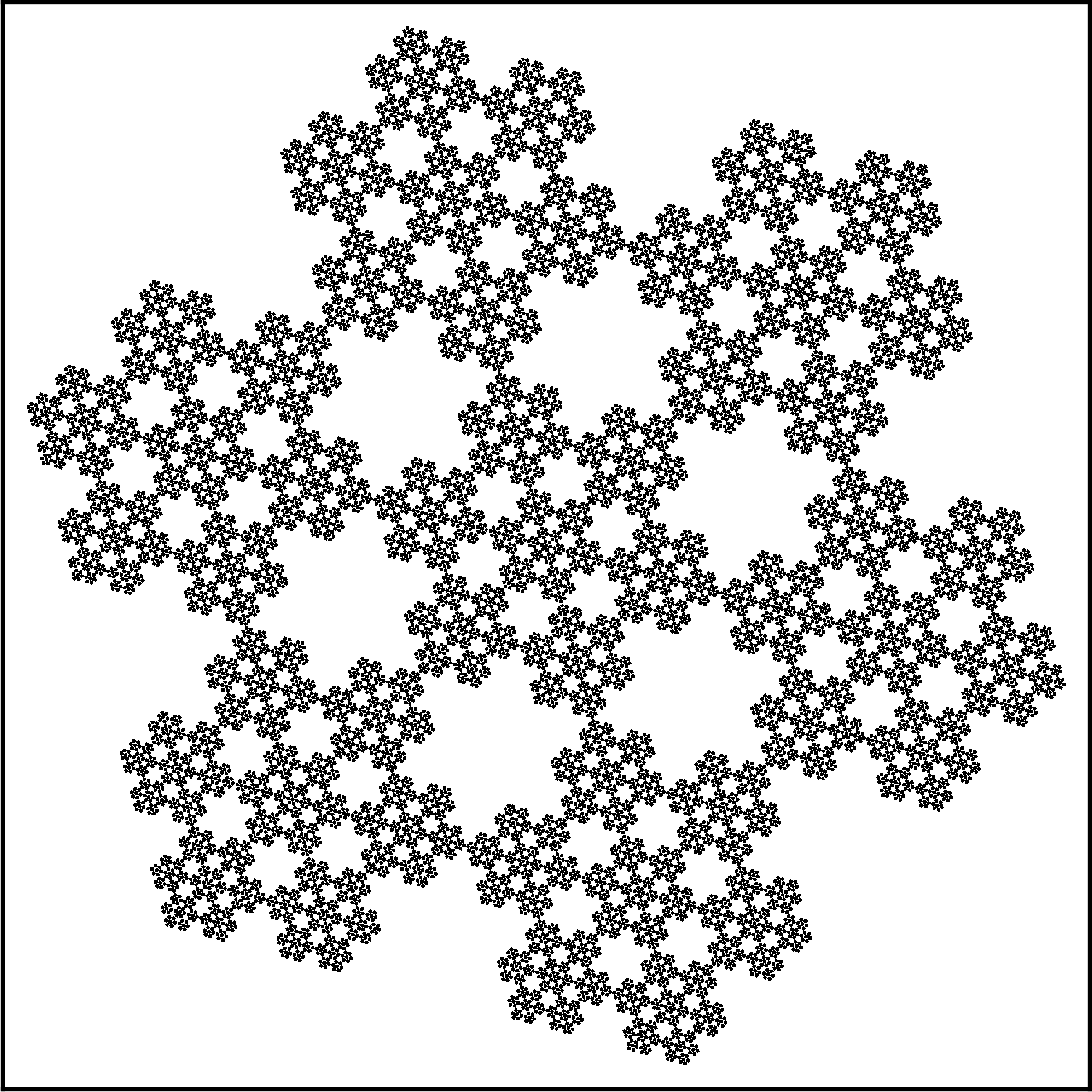}
\end{subfigure}%
\hfill%
\begin{subfigure}[c]{0.2\textwidth}
    \centering
    \textbf{Product}\\[0.3em]
    \includegraphics[width=\textwidth]{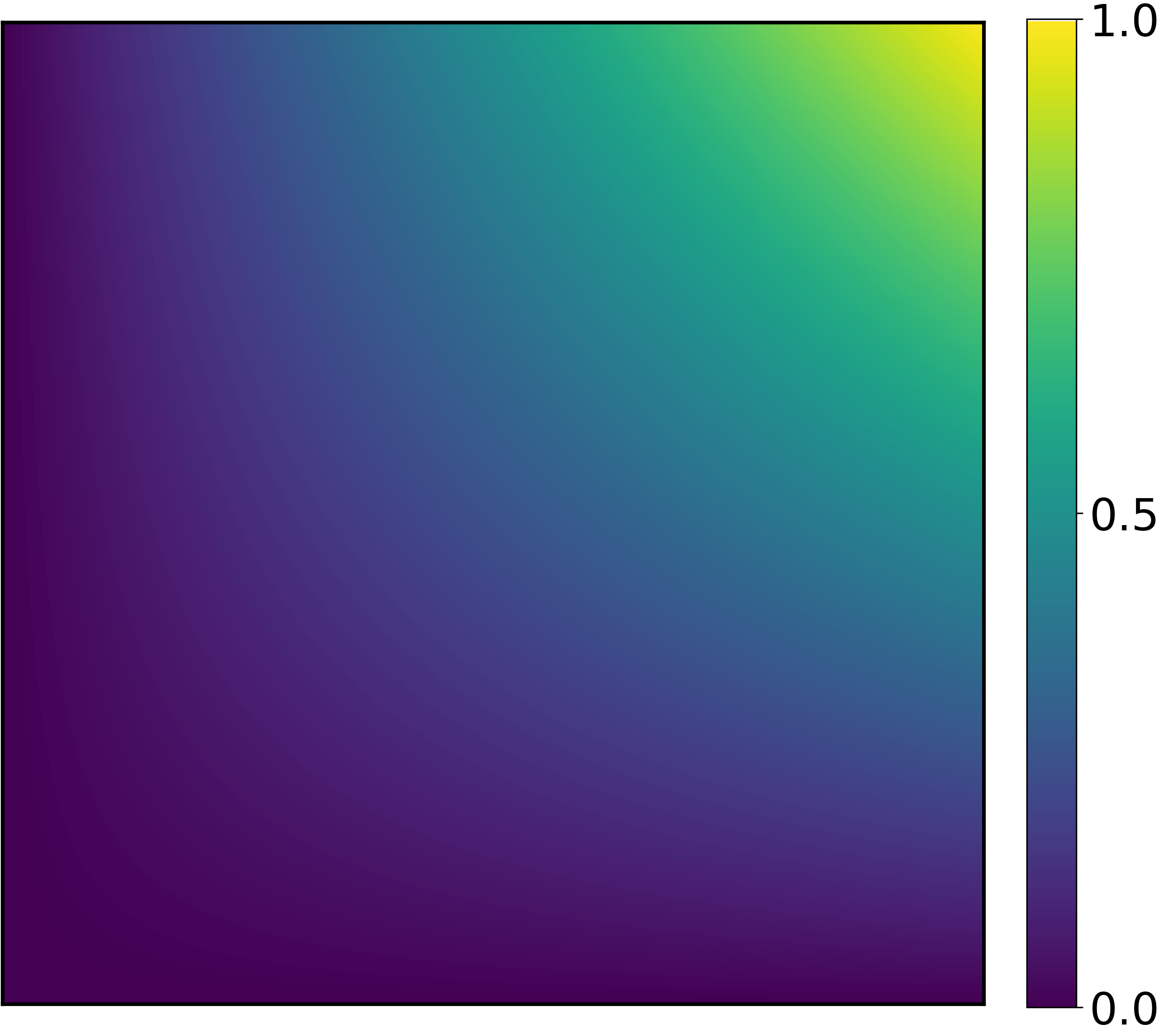}
\end{subfigure}%
\hfill%
\begin{subfigure}[c]{0.2\textwidth}
    \centering
    \textbf{Hölder}\\[0.3em]
    \includegraphics[width=\textwidth]{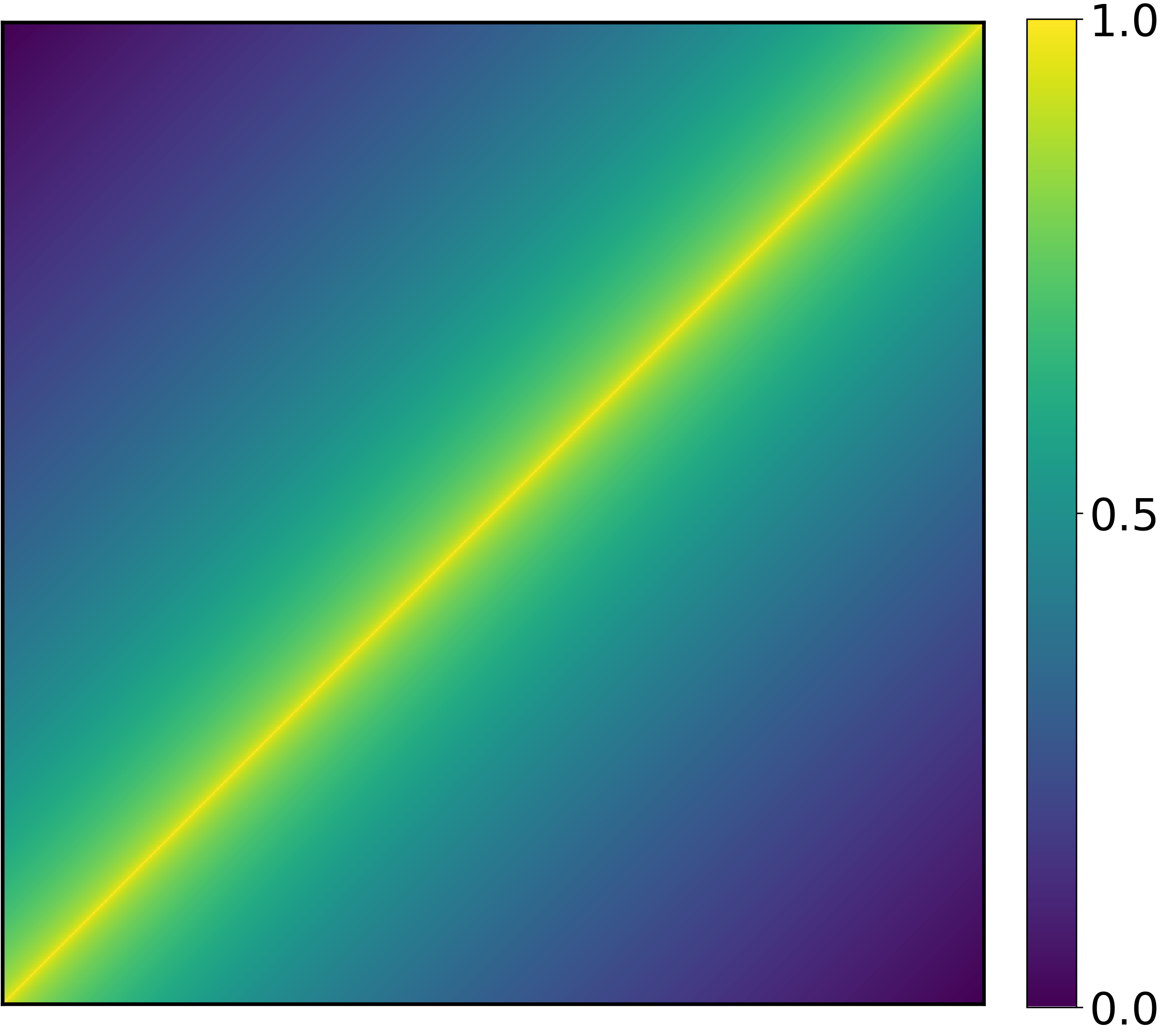}
\end{subfigure}
\caption{Five graphons (left to right): three binary graphons HSBM, Koch, and Hexaflake with increasing boundary complexity; and two weighted graphons: Product (Lipschitz) and Hölder ($\tfrac{1}{2}$-Hölder).}
\label{fig:all_graphons}
\end{figure}

\paragraph{Training Setup}

For each source graph \newrevised{with $n$ nodes} sampled from the graphon, we sample $M$ initial conditions and simulate both the \newrevised{reference dynamics from Table \ref{tab:dynamics}}, with solution denoted by $\teachX{n}(t)$, and the GNDE model, with solution denoted by $\gndeX{n}(t;\theta)$ \newrevised{as in (2.6)}.
The GNDE is trained \newrevised{on a fixed time grid $0 = t_0 < t_1 < \cdots < t_P = T$} by minimizing the following trajectory-wise loss
\[
\mathcal{L}(\theta)
=
\frac{1}{M(P+1)}
\sum_{m=1}^M \sum_{\ell=0}^P
\left\|\newrevised{\bm{X}_n^{(m)}}(t_\ell;\theta) - \teachX{n}^{(m)}(t_\ell)\right\|_{\mathrm{F}}^2.
\]
\newrevised{where $\bm{X}_n^{*(m)}$ refers to the $m$-th trajectory and $\bm{X}_n^{(m)}$ is the GNDE solution of the $m$-th trajectory.} We simulate the dynamics in Table \ref{tab:dynamics} on each generated discrete graph, using $M=10$ random Fourier initial conditions with bandwidth $D=10$, as described in Appendix E.1, with $P=3$ trajectory snapshots. 
All GNDE models are parameterized by spectral GCNs with $K=2$ hops \newrevised{and $L=2$ hidden layers, a choice informed by preliminary experiments and selected to balance approximation accuracy and computational cost. Each model has scalar input and output with hidden dimension $F_1, F_2 = 16$ in the interior layers.} Training is performed using the Adam optimizer \citep{kingma2014adam} with default hyperparameters for $200$ epochs, which was observed as sufficient for convergence.

\begin{table}[H]
\centering
\resizebox{\textwidth}{!}{%
\begin{tabular}{@{}lll@{}}
\toprule
\textbf{Dynamics} & \textbf{Equation on node $i$} & \textbf{Description} \\ 
\midrule
\textbf{Linear Heat} &
$\displaystyle
\frac{d}{dt} [\teachX{n}]_i = \frac{1}{n}
\sum_{j=1}^{n} w_{ij}\,\big([\teachX{n}]_j - [\teachX{n}]_i\big)
$ & \makecell[l]{Classical diffusion (graph Laplacian flow); \\[4pt] models linear heat propagation on graphs.} \\[4pt]

\textbf{Nonlinear Heat} &
$\displaystyle
\frac{d}{dt} [\teachX{n}]_i = \frac{1}{n}
\sum_{j=1}^{n} w_{ij}\, \sin\!\big([\teachX{n}]_j - [\teachX{n}]_i\big)
$ 
& \makecell[l]{Diffusion with nonlinear coupling; \\[4pt]
\newrevised{equivalent to the Kuramoto model} \\[4pt]
\newrevised{with uniform natural frequencies.}} \vspace{0.2cm} \\

\textbf{Allen--Cahn} &
$\displaystyle
\frac{d}{dt} [\teachX{n}]_i =
-\frac{\epsilon^2}{n} \sum_{j=1}^{n} w_{ij}\,\big([\teachX{n}]_j - [\teachX{n}]_i\big)
+ [\teachX{n}]_i - [\teachX{n}]_i^{\,3}
$ & \makecell[l]{Double-well potential dynamics; \\[4pt] describes phase separation and bistability.} \\[4pt]

\textbf{Nonlinear Consensus} &
$\displaystyle
\frac{d}{dt} [\teachX{n}]_i = 
\frac{1}{n} \sum_{j=1}^{n} 
w_{ij}\,
\frac{[\teachX{n}]_j - [\teachX{n}]_i}{1 + ([\teachX{n}]_j - [\teachX{n}]_i)^2}
$ & \makecell[l]{Nonlinear averaging with saturating influence; \\[4pt] models bounded-confidence consensus.} \\
\bottomrule
\end{tabular}%
}
\caption{\newrevised{Ground truth} graph-based dynamical systems, where $w_{ij}$ denotes the adjacency weights and $[\teachX{n}]_i$ represents the node feature at vertex $i$.}
\label{tab:dynamics}
\end{table}

\begin{table}[H]
\centering
\small
\begin{tabular}{@{}llccccc@{}}
\toprule
 & & \multicolumn{3}{c}{\textbf{Binary Graphons}} & \multicolumn{2}{c}{\textbf{Weighted Graphons}} \\
\cmidrule(lr){3-5} \cmidrule(lr){6-7}
\textbf{Dynamics} & \textbf{$n_{\text{val}}$} & \textbf{Hexaflake} & \textbf{HSBM} & \textbf{Koch} & \textbf{Product} & \textbf{H\"older} \\
\midrule
\multirow{4}{*}{\textsc{Allen-Cahn}} 
  & 50  & $0.175 \pm 0.084$ & $0.143 \pm 0.025$ & $0.156 \pm 0.054$ & $0.180 \pm 0.081$ & $0.136 \pm 0.010$ \\
  & 100 & $0.140 \pm 0.016$ & $0.137 \pm 0.017$ & $0.152 \pm 0.054$ & $0.175 \pm 0.078$ & $0.135 \pm 0.010$ \\
  & 150 & $0.155 \pm 0.064$ & $0.138 \pm 0.020$ & $0.152 \pm 0.054$ & $0.173 \pm 0.079$ & $0.135 \pm 0.011$ \\
  & 200 & $0.156 \pm 0.064$ & $0.136 \pm 0.016$ & $0.152 \pm 0.053$ & $0.173 \pm 0.078$ & $0.136 \pm 0.010$ \\
\midrule
\multirow{4}{*}{\textsc{Consensus}} 
  & 50  & $0.115 \pm 0.019$ & $0.110 \pm 0.015$ & $0.150 \pm 0.295$ & $0.190 \pm 0.294$ & $0.102 \pm 0.009$ \\
  & 100 & $0.109 \pm 0.019$ & $0.107 \pm 0.013$ & $0.151 \pm 0.295$ & $0.182 \pm 0.280$ & $0.103 \pm 0.011$ \\
  & 150 & $0.102 \pm 0.019$ & $0.104 \pm 0.008$ & $0.149 \pm 0.296$ & $0.179 \pm 0.281$ & $0.103 \pm 0.012$ \\
  & 200 & $0.103 \pm 0.018$ & $0.106 \pm 0.009$ & $0.152 \pm 0.295$ & $0.177 \pm 0.280$ & $0.102 \pm 0.012$ \\
\midrule
\multirow{4}{*}{\textsc{Linear Heat}} 
  & 50  & $0.048 \pm 0.012$ & $0.054 \pm 0.039$ & $0.028 \pm 0.011$ & $0.048 \pm 0.020$ & $0.047 \pm 0.012$ \\
  & 100 & $0.041 \pm 0.014$ & $0.030 \pm 0.005$ & $0.029 \pm 0.015$ & $0.041 \pm 0.014$ & $0.045 \pm 0.011$ \\
  & 150 & $0.036 \pm 0.013$ & $0.026 \pm 0.003$ & $0.028 \pm 0.011$ & $0.038 \pm 0.013$ & $0.043 \pm 0.012$ \\
  & 200 & $0.036 \pm 0.013$ & $0.025 \pm 0.003$ & $0.026 \pm 0.011$ & $0.036 \pm 0.013$ & $0.040 \pm 0.012$ \\
\midrule
\multirow{4}{*}{\textsc{Nonlin Heat}} 
  & 50  & $0.150 \pm 0.019$ & $0.162 \pm 0.031$ & $0.059 \pm 0.008$ & $0.089 \pm 0.012$ & $0.137 \pm 0.014$ \\
  & 100 & $0.141 \pm 0.012$ & $0.150 \pm 0.026$ & $0.054 \pm 0.009$ & $0.093 \pm 0.014$ & $0.136 \pm 0.012$ \\
  & 150 & $0.123 \pm 0.012$ & $0.149 \pm 0.025$ & $0.054 \pm 0.008$ & $0.092 \pm 0.012$ & $0.137 \pm 0.011$ \\
  & 200 & $0.124 \pm 0.012$ & $0.148 \pm 0.025$ & $0.054 \pm 0.008$ & $0.093 \pm 0.012$ & $0.136 \pm 0.011$ \\
\bottomrule
\end{tabular}
\caption{Mean $\pm$ std.\ of trajectory-wise training error across $10$ training sets, reported for different dynamics and graphons.}
\label{tab:heat_results_num}
\end{table}

\paragraph{Evaluation metric}
Suppose $\hat{\theta}$ is obtained after training.
To quantify how well the GNDE fits the target diffusion dynamics, we define a relative trajectory-wise training error using a discrete approximation of the maximum-in-time $L^2$ error.
Specifically, for each initial condition, we define
\[
\mathsf{Err}_{\mathrm{traj}}^{(m)}
:=
\frac{
\max_{0\le \ell \le Q}
\left(
\bigl\|
\newrevised{\bm{X}_n^{(m)}}(t_\ell;\hat{\theta}) - \teachX{n}^{(m)}(t_\ell)
\bigr\|
\right)
}{
\max_{0\le \ell \le Q}
\left(
\bigl\|
\teachX{n}^{(m)}(t_\ell)
\bigr\|
\right)
}.
\]
Here $\|\cdot\|$ denotes the Euclidean norm, applied componentwise to the node-feature vector at each time point, and $Q$ is chosen sufficiently large so that the discrete maximum over $\{t_\ell\}_{\ell=0}^Q$ provides a reliable approximation of the continuous-time maximum, with $Q=100$ in our experiments. The results are reported for each dynamics and graphon pair in Table \ref{tab:heat_results_num}, and a representative visualization of the resulting learned model is provided in Figure \ref{fig:three_panel}.

\begin{figure}[H]
    \centering

    \includegraphics[width=0.9\textwidth]{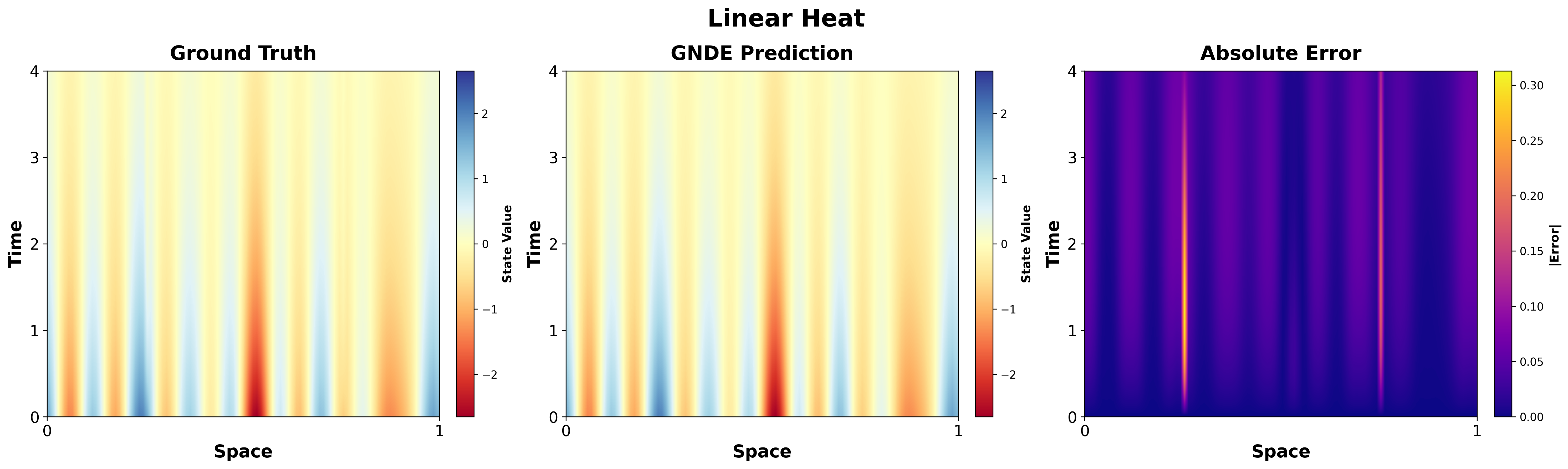}
    \vspace{0.5cm}
    \includegraphics[width=0.9\textwidth]{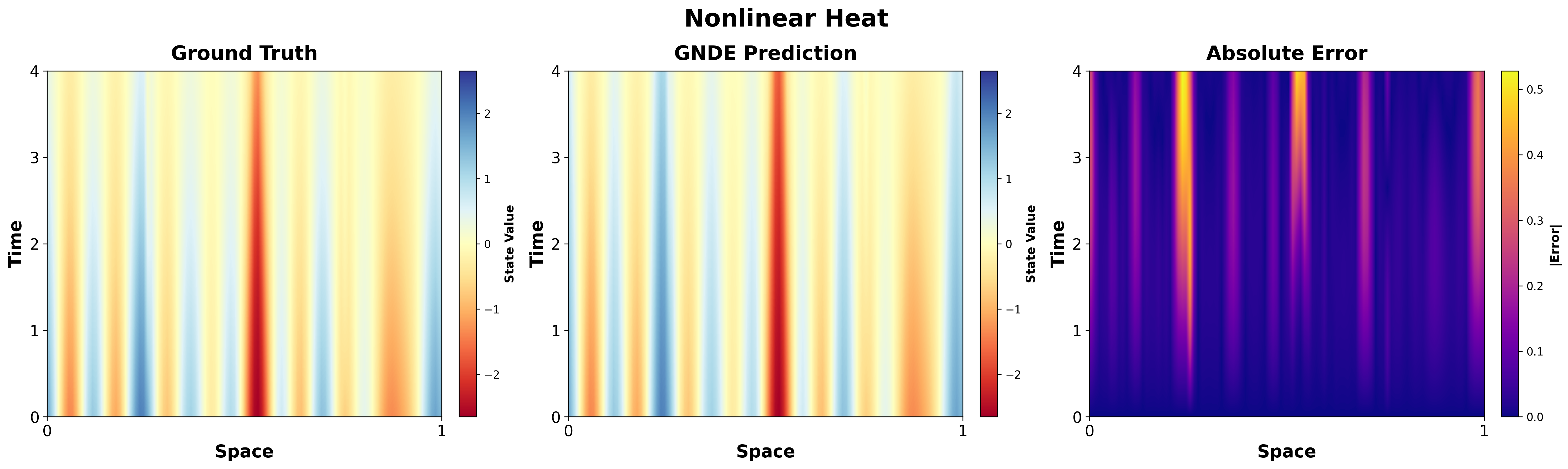}
    \vspace{0.5cm}
    \includegraphics[width=0.9\textwidth]{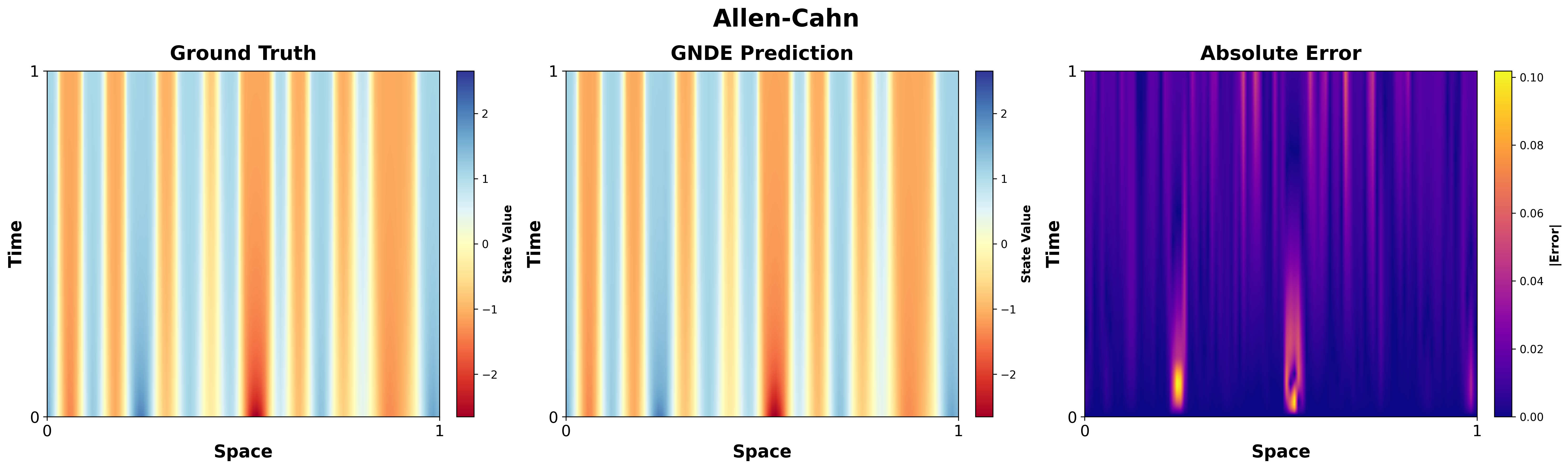}
    \vspace{0.5cm}
    \includegraphics[width=0.9\textwidth]{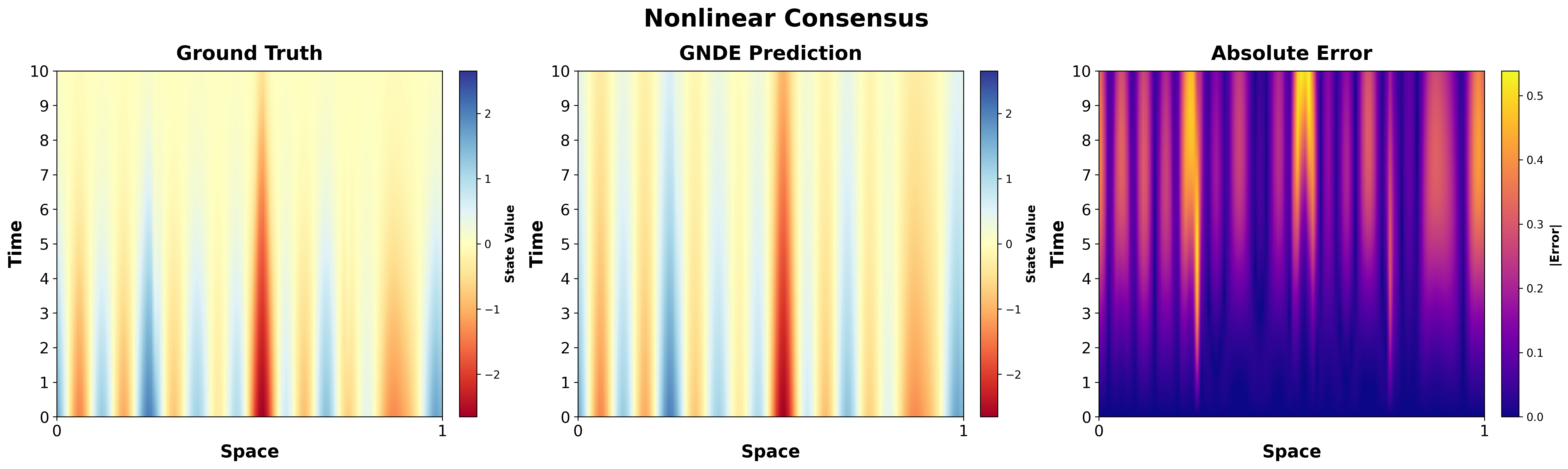}
    \caption{Ground truth, GNDE predicted dynamics, and absolute error plots for the four graph dynamics on a discrete graph of size $100$ generated from the HSBM graphon. Plots show varying patterns of both spatial and temporal error accumulation across dynamics; while some dynamics exhibit short-time fidelity, others such as Allen-Cahn display spatially localized errors from early in the evolution.}
    \label{fig:three_panel}
\end{figure}

\section{Implications and Future Work}
\revised{ We now discuss the implications of our results and directions for future work.}
\paragraph{Computational cost}

For a dense graph with $n$ nodes, the computational cost of a single forward pass of a GNDE with $T$ solver steps, $L$ layers per step, $K$-hop aggregation, and feature dimension $F$ is $\mathcal{O}(n^{2} T L K F)$, which scales quadratically in $n$ and can be prohibitive for large graphs. When graphs are sampled from smooth or binary graphons, the approximation error to the graphon solution decays as $\mathcal{O}(n^{-r})$ for some $r>0$ depending on the graphon’s regularity, in which $r$ is H\"older smoothness exponent for smooth graphons, or relying on boundary complexity for binary graphons. To achieve a target accuracy $\varepsilon$, it suffices to take $n \gtrsim \varepsilon^{-1/r}$, which associates to computational cost $\mathcal{O}(\varepsilon^{-2/r} T L K F)$.

\paragraph{Size transferability bounds} 

Estimates~\eqref{weighted graph: size transferability bounds} and~\eqref{unweighted graph: size transferability bounds} provide quantitative bounds on the discrepancy between GNDE solutions over structurally similar graphs of different sizes $n_1$ and $n_2$, assuming shared convolutional filters. These bounds characterize the size transferability of GNDEs, showing how solution trajectories remain consistent as the graph scales, and highlight the role of graph structure (e.g., graphon regularity) and model smoothness (e.g., convolutional filters, activation functions) in enabling reliable transfer. Our analysis further indicates that transferability becomes more challenging for highly irregular graphs.

\paragraph{Two-scale convergence of discretized GNDEs} Discretized GNDEs can be obtained by applying numerical solvers to GNDEs, resulting in novel constructions of GNNs with residual connections. Despite their practical importance, no convergence analysis for these discretized GNDEs exists in the current literatures. Our convergence results show that GNDE solutions over size‑$n$ graphs converge uniformly in time to a Graphon-NDE solution with rate $\mathcal{O}(n^{-r})$, with $r$ dependent on regularity of graphons. To ensure that such convergence behavior carries over to discretized GNDEs used in practice, we also need to control the numerical solver error. Specifically, if a solver with global error $\mathcal{O}(h^p)$ is used, then to preserve the overall convergence to the graphon limit, we need to require $h^p \ll n^{-r}$. This setup \revised{potentially} reflects a \emph{two-scale convergence}: as both the graph size increases and the time step decreases, the discretized numerical solutions of GNDEs will converge to the Graphon-NDE solution. In practice, this informs the choice of solver: for smooth GNDEs, high-order explicit methods (e.g., RK4) suffice, while stiff dynamics may call for implicit solvers to control long-term error growth. This principle ensures that the discretized model remains consistent across graph sizes and time resolutions. \revised{We leave a rigorous mathematical analysis of two-scale convergence for future studies.}

\paragraph{Future Directions}\label{sec:conclution}
Future work includes extending our trajectory analysis to GNDEs parameterized by \revised{more general} GNN architectures \revised{that admit a well-defined graphon limit}. \revised{For example}, within the spectral GNN framework, \revised{while our results are proved for static graphs, a time-varying graph sequence may be handled under suitable regularity assumptions on the graph evolution}. \revised{One could also consider extensions to general message passing neural networks}. Key challenges  remain in generalizing to non-symmetric architectures, such as attention-based GNNs \citep{velivckovic2017graph,yun2019graph}, which will require novel technical approaches. Additionally, extending our framework to graphs sampled stochastically from underlying graphons represents an important direction, requiring the integration of our trajectory-wise bounds with concentration tools for random graphs.

\section*{Acknowledgment}

The authors thank the anonymous reviewers and the Associate Editor for their careful reading of the manuscript and for their valuable comments and suggestions, which helped improve the presentation of this work. Mingsong Yan acknowledges support from a Simons travel grant. C.~Kulick and S.~Tang were supported by NSF DMS CAREER Grant No.~2340631.

\bibliography{general_bib_file}
\bibliographystyle{plainnat}

\appendix

\section{Proof of Theorem \ref{theorem: well-posedness}}\label{Appendix: well-posedness}
Prior to the detailed proof of Theorem \ref{theorem: well-posedness}, we present several useful observations. Under the assumption \hyperref[AS0]{AS0}, for $T>0$, we define a constant
\begin{equation}\label{def: constant hT}
h_{T}:=\supT\max_{f,g\in[F], \ell\in[L], k\in\mathbb{Z}_K}\left|\hltfgk \right|.
\end{equation}

\begin{lemma}\label{lemma: layer estimate}
Let $T>0$ and $\tX\in C([0,T];L^\infty(I;\mathbb{R}^{1\times F}))$. Suppose that \hyperref[AS0]{AS0} and \hyperref[AS1]{AS1} hold. Then, for $p\in[1,\infty]$, $\ell\in[L]$ and $t\in[0,T]$, it holds that 
    \begin{equation*}
\norm{\tXlt}_{L^p(I;\mathbb{R}^{1\times F})}\leq FKh_T\revised{L_\sigma}\norm{\tX^{(\ell-1,t)}}_{L^p(I;\mathbb{R}^{1\times F})},
    \end{equation*}
    where $h_T$ is defined in \eqref{def: constant hT}. 
\end{lemma}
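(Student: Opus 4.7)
The plan is to unwind the definition of the layer update
$\tX_f^{(\ell,t)} = \rho\bigl(\sum_{g=1}^{F}\sum_{k=0}^{K-1} h_{fgk}^{(\ell,t)}\,T_\tW^k \tX_g^{(\ell-1,t)}\bigr)$
and estimate feature by feature in $L^p(I)$, and then recombine the $F$ features via the definition $\|\mathbf{Z}\|_{L^p(I;\mathbb{R}^{1\times F})}=(\sum_f \|Z_f\|_{L^p(I)}^2)^{1/2}$ recorded in the Function Spaces paragraph. Four ingredients drive the estimate: (i) \SigmaLipschitz, which gives $|\rho(x)|=|\rho(x)-\rho(0)|\leq |x|$ pointwise and therefore $\|\rho(\phi)\|_{L^p(I)}\leq \|\phi\|_{L^p(I)}$ for any scalar $\phi$; (ii) \ConvFilterLipschitz, which guarantees $|h_{fgk}^{(\ell,t)}|\leq h_T$ by the definition \eqref{def: constant hT}; (iii) the operator bound $\|T_\tW\|_{L^p(I)\to L^p(I)}\leq \|\tW\|_{L^\infty(I^2)}\leq 1$, inherited from the fact that $\tW$ is a graphon (so $\tW$ is $[0,1]$-valued), by a direct application of Jensen/Hölder to $T_\tW \mathbf{x}(v)=\int_I \tW(u,v)\mathbf{x}(u)\,du$; and consequently $\|T_\tW^k\|_{L^p\to L^p}\leq 1$ by iteration.

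First, I would fix $f\in[F]$ and $t\in[0,T]$ and estimate
\begin{align*}
\|\tX_f^{(\ell,t)}\|_{L^p(I)}
&\leq \Bigl\|\sum_{g=1}^{F}\sum_{k=0}^{K-1} h_{fgk}^{(\ell,t)}\,T_\tW^k \tX_g^{(\ell-1,t)}\Bigr\|_{L^p(I)} \\
&\leq h_T \sum_{g=1}^{F}\sum_{k=0}^{K-1} \|T_\tW^k \tX_g^{(\ell-1,t)}\|_{L^p(I)}
\leq h_T K \sum_{g=1}^{F} \|\tX_g^{(\ell-1,t)}\|_{L^p(I)},
\end{align*}
using (i)--(iii) in succession. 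The main subtlety is the final reduction to the ambient vector-valued norm: one has an $\ell^1$ sum over $g$, whereas the target norm uses an $\ell^2$ aggregation in the feature index. I would therefore apply Cauchy--Schwarz,
$$\sum_{g=1}^{F}\|\tX_g^{(\ell-1,t)}\|_{L^p(I)}\leq \sqrt{F}\,\Bigl(\sum_{g=1}^{F}\|\tX_g^{(\ell-1,t)}\|_{L^p(I)}^2\Bigr)^{1/2}=\sqrt{F}\,\|\tX^{(\ell-1,t)}\|_{L^p(I;\mathbb{R}^{1\times F})},$$
which gives $\|\tX_f^{(\ell,t)}\|_{L^p(I)}\leq \sqrt{F}\,K h_T\,\|\tX^{(\ell-1,t)}\|_{L^p(I;\mathbb{R}^{1\times F})}$ uniformly in $f$.

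Finally, squaring, summing over $f\in[F]$ and taking the square root produces the additional factor of $\sqrt{F}$, and combining yields $\|\tX^{(\ell,t)}\|_{L^p(I;\mathbb{R}^{1\times F})}\leq F K h_T\,\|\tX^{(\ell-1,t)}\|_{L^p(I;\mathbb{R}^{1\times F})}$, which is the claimed bound. The case $p=\infty$ is handled identically, with the contraction of $T_\tW$ on $L^\infty$ following from $\|T_\tW \mathbf{x}\|_{L^\infty}\leq \|\tW\|_{L^\infty}\|\mathbf{x}\|_{L^\infty}\leq \|\mathbf{x}\|_{L^\infty}$. The only step that is not wholly mechanical is the Cauchy--Schwarz reconciliation of the $\ell^1$ feature-aggregation produced by the filter sum with the $\ell^2$-in-features norm used throughout the paper; once this is handled the rest is just a linear chain of contraction estimates, and no hypothesis beyond \ConvFilterLipschitz and \SigmaLipschitz (together with $\tW$ being a graphon) is invoked.
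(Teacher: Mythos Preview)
Your proposal is correct and follows essentially the same approach as the paper: a feature-wise $L^p$ estimate using \hyperref[AS1]{AS1} to drop $\rho$, \hyperref[AS0]{AS0} to bound $|h_{fgk}^{(\ell,t)}|\leq h_T$, the contraction $\|T_\tW\|_{L^p\to L^p}\leq 1$, and then Cauchy--Schwarz to pass from the $\ell^1$ sum over $g$ to the $\ell^2$-in-features norm, before summing over $f$ to pick up the remaining $\sqrt{F}$. If anything, your write-up is slightly more explicit than the paper's about the Cauchy--Schwarz step and separates the two triangle-inequality applications more cleanly.
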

\begin{proof}
    Note that the updating rule of Graphon-NN gives 
    \begin{equation*}
\tXflt=\sigma\parens{\sum_{g=1}^F\sum_{k=0}^{K-1}\hltfgk T_\tW^k \tXgglt},\quad f\in[F], \ell\in[L], t\in[0,T]. 
    \end{equation*}
    It follows that  
     \begin{equation*}
\norm{\tXflt}_{L^p(I)}\leq h_T\revised{L_\sigma}\parens{\sum_{k=0}^{K-1} \norm{T_\tW}_{L^p(I)\to L^p(I)}^k}\norm{\sum_{g=1}^F \tXgglt}_{L^p(I)}\leq h_T\revised{L_\sigma}K\sqrt{F}\norm{\tX^{(\ell-1,t)}}_{L^p(I;\mathbb{R}^{1\times F})},
    \end{equation*}
    in which the first inequality is due to \hyperref[AS0]{AS0}, \hyperref[AS1]{AS1} and triangle inequality; the second is according to the fact of $\norm{T_\tW}_{L^p(I)\to L^p(I)}\leq\norm{\tW}_{L^\infty(I^2)}\leq 1$ and the norm defined in $L^p(I;\mathbb{R}^{1\times F})$. The desired result immediately follows by rewriting the norm of $\tXlt$. 
\end{proof}
\begin{proposition}\label{proposition: bound phi(X1) - phi(X2) infinity norm}
Suppose that {\color{black}\hyperref[AS0]{AS0} and \hyperref[AS1]{AS1}} hold. Let $T>0$ and $\tX,\widetilde{\tX}\in C([0,T];L^{\infty}(I; \mathbb{R}^{1\times F}))$. Then for all $t\in[0,T]$, it holds that 
\begin{align*}
&\norm{\GraphonNN(\tX(\cdot,t);\tW,\tH(t)) -\GraphonNN(\widetilde{\tX}(\cdot,t);\tW,\tH(t))}_{L^{\infty}(I;\mathbb{R}^{1\times F})} \leq  (FKh_T\revised{L_\sigma})^{L} \norm{\tX(\cdot,t)-\widetilde{\tX}(\cdot,t)}_{L^\infty(I;\mathbb{R}^{1\times F})}.
\end{align*} 

\end{proposition}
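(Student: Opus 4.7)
The plan is to prove the stated Lipschitz bound by induction on the layer index $\ell$, showing that each layer of the Graphon-NN $\Phi$ is $(FKh_T)$-Lipschitz in the $L^\infty(I;\mathbb{R}^{1\times F})$ norm, and then iterating $L$ times. This is essentially a ``difference analogue'' of Lemma~\ref{lemma: layer estimate}: instead of bounding the norm of $\tXlt$ in terms of $\tX^{(\ell-1,t)}$, I track the norm of the \emph{difference} between two propagated feature trajectories.

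First, I would fix $t\in[0,T]$ and consider the recursive update $\tXflt = \rho\bigl(\sum_{g,k} \hltfgk T_\tW^k \tX_g^{(\ell-1,t)}\bigr)$ together with the analogous expression for $\widetilde{\tX}$. By \hyperref[AS1]{AS1}, $\rho$ is $1$-Lipschitz (and the difference in pre-activations is unaffected by the constant $\rho(0)=0$), so the outer nonlinearity can be stripped without loss. Applying the triangle inequality, the uniform bound $|\hltfgk|\le h_T$ from \hyperref[AS0]{AS0} and the definition \eqref{def: constant hT}, together with the elementary operator-norm estimate $\|T_\tW\|_{L^\infty(I)\to L^\infty(I)}\le \|\tW\|_{L^\infty(I^2)}\le 1$ (hence $\|T_\tW^k\|_{L^\infty\to L^\infty}\le 1$), yields the componentwise bound
\begin{equation*}
\norm{\tXflt-\widetilde{\tX}_f^{(\ell,t)}}_{L^\infty(I)} \le h_T K \sum_{g=1}^{F}\norm{\tX_g^{(\ell-1,t)}-\widetilde{\tX}_g^{(\ell-1,t)}}_{L^\infty(I)}.
\end{equation*}

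Next, I would invoke Cauchy--Schwarz on the sum over $g$ to extract a factor $\sqrt{F}$ and convert the scalar componentwise norms into the vector-valued norm $\norm{\cdot}_{L^\infty(I;\mathbb{R}^{1\times F})}$. Aggregating the resulting bound over $f\in[F]$ using the definition $\norm{Z}_{L^\infty(I;\mathbb{R}^{1\times F})}=(\sum_f\norm{Z_f}_{L^\infty(I)}^2)^{1/2}$ introduces another $\sqrt{F}$ factor, producing the one-layer Lipschitz estimate
\begin{equation*}
\norm{\tXlt-\widetilde{\tX}^{(\ell,t)}}_{L^\infty(I;\mathbb{R}^{1\times F})} \le FKh_T \norm{\tX^{(\ell-1,t)}-\widetilde{\tX}^{(\ell-1,t)}}_{L^\infty(I;\mathbb{R}^{1\times F})}.
\end{equation*}
Iterating this estimate from $\ell=1$ up to $\ell=L$, starting with $\tX^{(0,t)}=\tX(\cdot,t)$ and $\widetilde{\tX}^{(0,t)}=\widetilde{\tX}(\cdot,t)$, multiplies the Lipschitz constants to $(FKh_T)^L$ and gives the stated bound.

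I do not expect a substantial obstacle: the argument is a direct parallel of Lemma~\ref{lemma: layer estimate} applied to differences. The only care required is in the bookkeeping of dimensional factors -- specifically tracking when a factor of $\sqrt{F}$ arises from Cauchy--Schwarz on the inner $g$-sum versus from re-assembling components via the root-sum-of-squares definition of $\norm{\cdot}_{L^\infty(I;\mathbb{R}^{1\times F})}$ -- so that the final per-layer constant is $FKh_T$ rather than $\sqrt{F}Kh_T$ or $F^{3/2}Kh_T$.
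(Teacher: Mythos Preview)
Your proposal is correct and follows essentially the same approach as the paper: the paper's proof simply notes that, by the normalized Lipschitz continuity of $\rho$, the argument of Lemma~\ref{lemma: layer estimate} (with $p=\infty$) applied to differences yields the per-layer bound $\norm{\tXlt-\widetilde{\tX}^{(\ell,t)}}_{L^\infty(I;\mathbb{R}^{1\times F})}\le FKh_T\norm{\tX^{(\ell-1,t)}-\widetilde{\tX}^{(\ell-1,t)}}_{L^\infty(I;\mathbb{R}^{1\times F})}$, and then recurses. Your write-up is in fact a bit more explicit than the paper's about where the two $\sqrt{F}$ factors arise, but the strategy and the constants are identical.
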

\begin{proof}  
According to the Lipschitz continuity of activation function $\sigma$, similarly to the proof of Lemma \ref{lemma: layer estimate} with $p=\infty$, we have 
    \begin{equation}\label{in the proof: layer estimate diff}
\norm{\tXlt-\widetilde{\tX}^{(\ell,t)}}_{L^\infty(I;\mathbb{R}^{1\times F})}\leq FKh_T\revised{L_\sigma}\norm{\tX^{(\ell-1,t)}-\widetilde{\tX}^{(\ell-1,t)}}_{L^\infty(I;\mathbb{R}^{1\times F})}.
    \end{equation}
    Recall the notations $\tX(\cdot,t)=\tX^{(0,t)}$, $\GraphonNN(\tX(\cdot,t);\tW,\tH(t))=\tX^{(L,t)}$ (similar for $\widetilde{\tX}$). The desired result follows from recursively applying \eqref{in the proof: layer estimate diff}. 
\end{proof}

\begin{proof}[Proof of Theorem \ref{theorem: well-posedness}] The proof is based on the Banach contraction mapping principle, \revised{adapting the argument of \citet{medvedev2014nonlinear} to our setting of vector-valued dynamics with time-dependent coefficients}. Let $T>0$ be arbitrary but fixed, and $0<\tau<\frac{1}{2(FKh_T\revised{L_\sigma})^{L}}$. We define a \revised{closed subset} $\mathcal{S}_{\tZ}$ of the Banach space $C([0,\tau];L^{\infty}(I;\mathbb{R}^{1\times F}))$, associated with $\tau$, by
\begin{equation*}
\mathcal{S}_{\tZ}:=\left\{\tX:\tX\in C([0,\tau];L^{\infty}(I;\mathbb{R}^{1\times F})),\tX(\cdot,0)=\tZ\right\},
\end{equation*}
\revised{equipped with the norm induced from $C([0,\tau];L^{\infty}(I;\mathbb{R}^{1\times F}))$.} Moreover, we define an integral operator $\mathcal{K}:\mathcal{S}_\tZ\to\mathcal{S}_\tZ$ by 
\begin{align}\label{fixpoint}
[\mathcal{K}\tX](u,t):=\tZ(u)+\int_{0}^{t}\GraphonNN(\tX(u,s);\tW,\tH(s))ds.
\end{align} 
It follows that we can rewrite the initial value problem \eqref{Graphon-NDE} as the fixed point equation
$\tX=\mathcal{K}\tX$. We show below that the operator $\mathcal{K}$ is a contraction. For any $\tX,\widetilde{\tX} \in \mathcal{S}_{\tZ}$, according to the definition of norm in $C([0,\tau];L^\infty(I;\mathbb{R}^{1\times F}))$, we have 
\begin{align}
\|\mathcal{K}\tX-\mathcal{K}\widetilde{\tX}\|_{\mathcal{S}_{\tZ}}&= \sup_{t\in [0,\tau]}\|\mathcal{K}\tX(\cdot,t)-\mathcal{K}\widetilde{\tX}(\cdot,t)\|_{L^\infty(I;\mathbb{R}^{1\times F})}\nonumber\\ 
&=\sup_{t\in [0,\tau]}\left\|\int_{0}^{t} \GraphonNN(\tX(\cdot,s);\tW,\tH(s))-\GraphonNN(\widetilde{\tX}(\cdot,s);\tW,\tH(s))\ ds\right\|_{L^\infty(I;\mathbb{R}^{1\times F})}\nonumber\\
&\leq \tau \sup_{t\in [0,\tau]} \left\|\GraphonNN(\tX(\cdot,t);\tW,\tH(t))-\GraphonNN(\widetilde{\tX}(\cdot,t);\tW,\tH(t))\right\|_{L^{\infty}(I;\mathbb{R}^{1\times F})}.\label{first estimate of KX1-KX2 in proof}
\end{align}
It follows from Lemma \ref{proposition: bound phi(X1) - phi(X2) infinity norm} that 
\begin{equation*}
\left\|\GraphonNN(\tX(\cdot,t);\tW,\tH(t))-\GraphonNN(\widetilde{\tX}(\cdot,t);\tW,\tH(t))\right\|_{L^{\infty}(I;\mathbb{R}^{1\times F})}\leq (FKh_T\revised{L_\sigma})^L\|\tX(\cdot,t)-\widetilde{\tX}(\cdot,t)\|_{L^{\infty}(I;\mathbb{R}^{1\times F})}.
\end{equation*}
By substituting the above estimate into \eqref{first estimate of KX1-KX2 in proof}, we obtain that 
\begin{align*}
    \|\mathcal{K}\tX-\mathcal{K}\widetilde{\tX}\|_{\mathcal{S}_{\tZ}} &\leq \tau (FKh_T\revised{L_\sigma})^{L}\sup_{t\in [0,\tau]}  \|\tX(\cdot,t)-\widetilde{\tX}(\cdot,t)\|_{L^{\infty}(I;\mathbb{R}^{1\times F})}\\
    &=\tau (FKh_T\revised{L_\sigma})^{L} \|\tX-\widetilde{\tX}\|_{\mathcal{S}_{\tZ}} \leq \frac{1}{2} \|\tX-\widetilde{\tX}\|_{\mathcal{S}_{\tZ}}
\end{align*}
where the last inequality follows from the definition of $\tau$. Therefore, the operator $\mathcal{K}$ is a contraction. By the Banach contraction mapping principle, there exists a unique \revised{local} solution $\widehat{\tX} \in\mathcal{S}_{\tZ}$ \revised{\revised{defined on the time interval $[0, \tau]$}} for the initial value problem \eqref{Graphon-NDE}. Taking $\widehat{\tX}(\tau)$ as the initial condition, we repeat the argument to extend the solution to $[0,2\tau]$. In such a way, we can keep doing until the solution extends to $[0,T]$, and get a unique solution $\tX\in C([0,T];L^\infty(I;\mathbb{R}^{1\times F}))$. According to {\color{black}\hyperref[AS0]{AS0} and \hyperref[AS1]{AS1}}, it follows that $\GraphonNN(\tX(u,\cdot);\tW,\tH(\cdot))$ is continuous, that is, the integrand in \eqref{fixpoint} is continuous. Therefore, by \revised{the} fundamental theorem of calculus, we see that $\mathcal{K}\tX$ is continuously differentiable about the second variable $t$. As $\mathcal{K}\tX=\tX$, we conclude that $\tX\in C^1([0,T];L^\infty(I;\mathbb{R}^{1\times F}))$. This completes the proof.
\end{proof}

\section{Stability Analysis of Graphon-NDEs}
To lay a foundation for the subsequent proofs of the convergence result (Theorem \ref{theorem: convergence of solutions}) and also the convergence rate results (Theorems \ref{theorem: rate of Lipschitz} and \ref{theorem: rate of simple graph}), this section focuses on the stability analysis of Graphon-NDEs. We proceed with several technical lemmas. 

\begin{lemma}\label{lemma: difference of powers}
Let $T_1$ and $T_2$ be two bounded linear operators on $L^2(I)$. Let $k$ be a given positive integer. If $\|T_1\|_{L^2(I)\to L^2(I)}\leq 1$ and $\|T_2\|_{L^2(I)\to L^2(I)}\leq 1$, then $\left\|T_1^k-T_2^k\right\|_{L^2(I)\to L^2(I)}\leq k\|T_1-T_2\|_{L^2(I)\to L^2(I)}$.
\end{lemma}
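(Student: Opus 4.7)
The plan is to prove the bound by a standard telescoping identity combined with the sub-multiplicativity of the operator norm. Because $T_1^k - T_2^k$ has a single operator difference $T_1 - T_2$ appearing only implicitly, I will insert it explicitly by writing the power difference as a sum of $k$ pieces, each of which isolates one copy of $T_1 - T_2$ sandwiched between powers of $T_1$ and $T_2$.

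Concretely, I would first establish the telescoping identity
\begin{equation*}
T_1^k - T_2^k = \sum_{j=0}^{k-1} T_1^{j}\,(T_1 - T_2)\,T_2^{k-1-j},
\end{equation*}
which follows either by a direct algebraic check, or by induction on $k$ using $T_1^{k}-T_2^{k} = T_1(T_1^{k-1}-T_2^{k-1}) + (T_1-T_2)T_2^{k-1}$. Next I would take operator norms, apply the triangle inequality to the sum, and then use sub-multiplicativity $\|AB\|_{L^2\to L^2}\le \|A\|_{L^2\to L^2}\|B\|_{L^2\to L^2}$ on each summand. This yields
\begin{equation*}
\|T_1^k - T_2^k\|_{L^2(I)\to L^2(I)} \leq \sum_{j=0}^{k-1} \|T_1\|_{L^2(I)\to L^2(I)}^{j}\,\|T_1-T_2\|_{L^2(I)\to L^2(I)}\,\|T_2\|_{L^2(I)\to L^2(I)}^{k-1-j}.
\end{equation*}
Finally, invoking the hypothesis $\|T_1\|_{L^2(I)\to L^2(I)}\le 1$ and $\|T_2\|_{L^2(I)\to L^2(I)}\le 1$ collapses every factor of the form $\|T_i\|^{m}$ to at most $1$, leaving a sum of $k$ copies of $\|T_1-T_2\|_{L^2(I)\to L^2(I)}$, which gives the desired bound.

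This argument is essentially routine and I do not foresee a genuine obstacle; the only point requiring a touch of care is making sure the telescoping identity is written so that the order of composition matches (operator composition is non-commutative, so we must keep the $T_1^j$ on the left and $T_2^{k-1-j}$ on the right). The unit-norm assumption is exactly what is needed to prevent the middle factors from inflating the bound, and this is where the hypothesis $\|T_i\|_{L^2(I)\to L^2(I)} \le 1$ enters crucially.
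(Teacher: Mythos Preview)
Your proposal is correct and is the standard telescoping argument for this result. The paper actually states this lemma without proof, treating it as elementary; your argument is precisely the canonical proof one would supply, so there is nothing to compare beyond noting that you have filled in the omitted details correctly.
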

\revised{
\begin{proof}
The result immediately follows by noting that $T_1^k-T_2^k=\sum_{j=0}^{k-1} T_1^{k-j-1}(T_1-T_2)T_2^{j}$. 
\end{proof}
}

\begin{lemma}[Stability of Graphon-NNs]\label{lemma: transferbility of Graphon NN}
Let $T>0$, $\tX,\widetilde{\tX}\in C([0,T];L^{\infty}(I; \mathbb{R}^{1\times F}))$, and graphons $\tW,\widetilde{\tW}$. If {\color{black}\hyperref[AS0]{AS0} and \hyperref[AS1]{AS1}} hold, then for any $t\in[0,T]$, it holds that 
\begin{align*}
&\norm{\GraphonNN\parens{\widetilde{\tX}(\cdot,t);\widetilde{\tW},\tH(t)}-\GraphonNN\parens{\tX(\cdot,t);\tW,\tH(t)}}_{L^2(I;\mathbb{R}^{1\times F})} \\
\leq &\parens{FKh_T\revised{L_\sigma}}^{L}\parens{\norm{\widetilde{\tX}(\cdot,t)-\tX(\cdot,t)}_{L^2(I;\mathbb{R}^{1\times F})} + LK\norm{T_{\widetilde{\tW}}-T_{\tW}}_{L^2(I)\to L^2(I)}\norm{\tX}_{C([0,T];L^2(I;\mathbb{R}^{1\times F}))}}.
\end{align*}

\end{lemma}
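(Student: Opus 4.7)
My plan is to establish a one-step layerwise recursion for the difference of the intermediate Graphon-NN outputs built on $(\tW, \tX)$ versus $(\widetilde{\tW}, \widetilde{\tX})$ with shared filter parameters $\tH(t)$, and then unroll it across the $L$ layers. Fix $t \in [0,T]$ and let $\tX^{(\ell,t)}$ and $\widetilde{\tX}^{(\ell,t)}$ denote the two sequences of layer outputs, so that $\tX^{(0,t)} = \tX(\cdot,t)$, $\widetilde{\tX}^{(0,t)} = \widetilde{\tX}(\cdot,t)$, and $\tX^{(L,t)} = \Phi(\tW;\tX(\cdot,t);\tH(t))$, $\widetilde{\tX}^{(L,t)} = \Phi(\widetilde{\tW};\widetilde{\tX}(\cdot,t);\tH(t))$. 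Introduce the shorthand $a_\ell := \|\widetilde{\tX}^{(\ell,t)} - \tX^{(\ell,t)}\|_{L^2(I;\mathbb{R}^{1\times F})}$, $b_\ell := \|\tX^{(\ell,t)}\|_{L^2(I;\mathbb{R}^{1\times F})}$, and $\delta := \|T_{\widetilde{\tW}} - T_{\tW}\|_{L^2(I)\to L^2(I)}$.

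First, I would derive the one-step bound $a_\ell \leq FKh_T\bigl(a_{\ell-1} + K\delta\, b_{\ell-1}\bigr)$. For each output channel $f$, the normalized Lipschitz property \hyperref[AS1]{AS1} removes the activation, and then I would use the standard add-and-subtract decomposition
$$T_{\widetilde{\tW}}^k \widetilde{\tX}_g^{(\ell-1,t)} - T_\tW^k \tX_g^{(\ell-1,t)} = T_{\widetilde{\tW}}^k\bigl(\widetilde{\tX}_g^{(\ell-1,t)} - \tX_g^{(\ell-1,t)}\bigr) + \bigl(T_{\widetilde{\tW}}^k - T_\tW^k\bigr)\tX_g^{(\ell-1,t)}.$$
The first summand is bounded in $L^2(I)$ using $\|T_{\widetilde{\tW}}\|_{L^2(I)\to L^2(I)} \leq 1$, which follows from the graphon taking values in $I = [0,1]$. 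The second summand is controlled by Lemma \ref{lemma: difference of powers}, producing the factor $k \leq K$. Combining with the uniform filter bound $|\hltfgk| \leq h_T$ from \hyperref[AS0]{AS0} and applying Cauchy--Schwarz to pass from feature-indexed scalar sums to the $L^2(I;\mathbb{R}^{1\times F})$ norm yields the claimed one-step recursion.

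Next, I would control the intermediate norm $b_{\ell-1}$ via Lemma \ref{lemma: layer estimate} with $p=2$, which iterates to $b_{\ell-1} \leq (FKh_T)^{\ell-1} b_0$ with $b_0 = \|\tX(\cdot,t)\|_{L^2(I;\mathbb{R}^{1\times F})} \leq \|\tX\|_{C([0,T];L^2(I;\mathbb{R}^{1\times F}))}$. Substituting into the recursion and unrolling from $\ell = 1$ to $L$ produces the geometric sum
$$a_L \leq (FKh_T)^L a_0 + \sum_{\ell=1}^L (FKh_T)^{L-\ell}\cdot (FKh_T)^\ell K\delta\, b_0 = (FKh_T)^L\bigl(a_0 + LK\delta\, b_0\bigr),$$
which is exactly the claimed estimate.

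The main technical obstacle I anticipate is the careful combinatorial bookkeeping in the one-step estimate: two applications of Cauchy--Schwarz are needed to recover the correct $F$ dependence when converting between $L^2(I)$ and $L^2(I;\mathbb{R}^{1\times F})$ norms, and the sum $\sum_{k=0}^{K-1} k \leq K^2$ coming from the difference-of-powers lemma must align with the geometric factor $(FKh_T)^{\ell-1}$ from the layer estimate so that, after telescoping over $L$ layers, the cancellation yields a linear $LK$ factor rather than an exponential blow-up inside the parenthesis of the target bound.
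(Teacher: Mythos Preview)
Your proposal is correct and follows essentially the same approach as the paper's proof: the same add-and-subtract decomposition inside each layer, the same use of \hyperref[AS1]{AS1} to strip the activation, Lemma~\ref{lemma: difference of powers} to bound $\sum_{k=0}^{K-1}\|T_{\widetilde{\tW}}^k-T_{\tW}^k\|$ by $K^2\delta$, Lemma~\ref{lemma: layer estimate} to control $b_{\ell-1}$, and the same unrolling yielding the $LK$ factor. The only cosmetic difference is that the paper unrolls the recursion first and then substitutes the layer estimate, whereas you substitute before unrolling; both orderings give the identical final bound.
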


\begin{proof}
    Recall that for $f\in[F], \ell\in[L], t\in[0,T]$, the updating rule of Graphon-NN gives
    \begin{equation*}
        \tXflt=\sigma\parens{\sum_{g=1}^F\sum_{k=0}^{K-1}\hltfgk T_\tW^k \tXgglt},\quad\widetilde{\tX}_f^{(\ell,t)}=\sigma\parens{\sum_{g=1}^F\sum_{k=0}^{K-1}\hltfgk T_{\widetilde{\tW}}^k \widetilde{\tX}_g^{(\ell-1,t)}}.
    \end{equation*}
    \newrevised{By triangle inequality, we have $\norm{\widetilde{\tX}_f^{(\ell,t)}-\tXflt}_{L^2(I)}\leq \Delta_1+\Delta_2$, where
    \begin{align*}
\Delta_1&:=\norm{\sigma\parens{\sum_{g=1}^F\sum_{k=0}^{K-1}\hltfgk T_{\widetilde{\tW}}^k \widetilde{\tX}_g^{(\ell-1,t)}} - \sigma\parens{\sum_{g=1}^F\sum_{k=0}^{K-1}\hltfgk T_{\widetilde{\tW}}^k \tX_g^{(\ell-1,t)}}}_{L^2(I)},\\
\Delta_2&:=\norm{\sigma\parens{\sum_{g=1}^F\sum_{k=0}^{K-1}\hltfgk T_{\widetilde{\tW}}^k \tX_g^{(\ell-1,t)}} - \sigma\parens{\sum_{g=1}^F\sum_{k=0}^{K-1}\hltfgk T_\tW^k \tXgglt} }_{L^2(I)}.
    \end{align*}
    Then by similar argument as in the proof of Lemma \ref{lemma: layer estimate}, we get
    \begin{align*}
    \Delta_1&\leq\sqrt{F}Kh_T{L_\sigma}\norm{\widetilde{\tX}^{(\ell-1,t)}-\tX^{(\ell-1,t)}}_{L^2(I;\mathbb{R}^{1\times F})},\\
\Delta_2&\leq\sqrt{F}h_T{L_\sigma}\parens{\sum_{k=0}^{K-1}\norm{T_{\widetilde{\tW}}^k-T_\tW^k}_{L^2(I)\to L^2(I)}}\norm{\tX^{(\ell-1,t)}}_{L^2(I;\mathbb{R}^{1\times F})}.
    \end{align*}
    Hence, we obtain that }
    \begin{align*}
        \norm{\widetilde{\tX}_f^{(\ell,t)}-\tXflt}_{L^2(I)}\leq &\sqrt{F}Kh_T\revised{L_\sigma}\norm{\widetilde{\tX}^{(\ell-1,t)}-\tX^{(\ell-1,t)}}_{L^2(I;\mathbb{R}^{1\times F})}\\
        &+\sqrt{F}h_T\revised{L_\sigma}\parens{\sum_{k=0}^{K-1}\norm{T_{\widetilde{\tW}}^k-T_\tW^k}_{L^2(I)\to L^2(I)}}\norm{\tX^{(\ell-1,t)}}_{L^2(I;\mathbb{R}^{1\times F})}.
    \end{align*}
    It follows from Lemma \ref{lemma: difference of powers} that 
    $$\sum_{k=0}^{K-1}\norm{T_{\widetilde{\tW}}^k-T_\tW^k}_{L^2(I)\to L^2(I)}\leq K^2\norm{T_{\widetilde{\tW}}-T_\tW}_{L^2(I)\to L^2(I)}.$$
    Therefore, 
    \begin{align*}
        \norm{\widetilde{\tX}^{(\ell,t)}-\tXlt}_{L^2(I;\mathbb{R}^{1\times F})}\leq& FKh_T\revised{L_\sigma}\norm{\widetilde{\tX}^{(\ell-1,t)}-\tX^{(\ell-1,t)}}_{L^2(I;\mathbb{R}^{1\times F})}\\
        &+FK^2h_T\revised{L_\sigma}\norm{T_{\widetilde{\tW}}-T_\tW}_{L^2(I)\to L^2(I)}\norm{\tX^{(\ell-1,t)}}_{L^2(I;\mathbb{R}^{1\times F})}.
    \end{align*}
    Then a recursion argument gives 
    \begin{align*}
        \norm{\widetilde{\tX}^{(L,t)}-\tX^{(L,t)}}_{L^2(I;\mathbb{R}^{1\times F})}&\leq\parens{FKh_T\revised{L_\sigma}}^L\norm{\widetilde{\tX}^{(0,t)}-\tX^{(0,t)}}_{L^2(I;\mathbb{R}^{1\times F})} \\
        +FK^2h_T&\revised{L_\sigma}\norm{T_{\widetilde{\tW}}-T_\tW}_{L^2(I)\to L^2(I)}\sum_{\ell=0}^{L-1}\parens{FKh_T\revised{L_\sigma}}^{L-1-\ell}\norm{\tX^{(\ell,t)}}_{L^2(I;\mathbb{R}^{1\times F})}.
    \end{align*}
    Note that by Lemma \ref{lemma: layer estimate}, we have $\norm{\tX^{(\ell,t)}}_{L^2(I;\mathbb{R}^{1\times F})}\leq (FKh_T\revised{L_\sigma})^\ell\norm{\tX^{(0,t)}}_{L^2(I;\mathbb{R}^{1\times F})}$. Hence, 
     \begin{align*}
        \norm{\widetilde{\tX}^{(L,t)}-\tX^{(L,t)}}_{L^2(I;\mathbb{R}^{1\times F})}&\leq\parens{FKh_T\revised{L_\sigma}}^L\norm{\widetilde{\tX}^{(0,t)}-\tX^{(0,t)}}_{L^2(I;\mathbb{R}^{1\times F})} \\
        &+LK\parens{FKh_T\revised{L_\sigma}}^{L}\norm{T_{\widetilde{\tW}}-T_\tW}_{L^2(I)\to L^2(I)}\norm{\tX^{(0,t)}}_{L^2(I;\mathbb{R}^{1\times F})}.
    \end{align*}
    Note that $\tX^{(0,t)}=\tX(\cdot,t)$, $\tX^{(L,t)}=\GraphonNN\parens{\tX(\cdot,t);\tW,\tH(t)}$ (similar for $\widetilde{\tX}$) and norm of $\tX$ in $C([0,T];L^2(\mathbb{R}^{1\times F}))$ is defined as the supremum of $\norm{\tX(\cdot,t)}_{L^2(I;\mathbb{R}^{1\times F})}$ about $t\in[0,T]$. Therefore, the above inequality implies the desired result. 
\end{proof}

The following result is a special case of \citet{perov1959k} (also see Theorem 21 in \citet{dragomir2003some}).
\begin{lemma}[Generalized Gr\"onwall’s inequality]\label{lemma: generalized Gronwall's inequality}
Let $a$, $b$ and $c$ be non-negative constants. Let $u(t)$ be a non-negative function that satisfies the integral inequality $u(t) \leq c+\int_{0}^t\left(a u(s)+b u^{\frac{1}{2}}(s)\right) ds$,
then we have $u(t)\leq \left(c^{\frac{1}{2}}\mathrm{exp}(at/2)+\frac{\mathrm{exp}(at/2)-1}{a}b\right)^2$.
\end{lemma}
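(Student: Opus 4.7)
The plan is to reduce this nonlinear Gr\"onwall-type inequality to a standard linear Gr\"onwall inequality via a square-root substitution, which is the classical route underlying Perov's generalization. First I would introduce the integral majorant
\[
V(t) := c + \int_0^t \bigl(a u(s) + b u^{1/2}(s)\bigr)\, ds,
\]
so that by hypothesis $u(t) \leq V(t)$ pointwise in $t$. Since $V$ is absolutely continuous, non-decreasing, and non-negative with $V(0) = c$, we have $V'(t) = a u(t) + b u^{1/2}(t)$ almost everywhere, and the bound $u \leq V$ together with monotonicity of $\sqrt{\,\cdot\,}$ yields the self-referential differential inequality $V'(t) \leq a V(t) + b V^{1/2}(t)$ a.e.

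Next I would substitute $W(t) := V(t)^{1/2}$ to linearize. On any interval where $V(t) > 0$, $W$ is absolutely continuous and
\[
W'(t) = \frac{V'(t)}{2\, V^{1/2}(t)} \leq \frac{a V(t) + b V^{1/2}(t)}{2\, V^{1/2}(t)} = \frac{a}{2}\, W(t) + \frac{b}{2},
\]
with $W(0) = c^{1/2}$. Multiplying by the integrating factor $e^{-at/2}$ and integrating from $0$ to $t$ gives
\[
W(t) \leq c^{1/2} e^{at/2} + \frac{b}{a}\bigl(e^{at/2} - 1\bigr).
\]
Squaring this and invoking $u(t) \leq V(t) = W(t)^2$ delivers the claimed estimate.

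The main technical obstacle I anticipate is the degenerate case where $V$ vanishes (in particular when $c = 0$), since then $W = V^{1/2}$ fails to be differentiable at such points and the formal chain rule above does not apply. I would handle this by the standard perturbation trick: replace $c$ by $c + \varepsilon$ with $\varepsilon > 0$, run the argument on the perturbed majorant (which is bounded below by $\varepsilon > 0$, so the square root is smooth), and let $\varepsilon \downarrow 0$ in the resulting bound. The case $a = 0$ is handled in the same spirit, either by passing to the limit $a \downarrow 0$ in the final expression (noting $(e^{at/2}-1)/a \to t/2$) or by directly integrating the linear inequality $W' \leq b/2$. Once these edge cases are dispatched, the square-root substitution together with the linear Gr\"onwall step constitutes the entire argument.
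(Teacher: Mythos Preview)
Your argument is correct and is essentially the classical Perov route: majorize by the running integral $V$, linearize via $W=V^{1/2}$, apply the integrating factor, and square; your treatment of the degenerate cases $c=0$ and $a=0$ is also the standard one. The paper, however, does not prove this lemma at all---it simply cites it as a special case of Perov (1959) and refers to Theorem~21 in Dragomir (2003)---so there is no ``paper's own proof'' to compare against; your write-up would in fact supply what the paper omits.
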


Now given a sequence of graphons $\{\tW_n\}_{n=1}^\infty$ and (bounded) input feature functions $\{\tZ_n\}_{n=1}^\infty$, we consider the following Graphon-NDEs
\begin{align}\label{induced Graphon-NDE simplified notation}
    \begin{split}
        \frac{\partial}{\partial t} \tX_n(u,t) &= \GraphonNN(\tX_n(u, t);\tW_n, \tH(t)),\\
        \tX_n(u,0) &=  \tZ_n(u).
    \end{split}
\end{align}
We note that Theorem \ref{theorem: well-posedness} guarantees the existence and uniqueness of the solution $\tX_n$ of \eqref{induced Graphon-NDE simplified notation}. We establish in the following that the error between solutions of \eqref{Graphon-NDE} and \eqref{induced Graphon-NDE simplified notation} is bounded above by a linear combination of the initial feature error and graphon error.

\begin{theorem}[Stability of Graphon-NDEs]\label{theorem: X-Xn leq G-Gn + TW-TWn}
Suppose that {\color{black}\hyperref[AS0]{AS0} and \hyperref[AS1]{AS1}} hold. Let $\tX$ and $\tX_n$ denote the solutions of \eqref{Graphon-NDE} and \eqref{induced Graphon-NDE simplified notation}, respectively. Then it holds that
\begin{equation}\label{norm difference of XT and XnT}
\|\tX_n-\tX\|_{C([0,T];L^2(I;\mathbb{R}^{1\times F}))} \leq P\|\tZ_n-\tZ\|_{L^2(I;\mathbb{R}^{1\times F})}+Q\|T_{\tW_n}-T_{\tW}\|_{L^2(I)\to L^2(I)},
\end{equation}
where 
\begin{equation}\label{constants P and Q}
    P:=\exp\parens{T\parens{FKh_T\revised{L_\sigma}}^L}, \quad Q:=(P-1)LK\norm{\tX}_{C([0,T];L^2(I;\mathbb{R}^{1\times F}))}.
\end{equation}

\end{theorem}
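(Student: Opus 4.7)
}
The plan is to convert both Graphon-NDEs into their equivalent Volterra integral forms, subtract them, take the $L^{2}(I;\mathbb{R}^{1\times F})$-norm on both sides, and then close the loop using the Graphon-NN stability estimate (Lemma~\ref{lemma: transferbility of Graphon NN}) followed by a Gr\"onwall-type inequality (Lemma~\ref{lemma: generalized Gronwall's inequality} with the square-root term set to zero). Since Theorem~\ref{theorem: well-posedness} provides unique solutions in $C^{1}([0,T];L^{\infty}(I;\mathbb{R}^{1\times F}))$, which embeds into $C([0,T];L^{2}(I;\mathbb{R}^{1\times F}))$, all quantities below are finite.

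First I would write, for every $t\in[0,T]$ and $u\in I$,
\[
\tX_n(u,t)-\tX(u,t)=\bigl(\mathbf{Z}_n(u)-\mathbf{Z}(u)\bigr)+\int_{0}^{t}\bigl[\Phi(\tW_n;\tX_n(\cdot,s);\tH(s))-\Phi(\tW;\tX(\cdot,s);\tH(s))\bigr](u)\,ds,
\]
take $\|\cdot\|_{L^{2}(I;\mathbb{R}^{1\times F})}$ on both sides, and apply Minkowski's inequality for integrals to pull the norm inside the time integral. This yields, with the shorthand $u(t):=\|\tX_n(\cdot,t)-\tX(\cdot,t)\|_{L^{2}(I;\mathbb{R}^{1\times F})}$,
\[
u(t)\le \|\mathbf{Z}_n-\mathbf{Z}\|_{L^{2}(I;\mathbb{R}^{1\times F})}+\int_{0}^{t}\bigl\|\Phi(\tW_n;\tX_n(\cdot,s);\tH(s))-\Phi(\tW;\tX(\cdot,s);\tH(s))\bigr\|_{L^{2}(I;\mathbb{R}^{1\times F})}\,ds.
\]

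Next I would invoke Lemma~\ref{lemma: transferbility of Graphon NN} with $\widetilde{\tW}=\tW_n$ and $\widetilde{\tX}=\tX_n$ to bound the integrand by
\[
(FKh_T)^{L}\Bigl(u(s)+LK\|T_{\tW_n}-T_{\tW}\|_{L^{2}(I)\to L^{2}(I)}\|\tX\|_{C([0,T];L^{2}(I;\mathbb{R}^{1\times F}))}\Bigr).
\]
Substituting this back and denoting $A:=\|\mathbf{Z}_n-\mathbf{Z}\|_{L^{2}(I;\mathbb{R}^{1\times F})}$, $C:=(FKh_T)^{L}$, and $B:=C\cdot LK\|T_{\tW_n}-T_{\tW}\|_{L^{2}(I)\to L^{2}(I)}\|\tX\|_{C([0,T];L^{2}(I;\mathbb{R}^{1\times F}))}$, the estimate becomes the linear integral inequality
\[
u(t)\le A+Bt+C\int_{0}^{t}u(s)\,ds.
\]

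Finally, I would apply Gr\"onwall's inequality (the $b=0$ case of Lemma~\ref{lemma: generalized Gronwall's inequality}, combined with absorbing the $Bt$ affine drift via the standard variation-of-constants trick $\tfrac{d}{dt}\bigl(e^{-Ct}\int_0^t u\bigr)$) to conclude that
\[
u(t)\le A\,e^{Ct}+\tfrac{B}{C}\bigl(e^{Ct}-1\bigr),\qquad t\in[0,T].
\]
Taking the supremum over $t\in[0,T]$ and substituting back the definitions of $A$, $B$, $C$ produces exactly the bound in \eqref{norm difference of XT and XnT} with $P=e^{T(FKh_T)^{L}}$ and $Q=(P-1)LK\|\tX\|_{C([0,T];L^{2}(I;\mathbb{R}^{1\times F}))}$ as in \eqref{constants P and Q}. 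The main (mild) obstacle is bookkeeping to ensure that Lemma~\ref{lemma: transferbility of Graphon NN}, which is stated under the $L^{\infty}$ assumption on $\tX$, can be applied with $\tX,\tX_n\in C([0,T];L^{\infty}(I;\mathbb{R}^{1\times F}))$ while the resulting estimate is taken in the weaker $L^{2}$ norm; this is resolved by Theorem~\ref{theorem: well-posedness} which guarantees the required $L^{\infty}$ regularity of both solutions so that Lemma~\ref{lemma: transferbility of Graphon NN} applies verbatim.
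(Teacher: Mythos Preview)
Your proof is correct and reaches the same constants $P$ and $Q$, but the route differs from the paper's. The paper works with the \emph{squared} norm via an energy estimate: it differentiates $\tfrac{1}{2}\tfrac{d}{dt}\|\Delta(\cdot,t)\|_{L^2}^2$, applies Cauchy--Schwarz in the inner product to extract $\|\Phi(\tW_n;\tX_n)-\Phi(\tW;\tX)\|_{L^2}\cdot\|\Delta\|_{L^2}$, invokes Lemma~\ref{lemma: transferbility of Graphon NN}, and obtains the nonlinear differential inequality $\delta'(t)\le a\,\delta(t)+b\sqrt{\delta(t)}$ for $\delta(t)=\|\Delta(\cdot,t)\|_{L^2}^2$. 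This is precisely why the paper needs the generalized Gr\"onwall inequality (Lemma~\ref{lemma: generalized Gronwall's inequality}) with its square-root term. You instead stay in integral form, take the $L^2$ norm directly, and use Minkowski's inequality for integrals to obtain a \emph{linear} integral inequality $u(t)\le A+Bt+C\int_0^t u(s)\,ds$, which is closed by the ordinary linear Gr\"onwall. Your approach is more elementary---it sidesteps both the energy differentiation and the square-root Gr\"onwall---while the paper's energy method is a classical PDE device; in this particular setting both give identical constants, so nothing is lost or gained quantitatively.
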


\begin{proof}
Denote $\Delta = \tX_n - \tX$. Taking the difference between \eqref{induced Graphon-NDE simplified notation} and \eqref{Graphon-NDE}, we have 
\begin{align*}
\begin{split}
    \frac{\partial}{\partial t}\Delta(u,t) &= \GraphonNN(\tX_n(u,t);\tW_n,\tH(t)) - \GraphonNN(\tX(u,t);\tW,\tH(t)),\\
    \Delta(u,0) &= \tZ_n(u) - \tZ(u).
\end{split}
\end{align*}
It follows that 
\begin{align*}
&\frac{1}{2} \frac{d}{dt} \|\Delta(\cdot,t)\|_{L^2(I;\mathbb{R}^{1\times F})}^2=\left\lvert  \int_{I}   \frac{\partial \Delta(u,t)}{\partial t}\parens{\Delta(u,t)}^{\top}  du \right\rvert\\ 
=\ &  \left| \int_I  \parens{\GraphonNN\parens{\tX_n(u,t);\tW_n,\tH(t)}-\GraphonNN\parens{\tX(u,t);\tW,\tH(t)}} \parens{\Delta(u,t)}^{\top} du \right| \\
\leq\ &   \norm{\GraphonNN(\tX_n(\cdot,t);\tW_n,\tH(t))-\GraphonNN(\tX(\cdot,t);\tW,\tH(t))}_{L^2(I;\mathbb{R}^{1\times F})}\|\Delta(\cdot,t)\|_{L^2(I;\mathbb{R}^{1\times F})}.
\end{align*} 
According to Lemma \ref{lemma: transferbility of Graphon NN}, we have 
\begin{align*}
&\norm{\GraphonNN(\tX_n(\cdot,t);\tW_n,\tH(t))-\GraphonNN(\tX(\cdot,t);\tW, \tH(t))}_{L^2(I;\mathbb{R}^{1\times F})}\\
&\leq \underbrace{\parens{FKh_T\revised{L_\sigma}}^{L}}_{\text{denoted by }a/2} \norm{\Delta(\cdot,t)}_{L^2(I;\mathbb{R}^{1\times F})} + \underbrace{LK\parens{FKh_T\revised{L_\sigma}}^L\norm{T_{\tW_n}-T_{\tW}}_{L^2(I)\to L^2(I)}\norm{\tX}_{C([0,T];L^2(I;\mathbb{R}^{1\times F}))}}_{\text{denoted by }b/2}.
\end{align*}
Let $\delta(t):=\norm{\Delta(\cdot,t)}_{L^2(I;\mathbb{R}^{1\times F})}^2$. Then the above estimates lead to 
\begin{equation*}
    \begin{split}
        \frac{d}{dt}\delta(t)&\leq a\delta(t) + b\sqrt{\delta(t)},\\
        \delta(0)&=\norm{\tZ_n-\tZ}_{L^2(I;\mathbb{R}^{1\times F})}^2.
    \end{split}
\end{equation*}
Let $s\in[0,T]$ be arbitrary but fixed. \revised{We integrate with respect to $t$ over $[0,s]$}, and get $$\delta(s)\leq \delta(0) + \int_0^{s} \left(a\delta(t) + b\sqrt{\delta(t)}\right)dt.$$ We then apply the generalized Gr\"onwall's inequality (Lemma \ref{lemma: generalized Gronwall's inequality}), and get
\begin{equation*}
    \delta(s)\leq \left(\sqrt{\delta(0)}\mathrm{exp}(as/2)+\frac{\mathrm{exp}(as/2)-1}{a}b\right)^2.
\end{equation*}
By noting $s\leq T$ and plugging definitions of $a$, $b$ and $\delta$ into the above inequality, we obtain 
\begin{align*}
\norm{\Delta(\cdot,s)}_{L^2(I;\mathbb{R}^{1\times F})}&\leq P\norm{\tZ_n-\tZ}_{L^2(I;\mathbb{R}^{1\times F})}+ Q\norm{T_{\tW_n}-T_{\tW}}_{L^2(I)\to L^2(I)},
\end{align*}
with $P$ and $Q$ defined in \eqref{constants P and Q}. Since $s$ is arbitrary in $[0,T]$, we take the supremum about $s$ over $[0,T]$ for the above inequality, and immediately get \eqref{norm difference of XT and XnT} by recalling the norm defined in $C([0,T];L^2(I;\mathbb{R}^{1\times F}))$. 
\end{proof}

\section{Proof of Theorem \ref{theorem: convergence of solutions}}\label{appendix: Proof of convergence of Graphon-NDEs}

\begin{proof}[Proof of Theorem \ref{theorem: convergence of solutions}]
By the assumption of $\{(\mathcal{G}_n,\mZ_{\mathcal{G}_n})\}_{n=1}^\infty$ converging to $(\tW, \tZ)$ in the sense of Definition \ref{def: graph node feature converges -- formal one}, there exists a sequence $\{\pi_n\}_{n=1}^\infty$ of permutations such that 
\begin{equation}\label{temp graphon and G converge}
    \lim_{n\to\infty}\|\tW_{\pi_n(\mathcal{G}_n)}-\tW\|_{\square}=0,\quad\lim_{n\to\infty}\|\tZ_{\pi_n(\mathcal{G}_n)}-\tZ\|_{L^2(I;\mathbb{R}^{1\times F})}=0.
\end{equation}
We denote $\tW_n:=\tW_{\pi_n(\mathcal{G}_n)}$ and $\tZ_n:=\tZ_{\pi_n(\mathcal{G}_n)}$.
It is known (Lemma E.6. in \citet{janson2010graphons}) that $\lim_{n\to\infty}\|\tW_n-\tW\|_\square=0$ if and only if $\lim_{n\to\infty}\|T_{\tW_n}-T_{\tW}\|_{L^2(I)\to L^2(I)}=0$. Therefore, \eqref{temp graphon and G converge} implies 
\begin{equation}\label{integral opeator and Gn converge}
    \lim_{n\to\infty}\|T_{\tW_n}-T_{\tW}\|_{L^2(I)\to L^2(I)}=0,\quad \lim_{n\to\infty}\|\tZ_n-\tZ\|_{L^2(I;\mathbb{R}^{1\times F})}=0.
\end{equation}
Then the desired result immediately follows from Theorem \ref{theorem: X-Xn leq G-Gn + TW-TWn}. 
\end{proof}

\section{Proof of Theorems \ref{theorem: rate of Lipschitz} and \ref{theorem: rate of simple graph}}\label{Appendix: proof of convergence rates}

\begin{proof}[Proof of Theorem \ref{theorem: rate of Lipschitz}]
Recall that $u_i:=(i-1)/n$, $I_i:=[u_i,u_{i+1})$, for each $i\in[n]$. According to \revised{definition \eqref{induced graphon Wn} of $\tW_n$} with \eqref{coeff W for weighted graphs}, we have 
$$\|\tW-\tW_n\|_{L^2({I^2})}^2=\sum_{i,j\in[n]}\int_{I_i\times I_j}\left|\tW(u,v)-\tW(u_i,u_j)\right|^2dudv.$$ According to {\color{black}\hyperref[AS2]{AS2}}, we obtain that 
\begin{equation}\label{upper bound of norm of W-Wn first estimate}
    \|\tW-\tW_n\|_{L^2({I^2})}^2\leq A_1^2\sum_{i,j\in[n]}\int_{I_i\times I_j}\left(|u-u_i|+|v-u_j|\right)^{2\revised{\alpha_1}}dudv.
\end{equation}
For each $i,j\in[n]$, direct computation gives $$\int_{I_i\times I_j}\left(|u-u_i|+|v-u_j|\right)^{2\revised{\alpha_1}}dudv=\frac{2^{2\revised{\alpha_1}+2}-2}{(2\revised{\alpha_1}+1)(2\revised{\alpha_1}+2)}\frac{1}{n^{2\revised{\alpha_1}+2}},$$ which combining with \eqref{upper bound of norm of W-Wn first estimate} gives 
\begin{equation}\label{final norm of W - Wn}
\|\tW-\tW_n\|_{L^2({I^2})}^2 \leq A_1^2\frac{2^{2\revised{\alpha_1}+2}-2}{(2\revised{\alpha_1}+1)(2\revised{\alpha_1}+2)}\frac{1}{n^{2\revised{\alpha_1}}}.
\end{equation}
Denote $\tZ=[Z_f:f\in[F]]$ and $\tZ_n=[(Z_n)_f:f\in[F]]$. According to definition $\tZ_n$ of \eqref{induced graph features Zn} with \eqref{coeff G for weighted graphs}, we have
\begin{equation}\label{L2 norm square of g - gn}
    \|\tZ-\tZ_n\|_{L^2(I;\mathbb{R}^{1\times F})}^2=\sum_{f\in[F]}\|Z_f-(Z_n)_f\|_{L^2(I)}^2=\sum_{f\in[F]}\sum_{j\in[n]}\int_{I_j} |Z_f(u)-Z_f(u_j)|^2du.
\end{equation}
It follows from {\color{black}\hyperref[AS3]{AS3}} that for each $f\in[F]$ and $j\in[n]$, 
$$\int_{I_j} |Z_f(u)-Z_f(u_j)|^2du\leq A_2^2\int_{I_j} (u-u_j)^{2\revised{\alpha_2}}du=\frac{A_2^2}{\revised{2\alpha_2+1}}\frac{1}{n^{\revised{2\alpha_2+1}}}.$$
Therefore, from \eqref{L2 norm square of g - gn}, we get 
\begin{equation}\label{final norm of g - gn}
    \|\tZ-\tZ_n\|_{L^2(I;\mathbb{R}^{1\times F})}^2\leq \frac{A_2^2F}{\revised{2\alpha_2+1}}\frac{1}{n^{\revised{2\alpha_2}}}.
\end{equation}
Recall we have established in Theorem \ref{theorem: X-Xn leq G-Gn + TW-TWn} that 
$$
\|\tX_n-\tX\|_{C([0,T];L^2(I;\mathbb{R}^{1\times F}))} \leq P\|\tZ_n-\tZ\|_{L^2(I;\mathbb{R}^{1\times F})}+Q\|T_{\tW_n}-T_{\tW}\|_{L^2(I)\to L^2(I)},
$$
which combining with estimates \eqref{final norm of W - Wn} and \eqref{final norm of g - gn} and the fact of 
$$\norm{T_{\tW_n}-T_\tW}_{L^2(I)\to L^2(I)}\leq \norm{\tW_n-\tW}_{L^2(I^2)},$$ 
further implies 
$$
\|\tX_n-\tX\|_{C([0,T];L^2(I;\mathbb{R}^{1\times F}))} \leq PA_2\sqrt{\frac{F}{\revised{2\alpha_2+1}}}\frac{1}{n^{\revised{\alpha_2}}}+Q  A_1\sqrt{\frac{2^{2\revised{\alpha_1}+2}-2}{(2\revised{\alpha_1}+1)(2\revised{\alpha_1}+2)}}\frac{1}{n^{\revised{\alpha_1}}}\leq \frac{C}{n^\revised{\min\{\alpha_1,\alpha_2\}}},
$$
where $C$ is defined by 
\begin{align}\label{constant C in first convergence rate}
C:=\exp\parens{T\parens{FKh_T\revised{L_\sigma}}^L}\parens{A_2\sqrt{\frac{F}{\revised{2\alpha_2+1}}}+LK\norm{\tX}_{C([0,T];L^2(I;\mathbb{R}^{1\times F}))} A_1\sqrt{\frac{2^{2\revised{\alpha_1}+2}-2}{(2\revised{\alpha_1}+1)(2\revised{\alpha_1}+2)}}}.
\end{align}
This completes the proof of \eqref{eq: rate of weighted graph}. The estimate \eqref{weighted graph: size transferability bounds} can be immediately obtained from \eqref{eq: rate of weighted graph} and the triangle inequality. 
\end{proof}

\begin{lemma}\label{lemma: best constant approximation}
    Suppose that $\Omega\subset\mathbb{R}^d$ and $f\in L^2(\Omega)$. Let $|\Omega|$ be the volume of $\Omega$. Then the constant $h:=\frac{1}{|\Omega|}\int_\Omega f(u)du$, is the best constant approximation of $f$, i.e., $\inf\{\|f-c\|_{L^2(\Omega)}:c\in\mathbb{R}\}=\left\|f-h\right\|_{L^2(\Omega)}$. 
\end{lemma}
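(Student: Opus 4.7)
The plan is to reduce the statement to a one-variable calculus problem (equivalently, an orthogonal projection in $L^2(\Omega)$). Since $f \in L^2(\Omega)$ and the constant function $1$ belongs to $L^2(\Omega)$ (because $|\Omega|<\infty$ is implicit in the definition of $h$), the product $f \cdot 1$ is integrable by Cauchy--Schwarz, so $\int_\Omega f(u)\,du$ is finite and $h$ is well-defined.

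First I would define the scalar function
\[
g(c) := \|f - c\|_{L^2(\Omega)}^2 = \int_\Omega \bigl(f(u) - c\bigr)^2 du.
\]
Expanding the square and using linearity of integration gives
\[
g(c) = \int_\Omega f(u)^2\,du \;-\; 2c \int_\Omega f(u)\,du \;+\; c^2\,|\Omega|,
\]
which is a quadratic in $c$ with positive leading coefficient $|\Omega|>0$. Differentiating yields $g'(c) = -2\int_\Omega f(u)\,du + 2c\,|\Omega|$, and setting $g'(c)=0$ produces the unique critical point $c = \tfrac{1}{|\Omega|}\int_\Omega f(u)\,du = h$. Since $g''(c) = 2|\Omega| > 0$, this critical point is the strict global minimizer, so $\inf_{c\in\mathbb{R}} \|f-c\|_{L^2(\Omega)} = \|f-h\|_{L^2(\Omega)}$, as claimed.

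Equivalently (and more conceptually), the constants form a closed one-dimensional subspace $\mathcal{C}\subset L^2(\Omega)$, and the orthogonal projection theorem characterizes the minimizer $c^\ast$ by the condition $\langle f - c^\ast,\,1\rangle_{L^2(\Omega)} = 0$, i.e., $\int_\Omega (f-c^\ast)\,du = 0$, which again identifies $c^\ast = h$. There is no real obstacle here; the only point requiring any care is the finiteness of the integrals, which is handled by the $L^2$ assumption and Cauchy--Schwarz.
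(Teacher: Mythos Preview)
Your proof is correct and entirely standard. The paper states this lemma without proof, treating it as a well-known fact, so there is nothing to compare against; your argument via the quadratic expansion (or equivalently via orthogonal projection onto the one-dimensional subspace of constants) is exactly what one would supply if asked to fill in the omitted details.
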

\revised{
\begin{proof}
    We define a function $J(c):=\norm{f-c}_{L^2(\Omega)}^2$, $c\in\mathbb{R}$. Note that $J$ is a convex quadratic function, and its minimum is attained at $c=h$. 
\end{proof}
}
\begin{proof}[Proof of Theorem \ref{theorem: rate of simple graph}]

    We begin with estimating $\|\tW-\tW_n\|_{L^2({I^2})}$ \revised{using the box-counting dimension argument from Theorem~4.1 of \citet{medvedev2014nonlinear}}. Recall that $\mathcal{N}_\delta(\partial \tW^+)$ denotes the number of $\delta$-mesh cubes that intersect the boundary $\partial \tW^+$. We set $\delta=1/n$. Recall that $\tW_n$ is defined by \eqref{induced graphon Wn} with the adjacency matrix generated by \eqref{coeff W for simple graphs}. \revised{Observe that if a mesh cube is fully contained within the support $\tW^+$ (where $\tW(u,v)=1$) or fully outside it (where $\tW(u,v)=0$), then $\tW_n$ and $\tW$ are identical on that cube. Therefore, the function values of $\tW_n$ and $\tW$ can only differ over cubes that intersect $\partial \tW^+$. The number of such cubes is $\mathcal{N}_{1/n}(\partial \tW^+)$, and the area of each cube is $1/n^2$. This implies}
\begin{equation}\label{esitimate W-Wn third way}
        \|\tW-\tW_n\|_{L^2({I^2})}^2=\int_I |\tW(u,v)-\tW_n(u,v)|^2dudv\leq \mathcal{N}_{1/n}(\partial \tW^+)\frac{1}{n^2}. 
    \end{equation}
    According to definition \eqref{definition of dim box counting} of upper box-counting dimension, for any $\epsilon\in(0,2-b)$, there exists $N_{\epsilon,\tW}\in\mathbb{N}$ such that when $n>N_{\epsilon,\tW}$, $\frac{\log \mathcal{N}_{1/n}(\partial \tW^+)}{-\log(1/n)}<b+\epsilon$. Therefore, $\mathcal{N}_{1/n}(\partial \tW^+)\leq n^{b+\epsilon}$, which combining with \eqref{esitimate W-Wn third way} yields    
    \begin{equation}\label{esitimate of W-Wn third way final}
        \|\tW-\tW_n\|_{L^2({I^2})}\leq n^{-(1-\frac{b+\epsilon}{2})}.
    \end{equation}
    We next estimate $\|\tZ-\tZ_n\|_{L^2(I;\mathbb{R}^{1\times F})}$. Recall that $\tZ_n$ is the induced graphon feature function associated with the graph feature matrix generated in the way of \eqref{coeff G for simple graphs}. Let $\tZ_n'$ be the induced graphon feature function associated with the graph feature matrix generated in the way of \eqref{coeff G for weighted graphs}. It has been shown in the proof of Theorem \ref{theorem: rate of Lipschitz} that, with assumption {\color{black}\hyperref[AS3]{AS3}}, $
    \|\tZ-\tZ_n'\|_{L^2(I;\mathbb{R}^{1\times F})}\leq A_2\sqrt{\frac{F}{\revised{2\alpha_2+1}}}\frac{1}{n^{\revised{\alpha_2}}}$. According to Lemma \ref{lemma: best constant approximation}, we know that $
    \|\tZ-\tZ_n\|_{L^2(I;\mathbb{R}^{1\times F})}\leq \|\tZ-\tZ_n'\|_{L^2(I;\mathbb{R}^{1\times F})}$. Therefore,
    \begin{equation}\label{final norm of g - gn in the proof of simple graph finally}
    \|\tZ-\tZ_n\|_{L^2(I;\mathbb{R}^{1\times F})}\leq A_2\sqrt{\frac{F}{\revised{2\alpha_2+1}}}\frac{1}{n^{\revised{\alpha_2}}}. 
    \end{equation}
    With a similar argument in the proof of Theorem \ref{theorem: rate of Lipschitz}, by Theorem \ref{theorem: X-Xn leq G-Gn + TW-TWn} and estimates \eqref{esitimate of W-Wn third way final} and \eqref{final norm of g - gn in the proof of simple graph finally}, we have 
    $$
    \|\tX_n-\tX\|_{C([0,T];L^2(I;\mathbb{R}^{1\times F}))} \leq PA_2\sqrt{\frac{F}{\revised{2\alpha_2+1}}}\frac{1}{n^{\revised{\alpha_2}}}+Q  n^{-(1-\frac{b+\epsilon}{2})}\leq \frac{\widetilde{C}}{n^{\revised{\min\{1-\frac{b+\epsilon}{2},\alpha_2\}}}},
    $$
    where $\widetilde{C}$ is defined by 
\begin{equation}\label{tilde C in rate of simple graphs}
    \widetilde C:=\exp\parens{T\parens{FKh_T\revised{L_\sigma}}^L}\parens{A_2\sqrt{\frac{F}{\revised{2\alpha_2+1}}}+LK\norm{\tX}_{C([0,T];L^2(I;\mathbb{R}^{1\times F}))}}. 
\end{equation}
This proves \eqref{eq: rate of unweighted graph}. The estimate \eqref{unweighted graph: size transferability bounds} can be obtained from \eqref{eq: rate of unweighted graph} and the triangle inequality.
\end{proof}

\section{Details of Numerical Experiments}\label{appendix: numerical experiments}

\subsection{Graphon Convergence Rates}\label{appendix: additionalgraphons}
\paragraph{Graphons} We include three graphon experiments to \revised{complement our study in Section} \ref{sec:graphon_conv_rate_experiment} and further verify our main results. We utilize one additional weighted graphon, an extremely oscillatory Lipschitz graphon defined by:
\begin{equation}\label{def: irreg graphon}
\tW(u,v) = \frac 1 2 \left( 1 + \sin(20\pi x) \sin(20 \pi y) \right).
\end{equation}
We also experiment with two additional \(\{0,1\}\)-valued graphons. We create a checkerboard graphon with box-counting dimension $1$ but with extremely similar structure to the oscillatory Lipshitz graphon, and a Sierpiński carpet fractal \citep{sierpinski_carpet_citation} with box-counting dimension about $1.89$. We illustrate these graphons in Figure \ref{fig:three_graphon_extra}.

\begin{figure}[H]
    \centering
    \includegraphics[width=\textwidth]{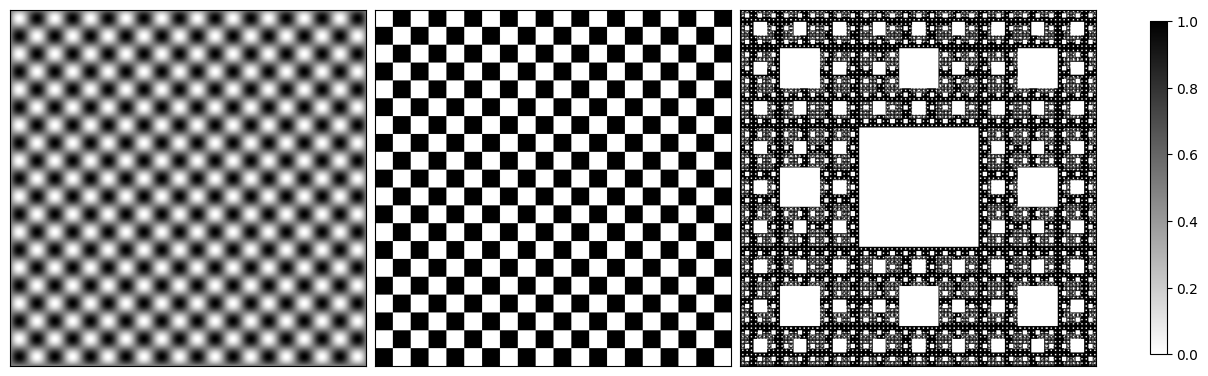}
    \caption{Oscillatory Lipschitz (left), Checkerboard (center), and Sierpinski (right) graphons.}
    \label{fig:three_graphon_extra}
\end{figure}

\paragraph{Additional Experiment Details}  

For each graphon considered, we conduct 100 trials with independent random initializations, including both random weight initialization of the GNDE model and random input features. \revised{Specifically, we sample $\{a_k\}$ and $\{b_k\}$ i.i.d.\ from the uniform distribution on $[-1,1]$ and generate initial conditions using random Fourier polynomials of degree $D = 10$, defined by $\tZ(u) := \sum_{k=1}^{D} a_k \cos(2 \pi k u) + b_k \sin(2 \pi k u)$.} \newrevised{We compute the slope of the log-log line of best fit to approximate rate of convergence for each individual configuration and} report the mean and standard deviation in Figures~\ref{fig:error_bars_graphon_main} and \ref{fig:error_bars_graphon_additional}. For the Hölder-$\tfrac{1}{2}$ graphon case, we take the initial feature function as $\tZ(u)=\sum_{k=1}^{D}a_k\cos(2\pi b_k u)$ with $a_k=a^k$ and $b_k=b^k$ with $a=1/\sqrt{b}$ and $a$ sampled i.i.d.\ from $[3,10]$. \revised{This construction approximates an initial feature function that is Hölder-$\tfrac{1}{2}$. While the finite truncation is smooth, for sufficiently large $D$ it exhibits the characteristic roughness of Hölder-$\tfrac{1}{2}$ functions with fast-growing derivatives.} \newrevised{This altered feature function allows empirical verification of Theorem \ref{theorem: rate of Lipschitz} when the Hölder constants of both the graphon and the feature function are no longer $1$.}

All experiments were carried out locally on 4 Nvidia A4000 GPUs. As there is no training step, experiment runtimes are fast.

\begin{figure}[htbp]
    \centering
    \includegraphics[width=0.5\textwidth]{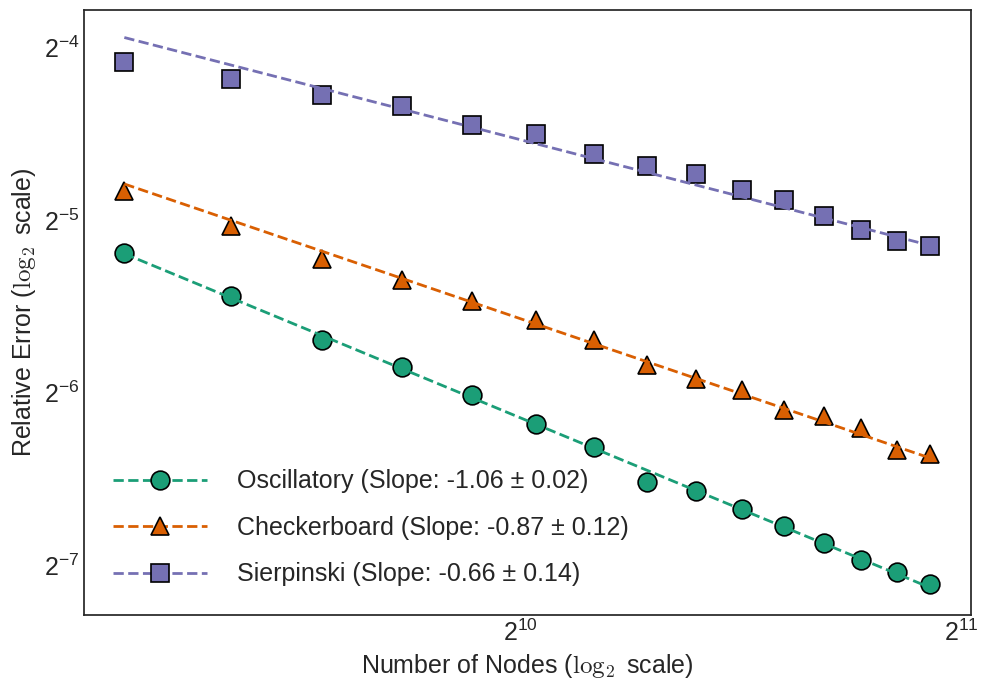}
    \caption{\textbf{Convergence rates of GNDE solutions.} Relative errors between GNDE and Graphon-NDE solutions on graphs sampled from the three additional graphons: (1) \emph{Oscillatory Lipschitz graphon}, matching the expected \(\mathcal{O}(1/n)\) rate, (2) \emph{checkerboard graphon} (box counting dimension 1) with slower observed rate despite similarity to the Oscillatory Lipschitz graphon and (3) \emph{Sierpiński carpet graphon} (fractal boundary with box counting dimension 1.89). The checkerboard graphon yields faster convergence than the Sierpiński carpet graphon, again consistent with the trend indicated in Theorem~\ref{theorem: rate of simple graph}.}
    \label{fig:multi_graph_experiment_extra}
\end{figure}

\begin{figure}[H]
    \centering
    \includegraphics[width=\textwidth]{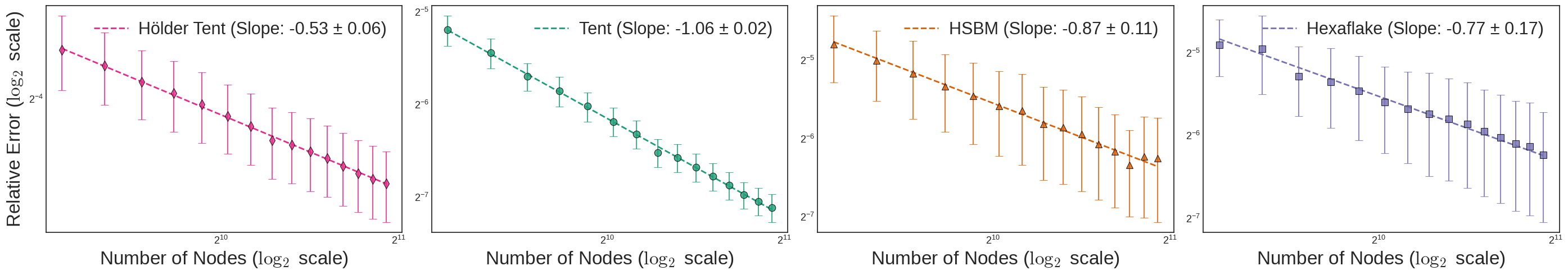}
    \caption{Hölder Tent (left), Tent (center-left), HSBM (center-right), and Hexaflake (right) graphon convergence with error bars displayed.}
    \label{fig:error_bars_graphon_main}
\end{figure}

\begin{figure}[H]
    \centering
    \includegraphics[width=\textwidth]{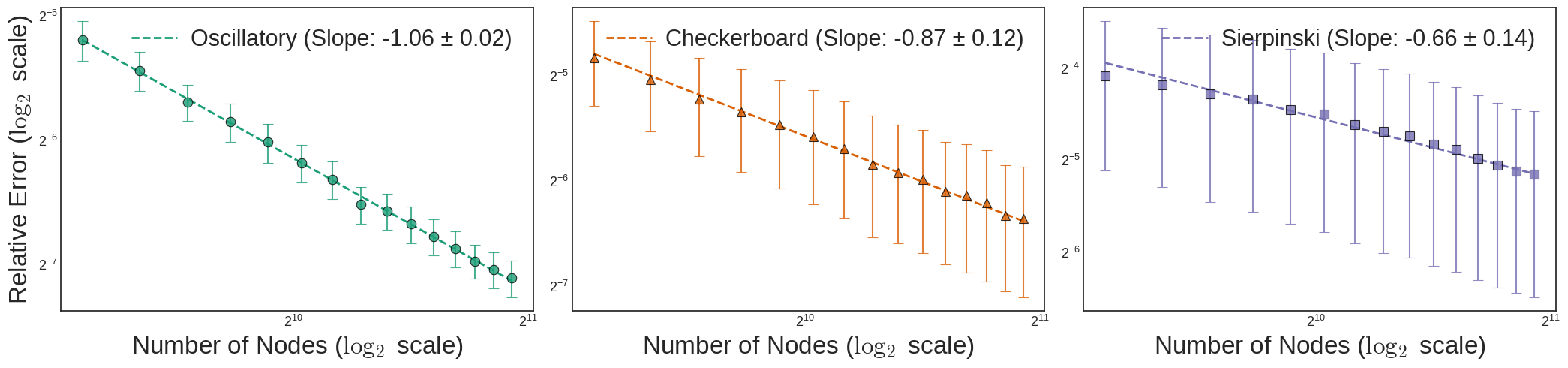}
    \caption{Oscillatory Lipschitz (left), Checkerboard (center), and Sierpiński (right) graphon convergence with error bars displayed.}
    \label{fig:error_bars_graphon_additional}
\end{figure}

\paragraph{Analysis}  

Our Lipschitz graphon rate continues to consistently match \(\mathcal{O}(1/n)\) regardless of the relative complexity of the graphon function used. For our $\{0,1\}$-valued graphons, we see the \revised{checkerboard graphon, even though structurally similar to the Lipschitz graphon,} converges slower with rate $\mathcal{O}(1/n^{0.87})$, empirically verifying the meaningful divergence between the two cases. There is relatively low variance from the mean for each of the Lipschitz graphons, but relatively high variance for each of the $\{0,1\}$-valued graphons, mainly due to the hard problem of sampling \newrevised{from binary graphons whose support has a fractal boundary} resulting in several outliers over the $100$ trials. \revised{The observed high variance underscores the importance of our theoretical analysis, as theoretical worst-case guarantees are particularly useful in numerically unstable regimes.}

\section{Real Data Node Classification: Additional Details}\label{appendix: real data details}

\paragraph{Dataset Statistics} We experiment with a variety of the most popular graph node classification datasets, including homophilic and heterophilic datasets of various sizes. We adapt the literature standard split configurations for each dataset: for the heterophilic datasets, we adopt the 60/20/20 random splits of \citet{pei2020geomgcn}; for the citation networks, we use the Planetoid splits of \citet{yang2016revisiting}; and for the ogbn-arxiv dataset, we use the fixed splits of \citet{hu2021opengraphbenchmarkdatasets}. The comprehensive dataset statistics and split configurations are in Tables \ref{table:dataset_structure} and \ref{table:dataset_splits}.

\begin{table}[H]
\centering
\begin{tabular}{ |c|c|c|c|c|c| }
 \hline
 \textbf{Dataset Name} & \textbf{Nodes} & \textbf{Edges} & \textbf{Features} & \textbf{Classes} & \textbf{Homophily}\\
 \hline
Actor & 7,600 & 30,019 & 932 & 5 & 0.2188 \\
Chameleon & 2,277 & 36,101 & 2,325 & 5 & 0.2350 \\
Cornell & 183 & 298 & 1,703 & 5 & 0.1309\\
 Citeseer & 3,327 & 9,228 & 3,703 & 6 & 0.7391\\
 Cora & 2,708 & 10,556 & 1,433 & 7 & 0.8100 \\
 Pubmed & 19,717 & 88,651 & 500 & 3 & 0.8024\\
ogbn-arxiv & 169,343 & 2,332,486 & 128 & 40 & 0.6551 \\
Squirrel & 5,201 & 217,073 & 2,089 & 5 & 0.2239 \\
Texas & 183 & 325 & 1,703 & 5 & 0.1077\\
Wisconsin & 251 & 515 & 1,703 & 5 & 0.1961\\
 \hline
\end{tabular}
\vspace{6pt}
\caption{Dataset statistics.}
\label{table:dataset_structure}
\end{table}

\begin{table}[htbp]
\centering
\begin{tabular}{ |c|c|c|c| }
 \hline
 \textbf{Dataset Name} & \textbf{Training} & \textbf{Validation} & \textbf{Testing}\\
 \hline
Actor & $60\%$ & $20\%$ & $20\%$ \\
Chameleon & $60\%$ & $20\%$ & $20\%$ \\
Cornell & $60\%$ & $20\%$ & $20\%$ \\
 Citeseer & 120 & 500 & 1000 \\
 Cora & 140 & 500 & 1000 \\
 Pubmed & 60 & 500 & 1000 \\
ogbn-arxiv & 90,941 & 29,799 & 48,603 \\
Squirrel & $60\%$ & $20\%$ & $20\%$ \\
Texas & $60\%$ & $20\%$ & $20\%$ \\
Wisconsin & $60\%$ & $20\%$ & $20\%$ \\
 \hline
\end{tabular}
\vspace{6pt}
\caption{Dataset split configurations.}
\label{table:dataset_splits}
\end{table}

\paragraph{Hyperparameter Selection and Model Architecture}\label{sec:model_architecture}

Model training hyperparameters were selected through a grid search, optimizing performance on the full Cora dataset as the evaluation metric. The same hyperparameters were consistently applied across all datasets and subgraph sizes to ensure fairness and comparability. We employed the Adam optimizer with default hyperparameters $\beta_1 = 0.9$ and $\beta_2 = 0.999$. To enhance regularization and convergence, we incorporated a dropout ratio \citep{srivastava2014dropout} and a weight decay parameter \citep{krogh1991simple}, as detailed in Table \ref{table:hyperparameters}. We utilized the classical fourth-order Runge-Kutta solver \citep{runge1895numerische, butcher2008numerical} for all GNDE evaluations, as it provides a favorable balance between computational efficiency and accuracy.

In all cases, training was performed over $3000$ epochs, with light early stopping criteria in place to mitigate overfitting; training was terminated early in the rare cases where model validation loss did not drop for 500 consecutive epochs. Early stopping triggered most frequently for smaller training subgraphs, where the validation loss tends to plateau before the model has fully converged; at the 10\% subgraph level roughly half of runs terminated early, falling to under 5\% by the 30\% subgraph and under 1\% beyond that. Runs that triggered early stopping produced test accuracies a few percentage points below those that completed full training, consistent with the model not having fully converged on very small subgraphs. Because early stopping incidence is negligible for subgraphs at or above 30\% of the full graph, it does not materially affect the reported results. After training, the model was transferred to the full graph and evaluated. Twenty random sequences of subgraphs were constructed for each dataset. On every such sequence, twenty randomly initialized models were trained on each subgraph. Results reported are the mean and standard deviation over all trials and weight initializations. All experiments were performed locally on 4 Nvidia A4000 GPUs.

\begin{table}[ht]
\centering
\begin{tabular}{ |c|c|c| }
 \hline
 \textbf{Hyperparameter Name} & \textbf{Grid Search Choices} & \textbf{Final Choice} \\
 \hline
 Learning Rate & $\{10^{-2}, 10^{-3}, 10^{-4}, 10^{-5}\}$ & $10^{-3}$ \\
 Weight Decay & $\{5 \cdot 10^{-4}, 10^{-4}, 5 \cdot 10^{-5}\}$ & $5 \cdot 10^{-4}$ \\
 GNN Head Dropout & $\{0.2, 0.4, 0.6\}$ & 0.4 \\
 GNDE Hidden Features & $\{16, 32, 64\}$ & 64 \\
 \hline
\end{tabular}
\vspace{6pt}
\caption{Hyperparameters used for model training, including grid search choices and final values.}
\label{table:hyperparameters}
\end{table}

\begin{table}[ht]
\centering
\begin{tabular}{ |c|c|c|c|c|c| }
 \hline
 \textbf{Model Part} & \textbf{(Input, Hidden, Output) Features} & \textbf{$L$} & \textbf{$K$} & \textbf{Activation} & \textbf{Dropout} \\
 \hline
 GNN Head & (Varied, $64$, $64$) & $1$ & $2$ & ReLU & $0.4$ \\
 GNDE & (64, 64, 64) & $2$ & $2$ & ReLU & $0.9$ \\
 GNN Tail & ($64, 64$, Varied) & $1$ & $1$ & None & $0$ \\
 \hline
\end{tabular}
\vspace{6pt}
\caption{Architecture details for the GNN Head, GNDE, and GNN Tail components of the model.}
\label{table:architecture}
\end{table}

\end{document}